\newif\ifbulletpoint
\newif\ifappendix
\newif\ifcolor
\begin{document}

%

%

\twocolumn[

\aistatstitle{Self-Concordant Analysis of Generalized Linear Bandits with
Forgetting}

\aistatsauthor{Yoan  Russac*  \And Louis Faury* \And  Olivier Cappé \And Aurélien Garivier }
\aistatsaddress{ DI ENS, CNRS, Inria,  \\ ENS, Université PSL \And  Criteo AI Lab,  \\ LTCI TélécomParis \And DI ENS, CNRS, Inria,  \\ ENS, Université PSL \And  UMPA, CNRS, \\ Inria, ENS Lyon} ]

\begin{abstract}
Contextual sequential decision problems with categorical or numerical
observations are ubiquitous and Generalized Linear Bandits
(GLB) offer a solid theoretical framework to address them.
In contrast to the case of linear bandits, existing algorithms for GLB
have two drawbacks undermining their applicability.  First, they rely
on excessively pessimistic concentration bounds due to the non-linear
nature of the model. Second, they require either non-convex projection
steps or burn-in phases to enforce boundedness of the estimators.
Both of these issues are worsened when considering non-stationary
models, in which the GLB parameter may vary with time.
In this work, we focus on self-concordant GLB (which include logistic
and Poisson regression) with forgetting achieved either by the use of
a sliding window or exponential weights. We propose a novel
confidence-based algorithm for the maximum-likehood estimator with
forgetting and analyze its perfomance in abruptly changing
environments.
These results as well as the accompanying numerical simulations
highlight the potential of the proposed approach to address
non-stationarity in GLB.
\end{abstract}

\section{INTRODUCTION}

\ifbulletpoint
\begin{itemize}
	{\color{red}\item GLM bandit: it is cool (say why) but it has some drawback}
	\end{itemize}
\else
\fi

In recent years, linear bandits \citep{abbasi2011improved,
  chu2011contextual, dani2008stochastic, rusmevichientong2010linearly}
have become the go-to paradigm to balance exploration and exploitation
in contextual sequential decision making problems. Linear bandits have typically
found applications for content-based recommendations
\citep{li2010contextual, valko2014spectral}, real-time bidding
\citep{flajolet2017real} and even mobile-health interventions
\citep{tewari2017ads}. Concurrently, Generalized linear bandits (GLB)
 have been introduced as a
generalization of linear bandits, able to describe broader reward models of
considerable practical relevance, in particular binary or categorical
rewards \citep{filippi2010parametric, li2017provably}. GLB are for instance a natural option in online advertising
applications where the rewards take the form of clicks
\citep{ChapelleLi11}.
In this work, we focus on deterministic algorithms and refer to
\citep{ChapelleLi11,kveton2020randomized} for randomized algorithms
applicable to GLB. Compared to the linear bandits case, there are two distinctive
drawbacks of GLB algorithms. The first is \textbf{(1)} the presence of
a problem-dependent constant, imposed by the non-linear nature of the
model, that is possibly \emph{prohibitively large} and has a negative impact
both on the design of algorithms and on their analysis. The second is
\textbf{(2)} the need to modify the Maximum Likelihood Estimator (MLE) to
ensure that it has a bounded norm. Usually this is achieved by resorting
to an additional \emph{non-convex} projection program applied to the MLE
\citep{filippi2010parametric}. These
distinctions correspond to a fundamental difference between the
models, and explain why methods developed for linear bandits
may fail in the case of GLB.
 
 \ifbulletpoint
 {\color{red}
 \begin{itemize}
 \item how it was solve in the stationary case
\begin{itemize}
	\item first drawback: tackled by \cite{faury2020improved}
	\end{itemize}
\end{itemize}}
\else
\fi

The first drawback \textbf{(1)} was recently addressed by
\cite{faury2020improved}, in the specific case of logistic
bandits. They showed that in this particular setting, the regret
bounds of carefully designed algorithms could be significantly
improved only at the cost of minor algorithmic modifications. Their
analysis tightens the gap with the linear case, and takes a significant step
towards the development of efficient GLB algorithms.

\ifbulletpoint
\begin{itemize}
	\item[] {\color{white}foo}
	\begin{itemize}
	{\color{red}\item second drawback: solution by \cite{li2017provably}, however not satisfactory and needs sto be generalized}
	\end{itemize}
\end{itemize}
\else
\fi

The second drawback \textbf{(2)} has seen little treatment in the
literature, except for the work of \citet{li2017provably} who proved
that the projection step of \citet{filippi2010parametric} could be
avoided by resorting to random initialization phases.
However, a careful examination of the required conditions shows that these
initialization phases can be prohibitively long to be deployed in scenarios
of practical interest.

  \ifbulletpoint
\begin{itemize}
	{\color{red}\item missing piece: non-stationarity}
	\end{itemize}
\else
\fi

The aforementioned improvements to the original GLB algorithm of
\citet{filippi2010parametric} were developed under a stationarity
assumption. However, non-stationary environments are ubiquitous
in real-world applications of contextual bandits. In the linear bandits
literature, this has motivated the development of adequate algorithms,
able to handle changes in the structure of the reward signal
\citep{cheung2019learning, russac2019weighted, zhao2020simple}.
\citet{russac2020algorithms} generalized such approaches to GLB,
but without addressing neither \textbf{(1)} nor
\textbf{(2)}. As a result, the practical relevance of their approach
remains questionable and the development of \emph{efficient} and
\emph{non-stationary} GLB algorithms stands incomplete.

  \ifbulletpoint
\begin{itemize}
	{\color{red}\item contributions: solve the two drawbacks under non-stationarity}
	\end{itemize}
\else
\fi

This paper aims at closing this gap. We study a broad family of GLB,
known as \emph{self-concordant} (which includes for instance the
logistic and Poisson bandits), in environments where the 
parameter is allowed to switch arbitrarily over time. Under this
setting, we answer \textbf{(1)} by providing a non-trivial extension of
the concentration results from \citet{faury2020improved}. We also leverage the self-concordance property to \emph{remove} the projection
step, henceforth overcoming \textbf{(2)}. This is made possible by an
improved characterization of the, possibly weighted,
 MLE in (self-concordant)
generalized linear models. Combined together, these two contributions lead to the
design of \emph{efficient} GLB algorithms, with improved regret bounds
and which do not require to solve hard (i.e. non-convex) optimization
programs. In doing so, we also answer the long-standing issue of
providing proper confidence regions centered around the pristine MLE in GLB.


\section{BACKGROUND}
\label{section:1}
\ifbulletpoint
{\color{red} $\hat\theta_t, g_t, \mathbf{V_t}, \mathbf{H_t}, \cm, \km $ +
 assumption: reward max, bounded decision set}
\else
\fi
\subsection{Setting and Assumptions}

At each time step, the environment provides a time-dependent action set
$\mathcal{A}_t$ and the agent plays a $d$-dimensional action
$a_t \in \mathcal{A}_t$. We will assume that the reward's distribution
belongs to a \textit{canonical exponential family} with respect to a
reference measure $\nu$, such that
$d \mathbb{P}_\theta(r |a) = \exp( r a^\top \theta - b(a^\top \theta)
+ c(r)) d\nu(r)$.  Here, the function $c(\cdot)$ is real-valued and
$b(\cdot)$ is assumed to be twice continuously differentiable. Thanks
to the properties of exponential families, $b$ is convex and can be
related to the function $\mu = \dot{b}$, itself referred to as
the \emph{inverse link} or \emph{mean} function. A key feature of this description is that
given a ground-truth parameter $\theta^\star$, selecting an action
$a_t$ at time $t$ yields a reward $r_{t+1}$ conditionally independent
on the past and such that
$
\mathbb{E}[r_{t+1} |a_t] = \mu(a_t^\top \theta^\star)
$.

The non-stationary nature of the considered environments is
characterized as follows: the bandit parameter $\theta^\star$ is
allowed to change in an arbitrary fashion up to $\Gamma_T$ times
within the horizon $T$.  In the following, $\theta^\star$ will
be indexed by $t$ to clearly exhibit its dependency w.r.t round $t$,
and the reward signal will follow
$$
\mathbb{E}[r_{t+1} |a_t] = \mu(a_t^\top \theta_t^\star) \;.
$$
The focus of this paper is the \textit{dynamic regret} defined as
$$
R_T = \sum_{t=1}^T \max_{a \in \mathcal{A}_t } \mu\big(a^\top \theta^\star_t\big)
- \mu\big(a_t^\top \theta^\star_t\big)\;.
$$
Note that in this setting, there is no fixed best arm, 
both due to the non-stationarity of the environment and to the fact that the action set $\mathcal{A}_t$ may vary with time. 
We will work under the following assumptions.

\begin{ass}[Bounded actions and bandit parameters]
\label{ass:bounded_actions}
  $$\forall t \geq 1, \lVert \theta^\star_t \rVert_2 \leq S  \quad \text{and}
  \quad \forall a \in \mathcal{A}_t,  \lVert a \rVert_2 \leq 1\;.$$
\end{ass}
We define the admissible parameter space $\Theta = \big\{ \theta \in \mathbb{R}^d,
  \lVert \theta \rVert_2 \leq S\big\}$.
\begin{ass}[Bounded rewards]
  \label{ass:bounded_rewards}
  $$\exists m\in\mathbb{R}^+  \text{such that}  \;  \forall t \geq 1,  0 \leq r_t 
  \leq \rM \;.$$
\end{ass}
\begin{ass}
 \label{ass:link_fun}
 The mean function $\mu: \mathbb{R} \mapsto \mathbb{R}$ is continuously
 differentiable, Lipschitz with constant $k_\mu$ and such that
 $$
 c_\mu = \inf_{ \theta \in \Theta, \lVert a \rVert_2 \leq 1}
  \dot{\mu}\big(a^\top \theta\big) > 0 \;.
 $$
\end{ass}
 The quantity $c_\mu$ is crucial in the analysis, as it represents the
(worst case) sensitivity of the mean function.
Our last assumption differs from most of existing works as we focus
here on \emph{self-concordant} GLMs. This assumption on the curvature
of the mean function is rather mild, and covers for instance the
logistic and Poisson models.
\begin{ass}[Generalized self-concordance]
 \label{ass:SC}
 The mean function verifies $
|\ddot{\mu}| \leq \dot{\mu} \;.
$
\end{ass}
In order to estimate the unknown bandit parameter $\theta_t^\star$,
 we will adopt a \emph{weighted} regularized maximum-likelihood 
principle. Formally,
 we define $\hat\theta_t$ for $ \lambda>0$ and $\gamma\in(0,1]$ as the solution
 of the strictly convex program
\begin{equation}
\label{eq:MLE_D}
\hat\theta_t= \argmin_{\theta \in \mathbb{R}^d} - \sum_{s=1}^{t-1} \gamma^{t-1-s} \log
 \mathbb{P}_\theta(r_{s+1} | a_s) + \frac{\lambda}{2} \lVert \theta \rVert_2^2
\;.
\end{equation}
Equivalently, $\hat{\theta}_{t}$ may
be defined as the minimizer of
$- \sum_{s=1}^{t-1} \gamma^{-s} \log \mathbb{P}_\theta(r_{s+1} | a_s)
+ \frac{\lambda \gamma^{-(t-1)}}{2} \lVert \theta \rVert_2^2$, with
time-independent increasing weights $\gamma^{-s}$ and
time-varying regularization $\lambda \gamma^{-(t-1)}$, which is more handy
for analysis purposes, see~\citep{russac2019weighted}.



\subsection{Stationary GLB}
GLB were first considered in the seminal work of
\citet{filippi2010parametric} who proposed
$\tt GLM \mhyphen UCB$, an optimistic algorithm with
a regret upper bound of the form
$\tilde{\mathcal{O}}(\cm^{-1}d\sqrt{T})$. A key characteristic of
$\tt GLM \mhyphen UCB$ is a \emph{projection step}, used to map
the MLE onto the set of admissible parameters
$\Theta$. Formally, when the MLE $\hat\theta_t$ is not in $\Theta$, it needs to be replaced by
\begin{align}
	\tilde\theta_t = \argmin_{\theta\in\Theta}\left \lVert \sum_{s=1}^{t-1}\left[\mu\big(a_s^\top\theta \big) -
	 \mu\big(a_s^\top\hat\theta_t\big)\right]a_s\right\rVert_{\mathbf{V_t}^{-1}} 
\label{eq:filippi_proj}
\end{align}
where $\mathbf{V_t}$ is an invertible $d\times d$ square matrix.

With $\tt GLM \mhyphen UCB$, both the size of the confidence set (thus
 the exploration bonus) and the regret bound scale as
$\cm^{-1}$. 
However, this constant can be prohibitively large. In the
cases of the logistic and Poisson bandits, one has
$\cm^{-1}\geq e^{S}$, revealing an \emph{exponential} dependency on
$S$. If we consider the example of click prediction in online
advertising with the logistic GLB, $\cm^{-1}$ is of the order $10^3$,
corresponding to typical click rates of less than a percent.  

This
critical dependency was addressed by \citet{faury2020improved} for the
logistic bandit. They introduce $ \tt LogUCB1$ and $\tt LogUCB2$ for
which they respectively prove
$\widetilde{\mathcal{O}}(\cm^{-1/2}d\sqrt{T})$ and
$\widetilde{\mathcal{O}}(d\sqrt{T}+\cm^{-1})$ regret upper bounds. Their
analysis relies on the self-concordance
property of the logistic log-likelihood. 
Self-concordance offers a
refined way to control the curvature of the log-likelihood, and has
been used in batch statistical learning \citep{bach2010self} and
online optimization \citep{bach2013non} (see also \citep[Section 9.6]{Boyd:Vandenberghe:04} for a broader picture).
\ifcolor
{\color{purple}
\else 
\fi 
However, the analysis of \cite{faury2020improved} does
not use the self-concordance to its fullest and a projection step is still required, as detailed in Section~\ref{sec:discussion}.
\ifcolor
}
\else 
\fi


Since the mean function $\mu$ can be non-convex (as for example in the case of logistic regression), the projection step
defined in Equation~\eqref{eq:filippi_proj} generally involves the minimization of a non-convex function. Solving this program can be arduous and finding ways to bypass it is desirable. This was achieved by
\citet{li2017provably} using a \emph{burn-in phase} corresponding to  an
initial number of rounds during which the agent plays randomly. This
ensures that $\hat\theta_t$ stays in $\Theta$ for subsequent rounds and therefore
avoids the projection step. This technique was re-used in other recent
works, such as \citep{kveton2020randomized, zhou2019learning}. A major
drawback of this approach however is the length of this burn-in phase, which
typically grows with $\cm^{-2}$ \citep[Section
4.5]{kveton2020randomized}. In the previously cited example of
click-prediction, this would lead the agent to act randomly for
approximately $10^6$ rounds.


\subsection{Forgetting in Non-Stationary Environments}

Motivated by the non-stationary nature of most real-life applications
of contextual bandits, a consequent theory for linear bandits in non-stationary
environments has been recently developed \citep{cheung2019hedging,
russac2019weighted,zhao2020simple}. 
\ifcolor
{\color{purple}
\else
\fi
We focus here on forgetting policies, 
a broader perspective is discussed in~Section \ref{sec:discussion}.
\ifcolor
} 
\else
\fi
In \citep{cheung2019hedging}, a
sliding window is used and the estimator is constructed based on the
most recent observations only. In \citep{russac2019weighted}
exponentially increasing weights are used to give more importance to
most recent observations. In \citep{zhao2020simple} the algorithm is
restarted on a regular basis. These contributions were generalized to
GLB by \cite{russac2020algorithms,cheung2019hedging, zhao2020simple}.
However, the approach of \cite{russac2020algorithms} still suffers
from the aforementioned limitations (dependency w.r.t. $\cm$ and need
for a projection step) while the analysis of both
\cite{cheung2019hedging} and \cite{zhao2020simple} are missing
key features of the problem at hand (see \cite[Section
1]{russac2020algorithms}).

The non-stationary nature of the problem rules out the use of 
burning phases as changes in the GLB parameter can lead $\hat\theta_t$
to leave $\Theta$, even when well initialized. This also accentuates
the inconveniences brought by the projection step, as
$\hat\theta_t$ leaving $\Theta$ is more likely to happen. 
This is why finding alternatives without projection is even more attractive in this particular setting.
Furthermore, a
generalization of the improvements brought by \cite{faury2020improved}
to non-stationary world is missing, and it is unclear if the
dependency in $\cm$ can still be reduced in this harder setting.

\subsection{Contributions}

The present paper addresses these challenges, focusing on the use of
exponential weights to adapt to changes in the model.
First, we extend
in Theorem~\ref{thm:instantaneous_main} the Bernstein-like
tail-inequality of \cite[Theorem 1]{faury2020improved} to
\emph{weighted} self-normalized martingales. We then
leverage the self-concordance property (Assumption~\ref{ass:SC}) to
provide an improved characterization of the maximum-likelihood
estimator (Proposition~\ref{prop:deviation_main}). This allows to provide concentration
guarantees \emph{without} projecting $\hat\theta_t$ back to $\Theta$.
 Combining these results leads to the $\DGLM$ strategy
(Algorithm~\ref{alg:D-GLM}), which does not resort to a non-convex
projection step and enjoys an
$\tilde{\mathcal{O}}(\cm^{-1/3} d^{2/3} \Gamma_T^{1/3} T^{2/3})$
worst case regret upper bound (Theorem~\ref{th:regret_D_no_proj_main}).
\ifcolor
{ \color{purple}
\else
\fi
A $\mathcal{O}(\cm^{-1/2} \Delta^{-1} d \sqrt{\Gamma_T T})$ regret bound is also obtained 
(Theorem \ref{thm:fjqiqposejf}) under an additional
minimal gap $\Delta > 0$ assumption (Assumption \ref{ass:minimum_gap}).
\ifcolor
}
\else
\fi
A summary
of our contributions and comparison with prior work are given in
Table~\ref{tab:regret_comparison}.
\begin{table*}[t]
    \centering
    \begin{tabular}{|c||c|c|c|c|}
    \hline
         \textbf{Algorithm} &   \textbf{Setting} & \textbf{Projection} &
          \textbf{Regret Upper Bound}  \\
         \hline
         \begin{tabular}{c} $ \tt GLM \mhyphen UCB$ \\ \cite{filippi2010parametric}\end{tabular}
         &  \begin{tabular}{c} Stationary \\ GLM \end{tabular}
         & Non-convex
         & $\widetilde{\mathcal{O}}\left(
         	 {\color{black}\pmb{c_\mu^{-1}}}
         	\cdot d \cdot \sqrt{T}  \right)$ \\
         \hline
         \begin{tabular}{c} $ \tt LogUCB1 $\\ \cite{faury2020improved}\end{tabular}
         & \begin{tabular}{c} Stationary \\ Logistic\end{tabular}
         & Non-convex
         & $\widetilde{\mathcal{O}} \left(   {\color{black}\pmb{c_\mu^{-1/2}}}
         \cdot d \cdot \sqrt{T} \right)$
         \\
         \hline
          \begin{tabular}{c} $\tt  D\mhyphen GLUCB$ \\ \cite{russac2020algorithms} \end{tabular}
         & \begin{tabular}{c} Non-Stationary \\ GLM \end{tabular}
         & Non-convex
          & $\widetilde{\mathcal{O}}\left(  {\color{black}\pmb{c_\mu^{-1}}}
         \cdot d^{2/3} \cdot \Gamma_T^{1/3} \cdot T^{2/3}\right)$
         \\
         \hline
         \begin{tabular}{c} $\DGLM$ \\ \textbf{(this paper)} \end{tabular}
         & \begin{tabular}{c} Non-Stationary \\ GLM + SC + Ass.\hypersetup{linkcolor = black} \ref{ass:minimum_gap}
         \hypersetup{citecolor=blue}
         \end{tabular}
         & \textbf{\color{red} No projection}
          & $\widetilde{\mathcal{O}}\left(   {\color{red}\pmb{c_\mu^{-1/2}}}
         \cdot d \cdot \sqrt{\Gamma_T T} \right)$
         \\
         \hline
         \begin{tabular}{c} $\DGLM$ \\ \textbf{(this paper)} \end{tabular}
         & \begin{tabular}{c} Non-Stationary \\ GLM + SC \end{tabular}
         & \textbf{\color{red} No projection}
          & $\widetilde{\mathcal{O}}\left(   {\color{red}\pmb{c_\mu^{-1/3}}}
         \cdot d^{2/3} \cdot \Gamma_T^{1/3} \cdot T^{2/3} \right)$
         \\
         \hline
          \end{tabular}
    \caption{Comparison of regret guarantees for different algorithms in the GLM
     setting with respect to the degree of non-linearity $c_\mu$, the dimension $d$, the horizon $T$ and the number $\Gamma_T$ of abrupt changes. 
     In the table SC stands for
     self-concordant. \ifcolor { \color{purple} \else \fi Regret guarantees for $\SW$ are the same than for $\DGLM$.	
     \ifcolor     
     }
     \else \fi
     }
    \label{tab:regret_comparison}
\end{table*}


\section{ALGORITHM AND RESULTS}
\subsection{Algorithms}
In this section, we consider the abruptly changing environments defined in Section
\ref{section:1}. We propose two algorithms: $\DGLM$, which is based on discount factors, and
$\SW$ using a sliding window. 
Due to space limitation constraints, the pseudo-code of $\SW$ and the corresponding
theoretical results are reported in Appendix \ref{section:regret_SW_appendix}.
\ifcolor
}
\else \fi
Associated with the weighed MLE defined in Equation~\eqref{eq:MLE_D}, define the
weighted design matrix as
\begin{equation}
 \label{eq:design_matrix_D_main}
 \VV_t = \sum_{s=1}^{t-1} \gamma^{t-1-s}
a_s a_s^\top  + \frac{\lambda}{c_\mu} \identity{d}
\;.
\end{equation}
The $\DGLM$ algorithm proceeds as follows. First, based on the
previous rewards and actions, $\hat{\theta}_t$ is computed. After
receiving the action set $\mathcal{A}_t$, the action $a_t$ is chosen
optimistically as the maximizer of the
current estimate $\mu(a^\top \hat{\theta}_t)$ of each arm's reward inflated by the confidence bonus
$\cm^{-1/2} \beta_T^\delta \lVert a \rVert_{\VV_t^{-1}}$. Finally, the
reward $r_{t+1}$ is received and the matrix $\VV_t$ is updated. The
expression of $\beta^\delta_T$ is a consequence of our novel concentration
result and is defined in Equation~\eqref{eq:beta_main}.
A pseudo-code of the algorithm is presented in
Algorithm~\ref{alg:D-GLM}.

There are two differences between $\DGLM$ and the algorithm
proposed in \citet{russac2020algorithms}. First,
we directly use $\hat\theta_t$ to make predictions about the arms'
performances,  whether it belongs to $\Theta$ or not. Second, the
exploration term scales as $\cm^{-1/2}$ (instead of $\cm^{-1}$), as in \cite{faury2020improved}. The latter has a direct impact on the regret-bound of $\DGLM$, to be stated below.

\begin{algorithm}[ht]
\caption{$\DGLM$}
   \label{alg:D-GLM}
\begin{algorithmic}
   \STATE {\bfseries Input:} Probability $\delta$, dimension $d$, regularization $\lambda$,
   upper bound for bandit parameters $S$, discount factor $\gamma$.
\STATE {\bfseries Initialize:} $\VV_0 = (\lambda/c_{\mu}) \identity{d}$, $\hat{\theta}_0 = 0_{\mathbb{R}^d}$.
   \FOR{$t=1$ {\bfseries to} $T$}
   \STATE Receive $\mathcal{A}_t$, compute $\hat{\theta}_t$ according to (\ref{eq:MLE_D})
   \STATE {\bfseries Play} $a_t = \argmax_{a \in \mathcal{A}_t} \mu(a^\top \hat{\theta}_t)  + \frac{\beta^\delta_T}{\sqrt{c_\mu}}
 \lVert a \rVert_{\VV_t^{-1}} $ with $\beta_{T}^\delta$ defined in Equation \eqref{eq:beta_main}
  \STATE {\bfseries Receive} reward $r_{t+1}$
  \STATE {\bfseries Update:} $\VV_{t+1} \leftarrow a_t a_t^{\top} + \gamma
  \VV_{t} + \frac{\lambda}{c_{\mu}} (1- \gamma) \identity{d}$
   \ENDFOR
\end{algorithmic}
\end{algorithm}

\subsection{Regret Upper Bounds}
\label{sec:regret_upper_bound}

We detail in this section the performance guarantees for $\DGLM$. Define
\begin{align}
 \label{eq:beta_main} 
\beta_{T}^\delta = \km \sqrt{\lambda} \left( 1 + \bar{S}  + \sqrt{\frac{1 + \bar{S}}{\lambda}}
\rho_T^\delta + \left( \frac{\rho_T^\delta}{\sqrt{\lambda}} \right)^2   \right)^{3/2 }
\end{align}
with
\begin{align}
\bar{S} = S + \frac{2S\km + m }{T \lambda (1- \gamma)}\;,
\end{align}
and where
\begin{align*}
    \begin{split}
        \rho_T^\delta &= \frac{\sqrt{\lambda}} {2 \rM} + \frac{2
\rM
  }{\sqrt{\lambda} }\log\left(\frac{T}{\delta}\right)
 + \frac{2 \rM}{\sqrt{\lambda}}
  d\log(2) \\
  & + \frac{d \rM
  }{ \sqrt{\lambda} }\log\left(1 + \frac{k_\mu(1- T^{-2})}{d \lambda (1- \gamma^2)}
  \right)
  \; .
    \end{split}
\end{align*}
The latter expression is a direct consequence of the concentration result presented in Theorem~\ref{thm:instantaneous_main}
below. The difference between $\bar{S}$ and $S$
is a bias term due to the non-stationarity.

%
\ifcolor
{\color{purple}
\else \fi
Before stating our first theorem, we add an additional assumption on the
minimal gap. This assumption is discussed in Section \ref{sec:discussion}
and is only used in Theorem \ref{thm:fjqiqposejf}.
\ifcolor
}
\else
\fi
\begin{ass}
\label{ass:minimum_gap}
The reward gaps $\Delta_t = \min_{a \in \mathcal{A}_t, 
\mu( a^\top \theta^\star_t) < \mu(a_\star^\top \theta^\star_t)} \mu(a_\star^\top \theta^\star_t)
- \mu(a^\top \theta^\star_t)$ satisfies
$$
\forall t \leq T, \Delta_t \geq \Delta > 0\;.
$$
\end{ass}


\begin{restatable}{thm}{thmproblem}
  \label{thm:fjqiqposejf}
  \ifappendix
Under Assumption \ref{ass:minimum_gap}, 
the regret of the $\DGLM$ algorithm is bounded for all $\gamma \in (1/2,1)$
with probability at least $1-\delta$ by 
\begin{align*}
    \begin{split}
        R_T &\leq C_1 \frac{\Gamma_T}{1- \gamma} + 
        C_2 \frac{1}{T(1-\gamma)^2 \Delta }
        +  
        C_3\frac{\beta^\delta_T \sqrt{dT}}{ \sqrt{c_\mu} \Delta} 
        \sqrt{T \log(1/\gamma) + \log \left(1 + \frac{1}{d \lambda (1- \gamma)} \right) }
        \\ 
        & + C_4 \frac{d (\beta_T^{\delta})^2}{\cm \Delta }
    \big( T \log(1/\gamma) + \log(1 + \frac{1}{d \lambda (1-\gamma)}) \big) \;,
    \end{split}
\end{align*}
where $C_1$, $C_2$, $C_3$, $C_4$ are universal constants independent of 
$\cm$, $\gamma$ with only logarithmic terms in $T$.

In particular, setting $\gamma = 1 - \frac{\sqrt{c_\mu \Gamma_T}} {d \sqrt{T}}$ and 
$\lambda = d\log(T)$ leads to
 $$
 R_T = \widetilde{\mathcal{O}}
 \big(\Delta^{-1} \cm^{-1/2} d \sqrt{ \Gamma_T T}\big)\;.
 $$
 \else
Under Assumption \ref{ass:minimum_gap}, 
the regret of the $\DGLM$ algorithm is bounded for all $\gamma \in (1/2,1)$
with probability at least $1-\delta$ by 
\begin{align*}
    \begin{split}
        R_T &\leq C_1 \frac{\Gamma_T}{1- \gamma} + 
        C_2 \frac{1}{T(1-\gamma)^2 \Delta }
        \\
        & +  
        C_3\frac{\beta^\delta_T \sqrt{dT}}{ \sqrt{c_\mu} \Delta} 
        \sqrt{T \log(1/\gamma) + \log \left(1 + \frac{1}{d \lambda (1- \gamma)} \right) }
        \\ 
        & + C_4 \frac{d (\beta_T^{\delta})^2}{\cm \Delta }
    \Big( T \log(1/\gamma) + \log\Big(1 + \frac{1}{d \lambda (1-\gamma)} \Big) \Big) \;,
    \end{split}
\end{align*}
where $C_1$, $C_2$, $C_3$, $C_4$ are universal constants independent of 
$\cm$, $\gamma$ with only logarithmic terms in $T$.

In particular, setting $\gamma = 1 - \frac{\sqrt{c_\mu \Gamma_T}} {d \sqrt{T}}$ 
and $\lambda = d \log(T)$ leads to
 $$
 R_T = \widetilde{\mathcal{O}}
 \big(\Delta^{-1} \cm^{-1/2} d \sqrt{ \Gamma_T T}\big)\;.
 $$
 \fi
\end{restatable}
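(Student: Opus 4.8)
The plan is to decompose the dynamic regret in the standard non-stationary fashion: split the horizon into blocks, control within each block the bias caused by the fact that the discounted MLE uses (slightly stale) data from the current stationary phase, and then sum a per-round regret bound that is governed by the confidence widths $\cm^{-1/2}\beta_T^\delta\lVert a_t\rVert_{\VV_t^{-1}}$. More precisely, write $R_T=\sum_{t=1}^T\big(\mu(a_{\star,t}^\top\theta_t^\star)-\mu(a_t^\top\theta_t^\star)\big)$ and, on the event that the confidence region from Theorem~\ref{thm:instantaneous_main} and Proposition~\ref{prop:deviation_main} holds for all $t$ (probability $\ge1-\delta$), bound each instantaneous regret. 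The term $C_1\Gamma_T/(1-\gamma)$ accounts for the rounds near a breakpoint: whenever a change occurred within the last $\mathcal{O}(1/(1-\gamma))$ rounds, the discounted weights still put non-negligible mass on obsolete observations, so we simply pay $\mathcal{O}(\rM)$ per such round and there are $\mathcal{O}(\Gamma_T/(1-\gamma))$ of them. The term $C_2\,T^{-1}(1-\gamma)^{-2}\Delta^{-1}$ is the residual non-stationary bias inside a block, reflected in the gap $\bar S-S=\mathcal{O}\big((2S\km+m)/(T\lambda(1-\gamma))\big)$ entering $\beta_T^\delta$; it is carried through the analysis and is negligible under the final tuning.

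For the rounds that are ``far'' from any breakpoint (the vast majority), the argument is the usual optimism one. By the choice of $a_t$ and the confidence bound, the instantaneous regret at round $t$ is at most $2\cm^{-1/2}\beta_T^\delta\lVert a_t\rVert_{\VV_t^{-1}}$, up to the bias terms; here the Lipschitzness of $\mu$ (constant $\km$) and the self-concordant control of the MLE deviation from Proposition~\ref{prop:deviation_main} are what let us pass from a bound in the local-information metric to $\lVert\cdot\rVert_{\VV_t^{-1}}$ with only a $\cm^{-1/2}$ factor rather than $\cm^{-1}$. The key step exploiting Assumption~\ref{ass:minimum_gap} is the ``peeling by the gap'' trick: since every instantaneous regret is either $0$ or at least $\Delta$, we have $\mathbf 1\{r_t>0\}\le r_t/\Delta$, hence $R_T\le \Delta^{-1}\sum_t r_t^2$ (up to the boundary/bias contributions), and each $r_t^2\lesssim \cm^{-1}(\beta_T^\delta)^2\lVert a_t\rVert_{\VV_t^{-1}}^2$. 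This is what produces the $\Delta^{-1}$ and the squared quantities $\beta_T^\delta$ in $C_3$ and $C_4$ instead of the naive $\sqrt{T}$-type bound.

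It then remains to sum $\sum_t\lVert a_t\rVert_{\VV_t^{-1}}^2$ over a block and across blocks. This is the weighted analogue of the standard elliptical-potential/log-determinant lemma: with discounting, $\sum_{t}\lVert a_t\rVert_{\VV_t^{-1}}^2 = \mathcal{O}\big(T\log(1/\gamma)+\log(1+\tfrac{1}{d\lambda(1-\gamma)})\big)$ per unit of effective horizon, which explains the factor appearing in $C_3$ and $C_4$; the $\sqrt{dT}$ in $C_3$ comes from a Cauchy--Schwarz step turning the sum of first powers into the square root of $dT$ times the sum of squares. Combining the four contributions gives the stated bound. Finally, plugging in $\gamma = 1-\sqrt{\cm\Gamma_T}/(d\sqrt T)$ and $\lambda=d\log T$ balances $\Gamma_T/(1-\gamma)$ against the $\cm^{-1/2}\Delta^{-1}\beta_T^\delta$-driven terms (noting $\beta_T^\delta=\widetilde{\mathcal O}(\sqrt{d})$ under this tuning, and checking $\gamma>1/2$ for $T$ large enough) and yields $R_T=\widetilde{\mathcal O}(\Delta^{-1}\cm^{-1/2}d\sqrt{\Gamma_T T})$.

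The main obstacle I expect is the first step of the optimism argument in the \emph{non-projected} regime: translating the confidence guarantee, which naturally lives in the metric induced by the \emph{local} information matrix $\mathbf H_t(\theta)=\sum_s\gamma^{t-1-s}\dot\mu(a_s^\top\theta)a_sa_s^\top+\lambda\identity{d}$ around $\hat\theta_t$, into a usable bound in the fixed, $\cm$-inflated metric $\VV_t$ — while keeping only a $\cm^{-1/2}$ loss — and doing so uniformly over rounds where $\hat\theta_t$ may lie outside $\Theta$. This is exactly where Proposition~\ref{prop:deviation_main} and the self-concordance (Assumption~\ref{ass:SC}) must be invoked carefully, together with controlling how the non-stationary bias $\bar S-S$ propagates through the self-concordant inequalities; everything else is bookkeeping with the weighted potential lemma and the gap-peeling.
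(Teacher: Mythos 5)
Your plan matches the paper's proof essentially step for step: the gap-peeling inequality $r_t\le r_t^2/\Delta$ applied on $\mathcal{T}(\gamma)$, the optimistic per-round bound $r_t\lesssim \cm^{-1/2}\beta_T^\delta\lVert a_t\rVert_{\VV_t^{-1}}$ plus a non-stationarity bias term, the weighted elliptical-potential lemma (with Cauchy--Schwarz for the cross term giving the $\sqrt{dT}$ factor in $C_3$), and the same tuning of $\gamma$ and $\lambda$. The only minor inaccuracy is the attribution of the $C_2$ term: in the paper it arises as the square of the additive bias $\frac{k_\mu\gamma^D(2Sk_\mu+\rM)}{\lambda(1-\gamma)}$ in the prediction-error bound (summed over $T$ rounds and divided by $\Delta$), not through $\bar S-S$ entering $\beta_T^\delta$ — but this does not affect the argument.
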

\ifcolor
{\color{purple}
\else \fi

There is a strong link between the cost of non-stationarity in the $K$-arm setting 
and the one observed in the more general GLB setting.
In the $K$-arm setting, any sub-optimal arm $i$ is played at most
$\mathcal{O}(\Delta_i^{-2}\log(T))$ times
(e.g \cite[Proposition 1.1]{munos2014bandits}),
whereas in any abruptly changing
environment, forgetting policies
play a sub-optimal arm $i$ at most
$\widetilde{\mathcal{O}}((\Delta_T(i))^{-2} \sqrt{\Gamma_T T})$ \citep{garivier2011upper}. 
$\Delta_T(i)$ is the minimum distance between the mean of the optimal arm and the mean of the suboptimal arm
$i$
over the entire time horizon.
For GLBs, in the stationary case \citet[Theorem 1]{filippi2010parametric} 
give a gap-dependent
bound on the regret scaling as $\mathcal{O}(\Delta^{-1} \cm^{-2} d^2 \log(T))$. 
Here, the bound of Theorem \ref{thm:fjqiqposejf} is of
order $\mathcal{O}(\Delta^{-1}  \cm^{-1/2} d \sqrt{\Gamma_T T} )$.
The reduced dependency in
$\cm$ in the latter bound is a direct consequence of the use of self-concordance.
Also note that when the inverse link function is the identity and the action 
set is the canonical
basis, our analysis recovers the results of \citet{garivier2011upper}.

We give an upper bound for the worst case regret of Algorithm~\ref{alg:D-GLM} in the following theorem; its proof is deferred to the appendix.
\ifcolor
}
\else
\fi
\begin{restatable}{thm}{thregretDnoprojmain}
\label{th:regret_D_no_proj_main}
\ifappendix
The regret of the $\DGLM$ algorithm is bounded for all $\gamma \in (1/2,1)$
with probability at least $1-\delta$ by
\begin{equation*}
\begin{split}
R_T &\leq \frac{2 \log(T)}{1- \gamma}\Gamma_T + \frac{2 k_\mu( 2 S k_\mu + \rM)}{\lambda} \frac{1}{1- \gamma} \\
& \quad +
 \frac{2\beta^\delta_T}{\sqrt{c_\mu}} \sqrt{dT}  \sqrt{2\max \left(1,\frac{1}{\lambda}
 \right)}
\sqrt{T \log(1/\gamma) + \log \left(1 + \frac{1}{d \lambda (1- \gamma)} \right) } \;.
\end{split}
\end{equation*}
In particular, setting $\gamma = 
1-\left(\frac{\cm^{1/2}\Gamma_T}{dT}\right)^{2/3}$ and $\lambda = d \log(T)$ leads to
$$
R_T = \widetilde{\mathcal{O}}\big( \cm^{-1/3} d^{2/3} \Gamma_{T}^{1/3} T^{2/3}\big)\;.
$$
\else
The regret of the $\DGLM$ algorithm is bounded for all $\gamma \in (1/2,1)$
with probability at least $1-\delta$ by
\begin{equation*}
\begin{split}
R_T &\leq C_1 \frac{\Gamma_T}{1- \gamma} \\
& +
 C_2\frac{\beta^\delta_T \sqrt{dT}}{\sqrt{c_\mu}} 
\sqrt{T \log\left(\frac{1}{\gamma}\right) + \log \left(1 + \frac{1}{d \lambda (1- \gamma)} \right) } \;,
\end{split}
\end{equation*}
where $C_1$ and $C_2$ are universal constants independent of $c_\mu$
and $\gamma$
with only logarithmic terms in $T$.

In particular, setting $\gamma = 
1-\left(\frac{\cm^{1/2}\Gamma_T}{dT}\right)^{2/3}$ 
and $\lambda = d \log(T)$
leads to
$$
R_T = \widetilde{\mathcal{O}}\big( \cm^{-1/3} d^{2/3} \Gamma_{T}^{1/3} T^{2/3}\big)\;.
$$
\fi
\end{restatable}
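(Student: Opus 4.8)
The plan is to run the standard optimism argument, but with the confidence region of Theorem~\ref{thm:instantaneous_main} and Proposition~\ref{prop:deviation_main} centered at the \emph{unprojected} estimate $\hat\theta_t$, and with extra bias bookkeeping to absorb the $\Gamma_T$ switches. First I would establish that, on the probability-$(1-\delta)$ event underlying Theorem~\ref{thm:instantaneous_main}, for every round $t$ and every $a\in\mathcal{A}_t$,
\[
\big|\mu(a^\top\theta^\star_t) - \mu(a^\top\hat\theta_t)\big| \le \frac{\beta^\delta_T}{\sqrt{\cm}}\,\lVert a\rVert_{\VV_t^{-1}} + b_t ,
\]
where $b_t$ collects the deterministic bias caused by the fact that the weighted, regularized MLE targets a discounted average of the past parameters $\theta^\star_s$ rather than $\theta^\star_t$ itself (it can be taken independent of $a$ since $\lVert a\rVert_2\le 1$). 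This is where Proposition~\ref{prop:deviation_main} and Assumption~\ref{ass:SC} are used: generalized self-concordance confines $\hat\theta_t$ to a region where the Hessian of the weighted log-loss is comparable, up to a factor $\cm$, to $\VV_t$, so that one can pass from the Hessian-weighted deviation bound to the $\VV_t^{-1}$-norm that appears in the algorithm \emph{without} the non-convex projection of Equation~\eqref{eq:filippi_proj}. The stochastic part of the radius is exactly $\rho^\delta_T$, furnished by the weighted self-normalized tail inequality of Theorem~\ref{thm:instantaneous_main}, and $\bar S$ inside $\beta^\delta_T$ is the parameter-space counterpart of the bias.

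Granting this, optimism yields the usual telescoping. With $a^\star_t\in\argmax_{a}\mu(a^\top\theta^\star_t)$ and $\mathrm{UCB}_t(a) = \mu(a^\top\hat\theta_t) + \cm^{-1/2}\beta^\delta_T\lVert a\rVert_{\VV_t^{-1}}$, one gets $\mu((a^\star_t)^\top\theta^\star_t)\le \mathrm{UCB}_t(a^\star_t) + b_t \le \mathrm{UCB}_t(a_t) + b_t \le \mu(a_t^\top\theta^\star_t) + 2\cm^{-1/2}\beta^\delta_T\lVert a_t\rVert_{\VV_t^{-1}} + 2b_t$, hence
\[
R_T \le \frac{2\beta^\delta_T}{\sqrt{\cm}}\sum_{t=1}^T\lVert a_t\rVert_{\VV_t^{-1}} + 2\sum_{t=1}^T b_t .
\]
For the bias sum I would split the horizon according to the distance to the last breakpoint of $\theta^\star$. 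A round lying at least $D:=\lceil\log(T)/\log(1/\gamma)\rceil$ after a change has all its pre-change discount weights below $1/T$, so $b_t = \widetilde{\mathcal{O}}(1/T)$ there and such rounds contribute only $\widetilde{\mathcal{O}}(1)$ in total, up to the residual regularization bias, of order $\km(2S\km+\rM)/(\lambda(1-\gamma))$, which is exactly what turns $S$ into $\bar S$. The at most $\Gamma_T D$ remaining \emph{unstable} rounds are each charged the trivial bound $2\km S$ on the instantaneous regret, which produces the $\widetilde{\mathcal{O}}(\Gamma_T/(1-\gamma))$ contribution, i.e.\ the $C_1\,\Gamma_T/(1-\gamma)$ term.

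It remains to control $\sum_t\lVert a_t\rVert_{\VV_t^{-1}}$. Cauchy--Schwarz bounds it by $\sqrt{T\sum_t\lVert a_t\rVert_{\VV_t^{-1}}^2}$, and the discounted elliptical-potential lemma (in the spirit of \citet{russac2019weighted}) bounds $\sum_t\lVert a_t\rVert_{\VV_t^{-1}}^2$ by $2\max(1,1/\lambda)\,d\big(T\log(1/\gamma) + \log(1 + \tfrac{1}{d\lambda(1-\gamma)})\big)$, the $T\log(1/\gamma)$ factor being the price of forgetting ($\approx T(1-\gamma)$ effective blocks). Plugging this in, together with $\beta^\delta_T = \widetilde{\mathcal{O}}(\sqrt\lambda) = \widetilde{\mathcal{O}}(\sqrt d)$ for $\lambda = d\log T$, gives a variance term of order $\widetilde{\mathcal{O}}(\cm^{-1/2}\,d\,T\sqrt{1-\gamma})$; balancing it against $\widetilde{\mathcal{O}}(\Gamma_T/(1-\gamma))$ forces $1-\gamma \asymp (\cm^{1/2}\Gamma_T/(dT))^{2/3}$ — the prescribed tuning — and substituting back gives $R_T = \widetilde{\mathcal{O}}(\cm^{-1/3}d^{2/3}\Gamma_T^{1/3}T^{2/3})$. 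I expect the genuinely delicate step to be the first one: proving the confidence inequality around the pristine $\hat\theta_t$ with no projection, which requires using Assumption~\ref{ass:SC} both to keep $\hat\theta_t$ in a controlled set and to relate the local (Hessian) geometry of the log-loss to $\VV_t$, and then carrying the non-stationarity bias through this nonlinear estimator — a second-order argument that the projection step of prior work conveniently sidestepped.
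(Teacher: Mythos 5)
Your proposal is correct and follows essentially the same route as the paper: the same decomposition into rounds at distance $D=\log(T)/\log(1/\gamma)$ from the last breakpoint (the set $\mathcal{T}(\gamma)$), the same optimism argument built on the projection-free deviation bound of Proposition~\ref{prop:deviation_main}, the same residual bias of order $k_\mu(2Sk_\mu+\rM)/(\lambda(1-\gamma))$, and the same Cauchy--Schwarz plus discounted elliptical-potential step followed by the prescribed tuning of $\gamma$ and $\lambda$. No gaps beyond the parts you explicitly delegate to Theorem~\ref{thm:instantaneous_main} and Proposition~\ref{prop:deviation_main}, which is exactly how the paper structures its own proof.
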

As in the linear case, this regret bound highlights the existence of
two mechanisms of different nature. The first term is due to
non-stationarity, the number of changes $\Gamma_T$ being multiplied by
$1/(1-\gamma)$, which is a rough measure of the forgetting time
induced by the exponential weights. The second term characterizes the rate at which the weighted MLE
$\hat\theta_t$ approaches $\theta_t^\star$. By balancing both terms, we
can characterize the asymptotic behavior of
the regret bound.

\ifcolor
{ \color{purple}
\else
\fi
In Theorem~\ref{th:regret_D_no_proj_main}, optimally tuning $\gamma$ yields the asymptotic worst case rate of 
$T^{2/3}$. This is similar to
the asymptotic rate achievable in the linear case with a different measure of non-stationarity \citep{russac2019weighted} and
the same dependency is attained with a sliding window for MDPs in abruptly changing environments
\citep{gajane2018sliding} and with restart factors \citep{auer2009near}. 
\ifcolor
}
\else \fi
\begin{rem}
  \label{rem:fizefzjapcd}
  The proof of Theorem~\ref{th:regret_D_no_proj_main} reveals that for
  rounds $t$ where $\hat{\theta}_t$ lies in $\Theta$, it is
  possible to obtain a (usually) tighter concentration result
  (depending on the values of $\lambda$ and $S$) by replacing
  $\beta_T^\delta$ with
  $\km\sqrt{1 + 2S} ( \sqrt{\lambda}S + \rho_T^\delta)$. This cannot
  be used to improve the result of
  Theorem~\ref{th:regret_D_no_proj_main}, as one doesn't know in
  advance for which rounds the condition will be satisfied, but
  this minor modification of Algorithm~\ref{alg:D-GLM} is most often advisable in practice.
  See Section~\ref{subrefined_explo} in Appendix for more details.
\end{rem}


\section{KEY ARGUMENTS}
In this section, we detail some key elements of our analysis. First,
we describe the concentration result in its most generic form.  Then,
we explain the main steps to derive the upper bound of the regret of
$\DGLM$.
\subsection{A Tail-Inequality for Self-Normalized Weighted Martingales}
To reduce the dependency in $c_\mu$, it is essential to take into
account the actual conditional variance of the generalized linear
model \citep{faury2020improved}. With exponentially increasing weights,
we also need time-dependent regularization parameters to avoid a
vanishing effect of the regularization
\citep{russac2019weighted}. Carefully combining these two elements
yields the following concentration result.

\begin{restatable}{thm}{instantaneousmain}
\label{thm:instantaneous_main}
Let $t$ be a fixed time instant. Let $\{\mathcal{F}_u\}_{u=1}^t$ be a
filtration. Let $ \{a_u\}_{u=1}^t$ be a stochastic process on $\mathbb{R}^d$
such that $a_u$ is $\mathcal{F}_u$ measurable and $\lVert a_u\rVert_2 \leq 1$. Let
$\{\epsilon_u\}_{u=2}^{t}$ be a martingale difference sequence such
that $\epsilon_{u+1}$ is $\mathcal{F}_{u+1}$ measurable. Assume that
the weights are non-decreasing, strictly positive and the time horizon
is known.  Furthermore, assume that conditionally on $\mathcal{F}_u$
we have $|\epsilon_{u+1}| \leq \rM$ a.s.  Let
$ \{\lambda_u \}_{u=1}^t$ be a deterministic sequence of
regularization terms and denote
$\sigma_t^2 = \mathbb{E} \left[ \epsilon_{t+1}^2 | \mathcal{F}_t
\right]$.

Let
$\widetilde{\HH}_t = \sum_{s=1}^{t-1} w_s^2 \sigma_s^2 a_s a_s^{\top}
+ \lambda_{t-1} \identity{d}$ \;and
\;$S_t = \sum_{s=1}^{t-1} w_s \epsilon_{s+1} a_s$, then for any
$\delta \in (0,1]$,
\begin{align*}
\begin{split}
\left\lVert S_{t}\right\rVert_{\widetilde{\HH}_{t}^{-1}
   } 
   &\geq \frac{\sqrt{\lambda_{t-1} }} {2 \rM w_{t-1}} + \frac{2 \rM
   w_{t-1}}{\sqrt{\lambda_{t-1}} }\log\left(\frac{\det(\widetilde{\HH}_{t})^{1/2}}{
   \delta  \lambda_t^{d/2}}\right) \\
   &
   + \frac{2 \rM w_{t-1}}{\sqrt{\lambda_{t-1} }} d\log(2)
\end{split}
\end{align*}
with probability smaller than $\delta$.
\end{restatable}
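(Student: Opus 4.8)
The plan is to adapt the Bernstein-type self-normalized tail inequality of \citet[Theorem~1]{faury2020improved} --- itself in the lineage of the method-of-mixtures bounds of \citet{abbasi2011improved,russac2019weighted} --- while carefully tracking the dependence on the weights $w_s$ and on the time-varying regularization sequence $\lambda_u$. First I would establish a conditional Bernstein control on the increments: from $|\epsilon_{s+1}|\le\rM$ a.s.\ and the elementary inequality $e^u\le 1+u+u^2$ for $|u|\le 1$, one gets, for any $\mathcal F_s$-measurable scalar $x_s$ with $|x_s|\rM\le 1$,
\[
\mathbb E\!\left[\exp\!\big(x_s\epsilon_{s+1}-x_s^2\sigma_s^2\big)\,\big|\,\mathcal F_s\right]\le 1 .
\]
Applying this with $x_s=w_s\langle\xi,a_s\rangle$ for a fixed $\xi\in\mathbb R^d$, the admissibility condition $|x_s|\rM\le 1$ holds simultaneously for all $s\le t-1$ as soon as $\lVert\xi\rVert_2\le 1/(\rM w_{t-1})$, since $\lVert a_s\rVert_2\le 1$ and the weights are non-decreasing ($w_s\le w_{t-1}$) --- this monotonicity is exactly what allows a single admissibility ball $\mathcal B:=\{\xi:\lVert\xi\rVert_2\le 1/(\rM w_{t-1})\}$. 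Consequently, for each fixed $\xi\in\mathcal B$ the process $M_u(\xi):=\exp\big(\langle\xi,S_u\rangle-\xi^\top(\widetilde\HH_u-\lambda_{u-1}\identity{d})\xi\big)$, $u\le t$, is a non-negative supermartingale with $M_1(\xi)=1$, using $\widetilde\HH_u-\lambda_{u-1}\identity{d}=\sum_{s<u}w_s^2\sigma_s^2 a_sa_s^\top$.

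Next I would run the method of mixtures. Integrating $M_t(\xi)$ against a deterministic, appropriately scaled Gaussian prior truncated to $\mathcal B$ (scaled so that $M_t(\xi)$ times the prior density is proportional to $\exp(\langle\xi,S_t\rangle-\xi^\top\widetilde\HH_t\xi)$), Tonelli's theorem shows that $\overline M_t:=\int_{\mathcal B}M_t(\xi)\,dh(\xi)$ is again a non-negative supermartingale with $\mathbb E[\overline M_t]\le 1$, so Markov's inequality gives $\mathbb P(\overline M_t\ge 1/\delta)\le\delta$. It then suffices to show the event of the statement is contained in $\{\overline M_t\ge 1/\delta\}$, i.e.\ to produce a deterministic lower bound on $\overline M_t$ in terms of $\lVert S_t\rVert_{\widetilde\HH_t^{-1}}$. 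Completing the square, $\langle\xi,S_t\rangle-\xi^\top\widetilde\HH_t\xi=\tfrac14\lVert S_t\rVert_{\widetilde\HH_t^{-1}}^2-\lVert\xi-\tfrac12\widetilde\HH_t^{-1}S_t\rVert_{\widetilde\HH_t}^2$, so $\overline M_t$ is $\exp(\tfrac14\lVert S_t\rVert_{\widetilde\HH_t^{-1}}^2)$ times a truncated Gaussian integral; when the unconstrained maximizer $\tfrac12\widetilde\HH_t^{-1}S_t$ lies in $\mathcal B$ a standard Laplace computation produces the $\det(\widetilde\HH_t)^{1/2}/\lambda_t^{d/2}$ factor (the $d\log 2$ absorbing the truncation). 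Since the maximizer need not lie in $\mathcal B$, I would instead lower-bound the integrand along the rescaled direction $\bar\xi_t:=\rho\,\widetilde\HH_t^{-1}S_t/\lVert S_t\rVert_{\widetilde\HH_t^{-1}}$ with $\rho:=\sqrt{\lambda_{t-1}}/(2\rM w_{t-1})$; using $\widetilde\HH_t\succeq\lambda_{t-1}\identity{d}$ one checks $\lVert\bar\xi_t\rVert_2\le 1/(2\rM w_{t-1})$, so $\bar\xi_t$ always lies well inside $\mathcal B$, and
\[
\langle\bar\xi_t,S_t\rangle-\bar\xi_t^\top\!\big(\widetilde\HH_t-\lambda_{t-1}\identity{d}\big)\bar\xi_t\;\ge\;\rho\,\lVert S_t\rVert_{\widetilde\HH_t^{-1}}-\rho^2 .
\]
Integrating over a suitable neighbourhood of $\bar\xi_t$ inside $\mathcal B$, on which the first- and second-order remainders of the quadratic are controlled, recovers the determinant factor up to the truncation loss; rearranging the resulting inequality $\rho\lVert S_t\rVert_{\widetilde\HH_t^{-1}}\ge\rho^2+\log\big(\det(\widetilde\HH_t)^{1/2}/(\delta\lambda_t^{d/2})\big)+d\log 2$ (dividing by $\rho$, with $1/\rho=2\rM w_{t-1}/\sqrt{\lambda_{t-1}}$) yields the claimed bound, the precise form of the regularization in the determinant ratio following by an elementary manipulation.

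The main obstacle is the last step: the unconstrained log-Laplace maximizer generically falls outside the admissibility ball $\mathcal B$ --- and this happens precisely on the event of interest --- so the textbook ``integrate a Gaussian around the MAP'' argument does not apply as is. Working instead along the truncated direction $\bar\xi_t$, checking that the neighbourhood used for the volume/determinant bookkeeping genuinely fits inside $\mathcal B$, and controlling the quadratic remainder there, is what both makes the argument go through and generates the extra additive offset $\sqrt{\lambda_{t-1}}/(2\rM w_{t-1})$ and the $d\log 2$ term. Threading the weighted design matrix $\widetilde\HH_t$ and the two regularization levels (the $\lambda_{t-1}$ inside $\widetilde\HH_t$ and $\mathcal B$, and the $\lambda_t$ in the determinant ratio) consistently through this computation is the bookkeeping-heavy part of the proof.
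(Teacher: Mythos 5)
Your proposal is correct and follows essentially the same route as the paper's proof: a Bernstein-type bound on the conditional moment generating function (the paper's Lemma~\ref{lemma:upper_bound}) giving a supermartingale for each $\xi$ in an admissibility ball whose radius is set by the largest weight $w_{t-1}$, a method-of-mixtures argument with a truncated Gaussian prior, Markov's inequality, and a lower bound on the mixture obtained by evaluating the quadratic at the point $\sqrt{\lambda_{t-1}}\,\widetilde{\HH}_t^{-1}S_t/(2\rM w_{t-1}\lVert S_t\rVert_{\widetilde{\HH}_t^{-1}})$, which is guaranteed to lie in the half-ball, with the normalization-constant ratio producing the $\det(\widetilde{\HH}_t)^{1/2}/\lambda^{d/2}$ and $d\log 2$ terms. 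The only cosmetic differences are that the paper rescales $\xi$ to the unit ball instead of working in the ball of radius $1/(\rM w_{t-1})$, and centers the Laplace integral at the constrained maximizer $\xi^\star$ (killing the linear term by symmetry/Jensen) before lower-bounding $f_t(\xi^\star)$ by $f_t(\xi_0)$, rather than centering directly at your $\bar\xi_t$.
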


\subsection{Upper Bounding the Regret of $\DGLM$}
In a non-stationary environment, each change in the parameter will
necessarily result in a number of rounds where the bias of the
weighted MLE estimator cannot be controlled. This gives rise to the
first term in the upper bound in Theorem
\ref{th:regret_D_no_proj_main}. To make this observation more
explicit, for $D\geq 1$, define
$\mathcal{T}(\gamma) = \{ 1 \leq t \leq T, \text{ such that }
\theta^{\star}_s = \theta^{\star}_t \text{ for } t- D \leq s \leq t-1
\}$ the set of time instants that are at least $D$ steps away from the
previous closest breakpoint.
Central in the analysis of weighted GLBs is the matrix
$$
\GG_t(\hat{\theta}_t, \theta^\star_t) =  \sum_{s=1}^{t-1} \gamma^{t-1-s}
\alpha(a_s, \hat{\theta}_t, \theta^\star_t)
	a_s a_s^\top + \lambda \identity{d},
	$$
where
$$
	\alpha(a_s, \hat{\theta}_t, \theta^\star_t)   =
	\int_{0}^1 \dot{\mu}(a_s^{\top}((1-v)\theta^\star_t + v\hat{\theta}_t))dv.
$$
As in the linear case, we define its analogue with squared 
exponential weights, 
$$
\wGG_t(\hat{\theta}_t, \theta^\star_t)  =  \sum_{s=1}^{t-1} \gamma^{2(t-1-s)}
\alpha(a_s, \hat{\theta}_t, \theta^\star_t)
	a_s a_s^\top + \lambda \identity{d}\;.$$
	
We add the subscript $\tD$ to a quantity when the sum is for time
instants between $t-D$ and $t-1$. 
In this subsection, for space constraints, 
we will denote equivalently $\wGG_t(\hat{\theta}_t, \theta^\star_t)$ 
(resp. $\GG_t(\hat{\theta}_t, \theta^\star_t)$) by $\wGG_t$ (resp.
$\GG_t$).
As for
linear bandits, the exploration bonus is designed to mitigate the impact of prediction
errors.  We focus below on upper bounding the prediction error
in $\hat{\theta}_t$ defined as
$\Delta_t(a, \hat{\theta}_t) = | \mu(a^\top \hat{\theta}_t) -
\mu(a^\top \theta^\star_t)|$. The exact link between the regret and
this quantity is made explicit in Proposition
\ref{prop:delta_t_regret} in the appendix.
By defining $g_t(\theta) = \sum_{s=t-D}^{t-1} 
\gamma^{t-1-s} \mu(a_s^\top \theta) a_s
+ \lambda \theta$,
when $t \in \mathcal{T}(\gamma)$ one can
upper bound the prediction error in $\hat{\theta}_t$.
\begin{equation*}
\begin{split}
\Delta_t(a, \hat{\theta}_t) &\leq \frac{c \gamma^D} {1-\gamma} 
+   k_\mu  \underbrace{\lVert g_t(\hat{\theta}_t) - g_t(\theta^\star_t)
 \rVert_{\wGG_{\tD}^{-1}}}_{\circled{1}}
 \underbrace{\lVert a \rVert_{\GG_t^{-1}}}_{\circled{2} } 
\end{split}
\end{equation*}
The first term corresponds to the bias due to non-stationarity.
$\circled{1}$ is a measure of the deviation of $\hat{\theta}_t$ from $\theta^\star_t$
 adapted to the non-linear nature of the problem.
  Note that $g_t(\hat{\theta}_t) - g_t(\theta^\star_t)$ involves a martingale
  difference sequence (thanks to the optimality condition
  of the MLE) that can be controlled using Theorem
  \ref{thm:instantaneous_main}.  However, to bound $\circled{1}$ using
  Theorem \ref{thm:instantaneous_main} one needs to link
  the matrix $\wGG_{\tD}$ with $\wHH_{\tD}$ ,
 the self-concordance allows exactly to do this.

\paragraph{Self-Concordance} More precisely, the use of self-concordance
 offers a sharp relation (independent of $\cm$)
between the first derivative of the mean function evaluated at different points.
Using Lemma \ref{lemma:self_concordance} reported in
Appendix~\ref{section:useful_results}, standard calculations yield:
\begin{equation}
\label{eq:SC_main_2}
\wGG_{\tD} \geq \big(1 + C + \frac{1}{\sqrt{\lambda}} \lVert g_t(\hat{\theta}_t) -
 g_t(\theta^\star_t)\rVert_{\wGG_{\tD}^{-1}} \big) \wHH_{\tD}
\end{equation}
Note that Equation \eqref{eq:SC_main_2} involves the deviation
term that we want to control. Here, $C$ is a residual bias due
to the non-stationarity of the environment.

\paragraph{Better Characterization of the MLE}
By leveraging Equation \eqref{eq:SC_main_2} to bound the
deviation $g_t(\hat{\theta}_t) - g_t(\theta^\star_t)$
in the $\wGG_{\tD}^{-1}$-norm, one obtains an implicit equation.
Solving it leads to the following proposition.


\begin{restatable}{prop}{propdeviationmain}
\label{prop:deviation_main}
\ifappendix
For any $\delta \in (0,1]$, with probability higher than $1- \delta$, 
\begin{equation*}
\forall t \in \mathcal{T}(\gamma), \; 
 \lVert \gamma^{t-1} S_{\tD} \rVert_{\widetilde{\GG}_{\tD}^{-1}(\hat{\theta}_t, \theta_t^\star)}
\leq
\sqrt{1 + \bar{S}}   \rho_T^\delta +
\frac{1}{\sqrt{\lambda}} \left(\rho_T^\delta\right)^2  \;,
 \end{equation*}
where $\rho^\delta_T$ is defined in Equation \eqref{eq:rho_t_appendix}.
\else
 When $t \in \mathcal{T}(\gamma)$, the following holds,
\begin{equation*}
\lVert g_t(\hat{\theta}_t) - g_t(\theta^\star_t) \rVert_{\widetilde{\GG}_{\tD}^{-1}(\hat{\theta}_t, \theta_t^\star)}
\leq
\sqrt{1 + C}   \rho_T^\delta +
\frac{1}{\sqrt{\lambda}} \left(\rho_T^\delta\right)^2  \;,
 \end{equation*}
 where $C$ is a residual term due to non-stationarity.
 \fi
\end{restatable}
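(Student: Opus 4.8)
The plan is to combine three ingredients -- the first-order optimality of the weighted MLE, the tail inequality of Theorem~\ref{thm:instantaneous_main}, and the self-concordance comparison of Equation~\eqref{eq:SC_main_2} -- and then to solve the implicit inequality that comes out. First I would exploit optimality: since $\hat{\theta}_t$ is the unconstrained minimizer of the strictly convex objective of Equation~\eqref{eq:MLE_D} and $\theta \mapsto \log\mathbb{P}_\theta(r\mid a)$ has gradient $(r - \mu(a^\top\theta))a$, setting the gradient to zero and rearranging gives, for $t \in \mathcal{T}(\gamma)$, that $g_t(\hat{\theta}_t) - g_t(\theta^\star_t)$ equals the weighted noise sum $\gamma^{t-1}S_{\tD} = \sum_{s=t-D}^{t-1}\gamma^{t-1-s}\epsilon_{s+1}a_s$ up to two lower-order contributions: the regularization term $-\lambda\theta^\star_t$, and the rounds outside the window $\{t-D,\dots,t-1\}$, which carry weight at most $\gamma^D$ and thus contribute $\mathcal{O}(\gamma^D/(1-\gamma))$. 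The restriction $t \in \mathcal{T}(\gamma)$ is exactly what makes $\epsilon_{s+1} = r_{s+1} - \mu(a_s^\top\theta^\star_t)$ a genuine martingale difference sequence on the window (all $\theta^\star_s$ coincide there), as Theorem~\ref{thm:instantaneous_main} requires. A mean-value argument also gives the identity $g_t(\hat{\theta}_t) - g_t(\theta^\star_t) = \GG_{\tD}(\hat{\theta}_t - \theta^\star_t)$, which I keep for the third step.

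Second I would invoke the concentration result on the martingale part. For a canonical exponential family $\mathbb{E}[\epsilon_{s+1}^2 \mid \mathcal{F}_s] = \ddot{b}(a_s^\top\theta^\star_s) = \dot{\mu}(a_s^\top\theta^\star_t)$ on the window, so the matrix $\wHH_{\tD}$ of Theorem~\ref{thm:instantaneous_main}, instantiated with the increasing weights $w_s = \gamma^{-s}$ and time-varying regularization $\lambda_{t-1} \propto \lambda\gamma^{-2(t-1)}$, is precisely the variance-weighted Gram matrix (this choice also collapses the normalizing factors $\sqrt{\lambda_{t-1}}/w_{t-1}$ and $w_{t-1}/\sqrt{\lambda_{t-1}}$ to $\sqrt{\lambda}$ and $1/\sqrt{\lambda}$). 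Applying Theorem~\ref{thm:instantaneous_main} together with a union bound over $t \leq T$ and the standard determinant--trace bound (using $\dot\mu \leq k_\mu$ and $\lVert a_s\rVert_2 \leq 1$) bounds $\lVert\gamma^{t-1}S_{\tD}\rVert_{\wHH_{\tD}^{-1}}$ by $\rho_T^\delta$ simultaneously for all $t \in \mathcal{T}(\gamma)$ with probability at least $1-\delta$; this is where the closed form of $\rho_T^\delta$ originates.

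Third, and this is the delicate part, I would transfer the bound from the $\wHH_{\tD}^{-1}$-norm to the $\wGG_{\tD}^{-1}$-norm. Lemma~\ref{lemma:self_concordance} compares the first derivative of $\mu$ at $\theta^\star_t$ with its average $\alpha(a_s,\hat{\theta}_t,\theta^\star_t)$ along the segment to $\hat{\theta}_t$, the discrepancy being controlled by $|a_s^\top(\hat{\theta}_t - \theta^\star_t)|$; feeding in the identity $\hat{\theta}_t - \theta^\star_t = \GG_{\tD}^{-1}(g_t(\hat{\theta}_t) - g_t(\theta^\star_t))$ together with the ordering $\GG_{\tD} \succeq \wGG_{\tD} \succeq \lambda\identity{d}$ (valid since $\gamma^{t-1-s} \geq \gamma^{2(t-1-s)}$ for $\gamma \in (0,1]$) lets me bound $|a_s^\top(\hat{\theta}_t - \theta^\star_t)|$ by $\lambda^{-1/2}\lVert g_t(\hat{\theta}_t) - g_t(\theta^\star_t)\rVert_{\wGG_{\tD}^{-1}}$ up to the residual non-stationarity bias $C$. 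This yields the matrix comparison of Equation~\eqref{eq:SC_main_2} and, after inversion, the norm bound $\lVert v\rVert_{\wGG_{\tD}^{-1}} \leq \sqrt{1 + C + \lambda^{-1/2}\lVert g_t(\hat{\theta}_t) - g_t(\theta^\star_t)\rVert_{\wGG_{\tD}^{-1}}}\;\lVert v\rVert_{\wHH_{\tD}^{-1}}$. Writing $x := \lVert g_t(\hat{\theta}_t) - g_t(\theta^\star_t)\rVert_{\wGG_{\tD}^{-1}}$ and chaining the three steps gives the self-referential inequality $x \leq \sqrt{1 + C + x/\sqrt{\lambda}}\;\rho_T^\delta$; squaring it produces a quadratic inequality in $x$ whose resolution, combined with $\sqrt{u+v} \leq \sqrt{u}+\sqrt{v}$, yields $x \leq \sqrt{1+C}\,\rho_T^\delta + \lambda^{-1/2}(\rho_T^\delta)^2$, the claim. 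The main obstacle is precisely closing this loop: the control of $|a_s^\top(\hat{\theta}_t - \theta^\star_t)|$, hence the self-concordance comparison, involves the very quantity $x$ being bounded, so the estimate only goes through via the quadratic resolution; on top of this, aligning the residual bias $C$ with the $\bar{S}$ appearing in Equation~\eqref{eq:beta_main} and checking that the unprojected $\hat{\theta}_t$ stays norm-bounded (the better characterization of the MLE alluded to above) takes some care.
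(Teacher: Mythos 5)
Your proposal is correct and follows essentially the same route as the paper: the MLE optimality condition and bias decomposition (Equation~\eqref{eq:char_MLE_D} and the terms $b_{1,t},b_{2,t},b_{3,t}$ in Proposition~\ref{prop:alpha_discount}), the weighted concentration bound of Corollary~\ref{cor:concentration_Discounts} giving $\rho_T^\delta$, the self-concordant matrix comparison of Corollary~\ref{cor:S_t_G_t_S_t_H_t}, and the resolution of the resulting quadratic inequality. The only cosmetic differences are that you apply the concentration bound before solving the quadratic rather than after, and your closing remark about needing $\hat\theta_t$ to stay norm-bounded is unnecessary --- the argument as you set it up already avoids any such requirement, which is precisely the point of the proposition.
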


\begin{rem*}
In stark contrast with previously existing works
(see \cite[Proposition 1]{filippi2010parametric}),
deviations from the true parameter $\theta^\star_t$ are characterized
uniquely by the MLE (and not by its projected counterpart).
 This can be done whether $\hat{\theta}_t$ belongs to $\Theta$ or not and without any
 projection.
 This is not specific to the non-stationary nature of the
 problem
 but fundamentally relies on an improved analysis of the MLE.
 Similar guarantees can be obtained in any stationary environment.
See Section \ref{sec:discussion} for a more detailed comparison 
of the possible uses of the self-concordance property.
\end{rem*}
$\circled{1}$ can be  upper bounded using Proposition
\ref{prop:deviation_main}.
To upper bound $\circled{2}$ we use the following inequality.
\begin{equation}
\label{eq:SC_2_main_2}
\GG_{t} \geq \left( 1 + C + \frac{1}
{\sqrt{\lambda}}
\lVert
 g_t(\hat{\theta}_t) - g_t(\theta^\star_t)
\rVert_{\widetilde{\GG}_{\tD}^{-1}}\right)^{-1}
c_\mu \VV_t
\;.
\end{equation}
Combining Proposition \ref{prop:deviation_main} with Equation  \eqref{eq:SC_2_main_2}
gives the upper bound for $\circled{2}$.
Putting everything together, we obtain the form of $\beta_{T}^\delta$ given
in Equation~(\ref{eq:beta_main}).
The regret bound is then obtained by summing the exploration bonus for
the different time instants. Applying the so-called elliptical
lemma (see \cite[Chap. 19]{lattimore2019bandit}) and letting
$D =~ \log(T)/\log(1/\gamma)$ completes the proof.

\section{DISCUSSION}
\label{sec:discussion}
\ifcolor
{\color{purple}
\else \fi
\paragraph{Assumption on the Gaps.} 
Assumptions similar to our Assumption \ref{ass:minimum_gap} 
requiring a minimum gap are frequent in non-stationary bandits.
First, note that $\Delta$ is not required for the algorithm but only
for the theoretical analysis.
Second, similar assumptions can be found 
for $K$-arm bandits in several 
works to obtain the optimal $\widetilde{\mathcal{O}}(\sqrt{\Gamma_T T})$ regret bound. 
This is in particular the case for change-points detection methods: 
\cite[Corollary 1]{cao2018nearly} and
\cite[Corollary 4.3]{zhou2020near} 
is proved under an assumption on the minimal gap.
This remains true for forgetting strategies: the bound of \citet{garivier2011upper}
is gap-dependent,
\cite{trovo2020sliding} achieve a 
$\mathcal{O}(\Delta^{-1} \sqrt{T \Gamma_T})$ regret.
More demanding, the LM-DSEE and SW-UCB\# algorithms from \cite{wei2018abruptly}
require the minimum gap as an input of the algorithm.
Generally speaking, none of those works provide an analysis when the minimum gap can depend on the time horizon $T$ and when
the mean of different arms can be arbitrarily close. We suspect that forgetting policies would obtain
a $\mathcal{O}(\Gamma_T^{1/3} T^{2/3})$ worst case dependency as in Theorem \ref{th:regret_D_no_proj_main}
and that changepoint detection methods are likely to fail in such a case.

\paragraph{Tightness of the Bound.}
For problems with a finite number of actions, \cite{auer2018adaptively} have developed 
an algorithm that does not require the knowledge of the number of breakpoints nor
assumption on the gaps.
This was extended to the $K$-arm setting by \cite{auer2019adaptively} and to the more
general contextual bandits by \cite{chen2019new}. Both works 
(\cite{auer2019adaptively, chen2019new})
achieve the optimal
$\widetilde{\mathcal{O}}(\sqrt{\Gamma_T T})$ regret bound.
Yet, their analysis 
does not apply to the GLB framework.
Furthermore, both works rely on replaying phases that are incompatible
with time-dependent action sets as considered here.
Additionally, in \citep{chen2019new} the regret is defined with respect to the best
policy in some finite class, whereas our results
apply to the general setting where actions can change over time and
the regret benchmark is the ground-truth of the environment.
The best lower-bound for forgetting policies in abruptly changing environments with time-dependent action sets remains unknown. 
While it is known that forgetting policies are minimax optimal when non-stationarity is measured through the so-called variational budget (see \cite{cheung2019learning, russac2019weighted}), whether such methods are
optimal in abruptly changing environments is unclear. 
Nonetheless, 
the bound obtained by \cite{garivier2011upper} in the $K$-arm setting yields a worst case regret bound that can be shown to be of order 
$\mathcal{O}( \Gamma_T^{1/3} T^{2/3})$ (see Appendix \ref{sec:worst_case_K}).

\paragraph{Knowledge of $\Gamma_T$}
Optimizing the choice of the forgetting parameter $\gamma$ (w.r.t. the regret bound) requires the knowledge of $\Gamma_T$. The Bandit over
Bandit (BOB) framework introduced by \cite{cheung2019learning} can be used
to circumvent this requirement. When the assumption \ref{ass:minimum_gap} is
satisfied, following the proof from \cite{cheung2019hedging} one would obtain a regret bound of order
$\widetilde{\mathcal{O}}(\Delta^{-1} d \cm^{-1/2} \sqrt{T \max(\Gamma_T,
T^{1/2})})$ (see \cite[Remark 2]{auer2019adaptively}). 
Similarly, in the absence of Assumption \ref{ass:minimum_gap} an upper bound of order
$\widetilde{\mathcal{O}}(\cm^{-1/3} d^{2/3} T^{2/3} 
\max(\Gamma_T, d^{-1/2} T^{1/4})^{1/3})$ can be achieved (see \cite[Theorem 4]{zhao2020simple}).

\paragraph{Self-Concordance} The analysis of \cite{faury2020improved} does not use self-concordance to its fullest. We present an
improved analysis valid in any stationary time frame, proving that a better treatment of the self-concordance
removes the need for the inconvenient projection. 
Informally, the self-concordance links $\mu(x^\top \hat{\theta}_t)$ to $
\mu(x^\top \theta^\star)$ without resorting to global bounds on $\dot\mu$ (e.g 
    $k_\mu$ and $c_\mu$). In \cite{faury2020improved}, this takes the form of a Taylor-like expansion:
    \begin{align*}
       \mu(x^\top \theta_t) \leq \mu(x^\top \theta^\star) + \frac{\vert x^\top (\theta^\star-\theta_t)
       \vert}{1+2S}\dot\mu(x^\top \theta^\star)\;,
    \end{align*}
where $\theta_t$ is a projected version of $\hat\theta_t$ in $\Theta$. The
denominator of the r.h.s. is reminiscent of this projection step. We show here that a
finer analysis yields the following, more implicit but powerful bound:
%
    \begin{align*}
       \mu(x^\top \hat\theta_t) \leq \mu(x^\top \theta^\star) + \frac{\vert x^\top (\theta^\star-\hat
       \theta_t)\vert}{1+\vert x^\top (\theta^\star-\hat\theta_t)\vert}\dot\mu(x^\top \theta^\star)
       \;.
    \end{align*}
\begin{figure*}[hbt]
\begin{subfigure}[t]{0.49\linewidth}
\centering
\includegraphics[width=\linewidth]{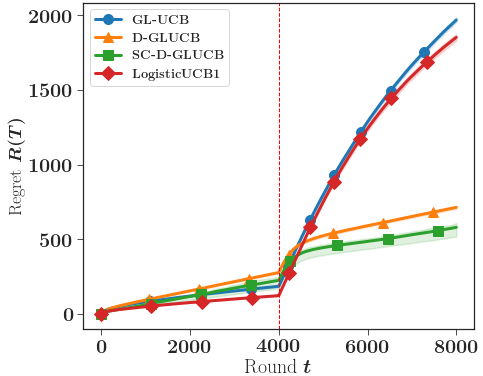}
\caption{$\cm^{-1} = 400$}
\label{fig:1}
\end{subfigure}
\hfill
\begin{subfigure}[t]{0.49\linewidth}
\centering
\includegraphics[width=\linewidth]{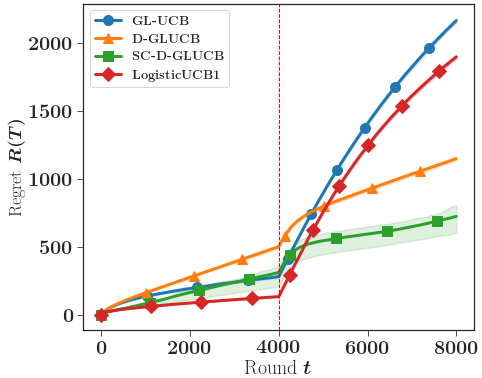}
\caption{$\cm^{-1} = 1000$}
\label{fig:2}
\end{subfigure}
\caption{Regret of the different algorithms in a 2D abruptly changing environment averaged on $200$ independent experiments and the $25\%$ associated quantiles.}
\label{fig:exps}
\end{figure*}

Note that when $\hat\theta_t\in\Theta$ (i.e there is no need for a projection), our
bound implies the one of \cite{faury2020improved}. The kind of relationship displayed in the above equation allows us
to derive a tail inequality for the deviation from $\hat\theta_t$ to $\theta^\star$
without projecting $\hat\theta_t$, 
by solving an implicit equation. 
We believe that this
new approach is of interest in other settings involving 
self-concordant GLBs. 
The self-concordance assumption (Assumption \ref{ass:SC})
is not particularly restrictive and goes beyond logistic
functions. 
Under the classical Assumption~\ref{ass:bounded_actions}
(i.e. bounded features) all GLMs are self-concordant 
(cf. Sec. 2 of \cite{JMLR:v15:bach14a}) 
with constants that depend on the link function.

\ifcolor
}
\else \fi

\section{EXPERIMENTS}

In this section, we illustrate the empirical performance of $\DGLM$ in
a simulated, abruptly changing environment with a logistic link function
$ \mu(x) = 1/(1+\exp(-x))$. In this two-dimensional problem, there is a
switch in the reward distribution at $t=4000$ (red dashed line on
Figure~\ref{fig:exps}).  

$\DGLM$ (Algorithm \ref{alg:D-GLM}) is compared with
$\tt GLM \mhyphen UCB$ from \cite{filippi2010parametric},
$\tt LogUCB1$ from \cite{faury2020improved} and with
$\tt D\mhyphen GLUCB$ from \cite{russac2020algorithms}.  $\DGLM$
(resp. $\tt D\mhyphen GLUCB$) is related with $\tt LogUCB1$
(resp. $\tt GLM \mhyphen UCB$) in the sense that the exploration terms
have the same scaling but the former incorporate the exponential
weights making it possible to adapt to changes. The average regret 
of the different policies together with their
central $50\%$ quantiles, averaged on 200 independent runs, are reported
in Figure~\ref{fig:exps} for two different parameter values.

In Fig.~\ref{fig:1}, $\theta^\star$ starts on the
circle of radius $S=6$ (corresponding to
$\cm^{-1} = \exp(S) \approx 400 $) with an angle of $2\pi/3$ and jumps
at $t=4000$ to an angle of $4\pi/3$. The experiment reported on
Fig.~\ref{fig:2} is identical with a radius $S=7$ corresponding to a
$\cm^{-1} \approx 1000$. As previously discussed, using such values of
$S$ is required in situation where the actions return binary rewards
with expected values in the range $10^{-3}$ -- $10^{-2}$, which is
typically the case in web advertising or recommendation applications.

For both experiments, at every time steps, $50$ randomly generated
actions in the unit circle are proposed to the learner. For $\DGLM$
and $\tt D\mhyphen GLUCB$ the asymptotically optimal choice of the
discount factors is used: $\gamma = 1 - (\Gamma_T/(d \times T))^{2/3}$
with $d=2$, $\Gamma_T = 2$ and $T= 8000$. To speed up the learning
that is hard with those values of $\cm$, all the algorithms have their
exploration bonus divided by 5.

As expected, the algorithms tuned for non stationary situations
($\DGLM$, $\tt D\mhyphen GLUCB$) perform worse than their stationary
counterparts ($\tt LogUCB1$ and $\tt GLM \mhyphen UCB$) during the
first stationary phase. More precisely, with the choice made for
$\gamma$ the estimation of $\hat{\theta}_t$ for algorithms that use
exponential weights is roughly based on the
$1/(1- \gamma) \approx 400$ most recent observations.  In contrast,
$\tt LogUCB1$ and $\tt GLM \mhyphen UCB$ use all the observations from
the start to compute the MLE, which eventually leads to a more precise
estimation. Right after the change,
the bias caused by the non-stationarity results in
a significant increase in regret. Unweighted algorithms
are affected much more deeply by this phenomenon that will eventually
cause large losses in performance due to the persistence of obsolete
information.

The theoretical analysis of Section~\ref{sec:regret_upper_bound}
suggests that the advantage of $\DGLM$ is all the more significant in
strongly non-linear (large $\cm^{-1}$) non-stationary
environments. This is obvious in Figure~\ref{fig:exps}, particularly
when comparing Fig.~\ref{fig:1} and Fig.~\ref{fig:2}, which differ by
the range on which the logistic function is used for making reward
predictions. Note that, on average, for these two simulated scenarios
the fact that the MLE $\hat{\theta}_t$ does not belong to $\Theta$
happens for several hundred of rounds. All the algorithms except
$\DGLM$ would require non convex projection steps at these instants,
or equivalently, one should inflate $S$ (and thus $\cm^{-1}$) to ensure the
compliance of these algorithms with the associated theory. In
producing Figure~\ref{fig:exps}, this projection step was simply
bypassed, which provides an optimistic evaluation of the performance
of the competitors of $\DGLM$. Interestingly, the observation that the
dispersion of performance of $\DGLM$ is slightly higher than that of
$\tt D\mhyphen GLUCB$ can be traced back to the use of
Remark~\ref{rem:fizefzjapcd} in these simulations: $\DGLM$ adapts to
the events $\{\hat{\theta}_t\notin\Theta\}$ (rather than pretending that
these did not happen) and thus its performance is made somewhat
dependent on the actual occurrence of these events.




\bibliography{bib.bib}

\begin{thebibliography}{35}
\providecommand{\natexlab}[1]{#1}
\providecommand{\url}[1]{\texttt{#1}}
\expandafter\ifx\csname urlstyle\endcsname\relax
  \providecommand{\doi}[1]{doi: #1}\else
  \providecommand{\doi}{doi: \begingroup \urlstyle{rm}\Url}\fi

\bibitem[Abbasi-Yadkori et~al.(2011)Abbasi-Yadkori, P{\'a}l, and
  Szepesv{\'a}ri]{abbasi2011improved}
Yasin Abbasi-Yadkori, D{\'a}vid P{\'a}l, and Csaba Szepesv{\'a}ri.
\newblock Improved algorithms for linear stochastic bandits.
\newblock In \emph{Advances in Neural Information Processing Systems, NeurIPS
  2011}, pages 2312--2320, 2011.

\bibitem[Auer et~al.(2008)Auer, Jaksch, and Ortner]{auer2009near}
Peter Auer, Thomas Jaksch, and Ronald Ortner.
\newblock Near-optimal regret bounds for reinforcement learning.
\newblock In \emph{Advances in neural information processing systems, NeurIPS},
  pages 89--96, 2008.

\bibitem[Auer et~al.(2018)Auer, Gajane, and Ortner]{auer2018adaptively}
Peter Auer, Pratik Gajane, and Ronald Ortner.
\newblock Adaptively tracking the best arm with an unknown number of
  distribution changes.
\newblock In \emph{European Workshop on Reinforcement Learning, EWRL 2018},
  2018.

\bibitem[Auer et~al.(2019)Auer, Gajane, and Ortner]{auer2019adaptively}
Peter Auer, Pratik Gajane, and Ronald Ortner.
\newblock Adaptively tracking the best bandit arm with an unknown number of
  distribution changes.
\newblock In \emph{Conference on Learning Theory, COLT 2019}, pages 138--158,
  2019.

\bibitem[Bach(2010)]{bach2010self}
Francis Bach.
\newblock Self-concordant analysis for logistic regression.
\newblock \emph{Electron. J. Statist.}, 4:\penalty0 384--414, 2010.
\newblock \doi{10.1214/09-EJS521}.

\bibitem[Bach(2014)]{JMLR:v15:bach14a}
Francis Bach.
\newblock Adaptivity of averaged stochastic gradient descent to local strong
  convexity for logistic regression.
\newblock \emph{Journal of Machine Learning Research}, 15\penalty0
  (19):\penalty0 595--627, 2014.

\bibitem[Bach and Moulines(2013)]{bach2013non}
Francis Bach and Eric Moulines.
\newblock Non-strongly-convex smooth stochastic approximation with convergence
  rate o (1/n).
\newblock In \emph{Advances in Neural Information Processing Systems, NeurIPS
  2013}, pages 773--781, 2013.

\bibitem[Boyd and Vandenberghe(2004)]{Boyd:Vandenberghe:04}
S.P. Boyd and L.~Vandenberghe.
\newblock \emph{Convex optimization}.
\newblock Cambridge Univ Press, 2004.

\bibitem[Cao et~al.(2019)Cao, Wen, Kveton, and Xie]{cao2018nearly}
Yang Cao, Zheng Wen, Branislav Kveton, and Yao Xie.
\newblock Nearly optimal adaptive procedure with change detection for
  piecewise-stationary bandit.
\newblock \emph{Proceedings of the 22nd International Conference on Artificial
  Intelligence and Statistics, AISTATS 2019}, 2019.

\bibitem[Chapelle and Li(2011)]{ChapelleLi11}
O.~Chapelle and L.~Li.
\newblock {An empirical evaluation of Thompson Sampling}.
\newblock In \emph{Advances in Neural Information Processing Systems, NeurIPS
  2011}, 2011.

\bibitem[Chen et~al.(2019)Chen, Lee, Luo, and Wei]{chen2019new}
Yifang Chen, Chung-Wei Lee, Haipeng Luo, and Chen-Yu Wei.
\newblock A new algorithm for non-stationary contextual bandits: Efficient,
  optimal, and parameter-free.
\newblock \emph{Proceedings of the 32nd Conference on Learning Theory, COLT
  2019}, 2019.

\bibitem[Cheung et~al.(2019{\natexlab{a}})Cheung, Simchi-Levi, and
  Zhu]{cheung2019hedging}
Wang~Chi Cheung, David Simchi-Levi, and Ruihao Zhu.
\newblock Hedging the drift: Learning to optimize under non-stationarity.
\newblock \emph{arXiv preprint arXiv:1903.01461}, 2019{\natexlab{a}}.

\bibitem[Cheung et~al.(2019{\natexlab{b}})Cheung, Simchi-Levi, and
  Zhu]{cheung2019learning}
Wang~Chi Cheung, David Simchi-Levi, and Ruihao Zhu.
\newblock Learning to optimize under non-stationarity.
\newblock In \emph{Proceedings of the 22nd International Conference on
  Artificial Intelligence and Statistics, AISTATS 2019}, 2019{\natexlab{b}}.

\bibitem[Chu et~al.(2011)Chu, Li, Reyzin, and Schapire]{chu2011contextual}
Wei Chu, Lihong Li, Lev Reyzin, and Robert Schapire.
\newblock Contextual bandits with linear payoff functions.
\newblock In \emph{Proceedings of the Fourteenth International Conference on
  Artificial Intelligence and Statistics, AISTATS 2011}, pages 208--214, 2011.

\bibitem[Dani et~al.(2008)Dani, Hayes, and Kakade]{dani2008stochastic}
Varsha Dani, Thomas~P Hayes, and Sham~M Kakade.
\newblock Stochastic linear optimization under bandit feedback.
\newblock In \emph{21st Annual Conference on Learning Theory, COLT 2008}, pages
  355--366, 2008.

\bibitem[Faury et~al.(2020)Faury, Abeille, Calauz{\`e}nes, and
  Fercoq]{faury2020improved}
Louis Faury, Marc Abeille, Cl{\'e}ment Calauz{\`e}nes, and Olivier Fercoq.
\newblock Improved optimistic algorithms for logistic bandits.
\newblock In \emph{Proceedings of the 37th International Conference on Machine
  Learning, ICML 2020}, 2020.

\bibitem[Filippi et~al.(2010)Filippi, Capp\'{e}, Garivier, and
  Szepesv{\'a}ri]{filippi2010parametric}
Sarah Filippi, Olivier Capp\'{e}, Aur{\'e}lien Garivier, and Csaba
  Szepesv{\'a}ri.
\newblock Parametric bandits: The generalized linear case.
\newblock In \emph{Advances in Neural Information Processing Systems, NeurIPS
  2010}, pages 586--594, 2010.

\bibitem[Flajolet and Jaillet(2017)]{flajolet2017real}
Arthur Flajolet and Patrick Jaillet.
\newblock Real-time bidding with side information.
\newblock In \emph{Advances in Neural Information Processing Systems, NeurIPS
  2017}, pages 5168--5178, 2017.

\bibitem[Gajane et~al.(2018)Gajane, Ortner, and Auer]{gajane2018sliding}
Pratik Gajane, Ronald Ortner, and Peter Auer.
\newblock A sliding-window algorithm for markov decision processes with
  arbitrarily changing rewards and transitions.
\newblock \emph{arXiv preprint arXiv:1805.10066}, 2018.

\bibitem[Garivier and Moulines(2011)]{garivier2011upper}
Aur{\'e}lien Garivier and Eric Moulines.
\newblock On upper-confidence bound policies for switching bandit problems.
\newblock In \emph{International Conference on Algorithmic Learning Theory, ALT
  2011}, pages 174--188, 2011.

\bibitem[Kveton et~al.(2020)Kveton, Zaheer, Szepesvari, Li, Ghavamzadeh, and
  Boutilier]{kveton2020randomized}
Branislav Kveton, Manzil Zaheer, Csaba Szepesvari, Lihong Li, Mohammad
  Ghavamzadeh, and Craig Boutilier.
\newblock Randomized exploration in generalized linear bandits.
\newblock \emph{Proceedings of the 23nd International Conference on Artificial
  Intelligence and Statistics, AISTATS 2020}, 2020.

\bibitem[Lattimore and Szepesv{\'a}ri(2019)]{lattimore2019bandit}
Tor Lattimore and Csaba Szepesv{\'a}ri.
\newblock \emph{Bandit Algorithms}.
\newblock Cambridge University Press, 2019.

\bibitem[Li et~al.(2010)Li, Chu, Langford, and Schapire]{li2010contextual}
Lihong Li, Wei Chu, John Langford, and Robert~E. Schapire.
\newblock A contextual-bandit approach to personalized news article
  recommendation.
\newblock In \emph{WWW}, 2010.

\bibitem[Li et~al.(2017)Li, Lu, and Zhou]{li2017provably}
Lihong Li, Yu~Lu, and Dengyong Zhou.
\newblock Provably optimal algorithms for generalized linear contextual
  bandits.
\newblock In \emph{Proceedings of the 34th International Conference on Machine
  Learning, ICML 2017}, pages 2071--2080, 2017.

\bibitem[Munos(2014)]{munos2014bandits}
R{\'e}mi Munos.
\newblock \emph{From bandits to Monte-Carlo Tree Search: The optimistic
  principle applied to optimization and planning}.
\newblock 2014.

\bibitem[Rusmevichientong and Tsitsiklis(2010)]{rusmevichientong2010linearly}
Paat Rusmevichientong and John~N Tsitsiklis.
\newblock Linearly parameterized bandits.
\newblock \emph{Mathematics of Operations Research}, pages 395--411, 2010.

\bibitem[Russac et~al.(2019)Russac, Vernade, and Capp\'{e}]{russac2019weighted}
Yoan Russac, Claire Vernade, and Olivier Capp\'{e}.
\newblock Weighted linear bandits for non-stationary environments.
\newblock In \emph{Advances in Neural Information Processing Systems, NeurIPS
  2019}, pages 12017--12026, 2019.

\bibitem[Russac et~al.(2020)Russac, Capp{\'e}, and
  Garivier]{russac2020algorithms}
Yoan Russac, Olivier Capp{\'e}, and Aur{\'e}lien Garivier.
\newblock Algorithms for non-stationary generalized linear bandits.
\newblock \emph{arXiv preprint arXiv:2003.10113}, 2020.

\bibitem[Tewari and Murphy(2017)]{tewari2017ads}
Ambuj Tewari and Susan~A Murphy.
\newblock From ads to interventions: Contextual bandits in mobile health.
\newblock In \emph{Mobile Health}, pages 495--517. Springer, 2017.

\bibitem[Trovo et~al.(2020)Trovo, Paladino, Restelli, and
  Gatti]{trovo2020sliding}
Francesco Trovo, Stefano Paladino, Marcello Restelli, and Nicola Gatti.
\newblock Sliding-window thompson sampling for non-stationary settings.
\newblock \emph{Journal of Artificial Intelligence Research}, 68:\penalty0
  311--364, 2020.

\bibitem[Valko et~al.(2014)Valko, Munos, Kveton, and
  Koc{\'a}k]{valko2014spectral}
Michal Valko, R{\'e}mi Munos, Branislav Kveton, and Tom{\'a}{\v{s}} Koc{\'a}k.
\newblock Spectral bandits for smooth graph functions.
\newblock In \emph{Proceedings of the 31st International Conference on Machine
  Learning, ICML 2014}, pages 46--54, 2014.

\bibitem[Wei and Srivatsva(2018)]{wei2018abruptly}
Lai Wei and Vaibhav Srivatsva.
\newblock On abruptly-changing and slowly-varying multiarmed bandit problems.
\newblock In \emph{2018 Annual American Control Conference (ACC)}, pages
  6291--6296. IEEE, 2018.

\bibitem[Zhao et~al.(2020)Zhao, Zhang, Jiang, and Zhou]{zhao2020simple}
Peng Zhao, Lijun Zhang, Yuan Jiang, and Zhi-Hua Zhou.
\newblock A simple approach for non-stationary linear bandits.
\newblock In \emph{Proceedings of the 23rd International Conference on
  Artificial Intelligence and Statistics, AISTATS 2020}, 2020.

\bibitem[Zhou et~al.(2020)Zhou, Wang, Varshney, and Lim]{zhou2020near}
Huozhi Zhou, Lingda Wang, Lav~R Varshney, and Ee-Peng Lim.
\newblock A near-optimal change-detection based algorithm for
  piecewise-stationary combinatorial semi-bandits.
\newblock \emph{AAAI}, 2020.

\bibitem[Zhou et~al.(2019)Zhou, Xu, and Blanchet]{zhou2019learning}
Zhengyuan Zhou, Renyuan Xu, and Jose Blanchet.
\newblock Learning in generalized linear contextual bandits with stochastic
  delays.
\newblock In \emph{Advances in Neural Information Processing Systems, NeurIPS
  2019}, 2019.

\end{thebibliography}
\bibliographystyle{plainnat}

\clearpage
\newpage
\onecolumn
\appendix
\aistatstitle{Self-Concordant Analysis of Generalized Linear Bandits with
Forgetting: Supplementary Material}
\vspace{-5cm}
\appendixtrue
The Appendix is structured as follows.
In Section \ref{section:concentration_weights}, our new concentration
result for
self-normalized weighted martingales with time dependent
 regularization parameters is presented. In Section
\ref{subsec:application_weighted_n_s}, similar concentration results
 are established when a sliding window is used.
Section \ref{section:regret_D_appendix} studies the regret with discount factors
through our improved characterization of the MLE.
Section \ref{section:regret_SW_appendix} gives similar results with a sliding
 window.
Section \ref{section:useful_results} gathers some technical results,
 in particular the main properties resulting from the self-concordance assumption.
 Finally in Section \ref{sec:worst_case_K}, a worst case bound for a sliding window policy 
 in the $K$-arm setting is presented.

\section{TAIL-INEQUALITY FOR SELF-NORMALIZED WEIGHTED MARTINGALES}
\label{section:concentration_weights}
While keeping in mind our objective of obtaining a deviation inequality with 
exponentially increasing weights, we give more generic
 results under two assumptions on the weights.

\begin{ass}
\label{ass_T}
The time horizon $T$ is known in advance.
\end{ass}

\begin{ass}
\label{ass_increasing_weights}
The weights are deterministic, strictly positive and non-decreasing, i.e,
$$
\forall 1 \leq t \leq T, 0 < w_1 \leq w_t \leq w_{t+1} \leq w_T\;.
$$
\end{ass}

We recall the statement of the corresponding concentration result.

\instantaneousmain*

Theorem \ref{thm:instantaneous_main} is a
non-trivial extension of 
\citet[Theorem 1]{faury2020improved} 
allowing for the use of time-dependent regularization 
parameters and weights.
We now state several lemmas that are useful for establishing Theorem
\ref{thm:instantaneous_main}. 

\subsection{Useful Lemmas}
As a first step \textbf{we fix a time instant $t$}. Let $M_u^t(\xi)$ for $\xi \in
 \mathbb{R}^d$ and $1 \leq u \leq t$ be defined as
\begin{equation}
\label{eq:M_u}
M_u^t(\xi) = \exp\left(\frac{1}{\rM w_{t-1}} \, \xi^{\top} S_u - \frac{1}{ \rM^2 w^{2}_{t-1}}
 \xi^{\top} \wHH_u(0) \xi\right)\;,
\end{equation}
with  $S_u = \sum_{s=1}^{u-1} w_s \epsilon_{s+1} a_s $ and
$\wHH_u(0) = \sum_{s=1}^{u-1} w_s^2 \sigma_s^2
a_s a_s^{\top}$ where $\sigma_s^2 = \mathbb{E}
[\epsilon_{s+1}^2|
\mathcal{F}_s]$.

We prefer the notation $M_u^t$ to $M_u$ to clearly indicate the dependency on the
 weight $w_{t-1}$. When $u=t$, we prefer
the notation $M_t$ to $M_t^t$. For the entire appendix, we use the notation
$\mathcal{B}_2(d) = \{ a \in \mathbb{R}^d, \lVert a \rVert_2 \leq 1\}$.

\begin{lemma}
\label{lemma_instantaneous}
For all $\xi \in \mathcal{B}_2(d)$ and $2 \leq u \leq t$, under Assumption
\ref{ass_T} and \ref{ass_increasing_weights}, we have
$$
\mathbb{E} \left[ M_u^t(\xi) |\mathcal{F}_{u-1} \right] \leq M_{u-1}^t(\xi) \;. \quad
 \text{a.s}
$$
\end{lemma}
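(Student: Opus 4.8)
The plan is to establish the supermartingale property by a one-step telescoping argument. First I would write $S_u = S_{u-1} + w_{u-1}\epsilon_u a_{u-1}$ and $\wHH_u(0) = \wHH_{u-1}(0) + w_{u-1}^2\sigma_{u-1}^2\, a_{u-1}a_{u-1}^\top$ (recall that with $s=u-1$ the definition gives $\sigma_{u-1}^2 = \mathbb{E}[\epsilon_u^2\mid\mathcal{F}_{u-1}]$), so that \eqref{eq:M_u} factorizes as $M_u^t(\xi) = M_{u-1}^t(\xi)\, D_u$ with
$$
D_u := \exp\!\Big( c_u\, \epsilon_u - c_u^2\, \sigma_{u-1}^2 \Big), \qquad c_u := \frac{w_{u-1}}{\rM\, w_{t-1}}\, \xi^\top a_{u-1}\;.
$$
Since $a_{u-1}$ is $\mathcal{F}_{u-1}$-measurable, while $S_{u-1}$ and $\wHH_{u-1}(0)$ only involve $\epsilon_2,\dots,\epsilon_{u-1}$ and $a_1,\dots,a_{u-2}$, both $M_{u-1}^t(\xi)$ and $c_u$ are $\mathcal{F}_{u-1}$-measurable. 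It therefore suffices to prove that $\mathbb{E}[D_u\mid\mathcal{F}_{u-1}]\le 1$ a.s., after which pulling out the $\mathcal{F}_{u-1}$-measurable factor $M_{u-1}^t(\xi)$ concludes.

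For the conditional expectation, I would first bound the coefficient: by Cauchy--Schwarz with $\lVert\xi\rVert_2\le 1$ and $\lVert a_{u-1}\rVert_2\le 1$, together with the monotonicity of the weights (Assumption \ref{ass_increasing_weights}, which gives $w_{u-1}\le w_{t-1}$ as $u-1\le t-1$), one gets $|c_u|\le 1/\rM$, hence $|c_u\epsilon_u|\le 1$ a.s. using $|\epsilon_u|\le\rM$. Then the elementary inequality $e^y\le 1+y+y^2$, valid for all $|y|\le 1$, combined with the martingale-difference property $\mathbb{E}[\epsilon_u\mid\mathcal{F}_{u-1}]=0$ and the definition of $\sigma_{u-1}^2$, yields
$$
\mathbb{E}\big[e^{c_u\epsilon_u}\mid\mathcal{F}_{u-1}\big] \le 1 + c_u\,\mathbb{E}[\epsilon_u\mid\mathcal{F}_{u-1}] + c_u^2\,\mathbb{E}[\epsilon_u^2\mid\mathcal{F}_{u-1}] = 1 + c_u^2\sigma_{u-1}^2 \le e^{c_u^2\sigma_{u-1}^2}\;,
$$
the last step using $1+x\le e^x$. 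Dividing both sides by $e^{c_u^2\sigma_{u-1}^2}$ gives exactly $\mathbb{E}[D_u\mid\mathcal{F}_{u-1}]\le 1$.

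The computation itself is routine; the point that actually requires care — and the reason the statement normalizes by $w_{t-1}$ rather than a running weight — is the bound $|c_u|\le 1/\rM$, which relies on $w_{u-1}/w_{t-1}\le 1$. This is where Assumptions \ref{ass_T} and \ref{ass_increasing_weights} enter: the fixed ``terminal'' weight $w_{t-1}$ used throughout \eqref{eq:M_u} is well defined because the horizon is known, and monotonicity makes it an upper bound for all earlier weights. Finally, the restriction $u\ge 2$ is needed only so that $\epsilon_u$ is a genuine element of the martingale-difference sequence $\{\epsilon_u\}_{u=2}^{t}$; the base case $u=2$ is covered verbatim, since $S_1=0$, $\wHH_1(0)=0$, so $M_1^t(\xi)=1$ and the same factorization applies.
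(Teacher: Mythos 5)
Your proof is correct and follows essentially the same route as the paper: the one-step factorization $M_u^t(\xi)=M_{u-1}^t(\xi)D_u$, the bound $|c_u\epsilon_u|\le 1$ via $w_{u-1}\le w_{t-1}$ and Cauchy--Schwarz, and the chain $\mathbb{E}[e^{c_u\epsilon_u}\mid\mathcal{F}_{u-1}]\le 1+c_u^2\sigma_{u-1}^2\le e^{c_u^2\sigma_{u-1}^2}$. The only cosmetic difference is that you prove the second-moment bound inline via $e^y\le 1+y+y^2$ for $|y|\le 1$, whereas the paper delegates it to Lemma~\ref{lemma:upper_bound} (Lemma~7 of \citealp{faury2020improved}), which is the same inequality.
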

\begin{proof}
\begin{equation*}
\begin{split}
\mathbb{E} \left[ M_{u}^t(\xi) | \mathcal{F}_{u-1}\right] &= M_{u-1}^t(\xi) \exp\left(-
\frac{1}{\rM^2 w_{t-1}^2} \xi^{\top} w_{u-1}^2
 \sigma_{u-1}^2 a_{u-1} a_{u-1}^{\top} \xi\right) \\
 & \quad \times \mathbb{E} \left[ \exp\left(\frac{1}{\rM w_{t-1}} \xi^{\top} w_{u-1}
  \epsilon_{u} a_{u-1}
 \right) | \mathcal{F}_{u-1}\right]\;.
 \end{split}
\end{equation*}
The equality holds because $a_{u-1}$ is $\mathcal{F}_{u-1}$ measurable and $
\epsilon_{u-1}$ is $\mathcal{F}_{u-1}$
 measurable.
With $\tilde{\epsilon}_{u}= \epsilon_{u}/ \rM $ and $v =  \frac{w_{u-1}}{w_{t-1}}
 \xi^{\top} a_{u-1}$, the conditions of Lemma
\ref{lemma:upper_bound} (stated below) are met and we have,
\begin{equation*}
\mathbb{E} \left[ \exp\left(\frac{1}{\rM w_{t-1}} \xi^{\top} w_{u-1} \epsilon_{u} a_{u-1}
 \right) | \mathcal{F}_{u-1}\right] = \mathbb{E} \left[ \exp(v \tilde{\epsilon}_{u} )|
  \mathcal{F}_{u-1}\right]
 \leq 1 + \frac{v^2}{\rM^2} \sigma_{u-1}^2\;.
\end{equation*}
$|v| \leq 1$ holds because of Assumption \ref{ass_increasing_weights} and both  $\xi$
 and $a_{u-1} \in
 \mathcal{B}_2(d)$.
 Therefore,
 \begin{equation*}
 \begin{split}
 \mathbb{E} \left[ M_{u}^t(\xi) | \mathcal{F}_{u-1}\right] &\leq M_{u-1}^t(\xi) \exp\left(
 -\frac{1}{\rM^2 w_{t-1}^2} \xi^{\top} w_{u-1}^2
 \sigma_{u-1}^2 a_{u-1} a_{u-1}^{\top} \xi\right)\\
 & \quad \times \left( 1 + \frac{w_{u-1}^2}{\rM^2 w_{t-1}^2}  \sigma^2_{u-1} \xi^{\top}
  a_{u-1} a_{u-1}^{\top} \xi \right) \\
 &\leq M_{u-1}^t(\xi) \quad \text{(a.s)} \;,
 \end{split}
 \end{equation*}
 where the last inequality uses $1 + x  \leq \exp(x)$.
\end{proof}
Hence, for all $ 1\leq u \leq t $
and $\xi \in \mathcal{B}_2(d)$,
$\mathbb{E}\left[M_t(\xi)
\right] \leq  \mathbb{E}\left[M_u^t(\xi)
\right] \leq \mathbb{E}\left[M_1^t(\xi)
\right] = 1$.

For $ 1 \leq u \leq t$ we define,
\begin{equation}
\label{eq:bar_M_u}
\bar{M}_u^t = \int_{\xi} M_u^t(\xi) d h_u( \xi) \;.
\end{equation}
Here, $h_u$ is the density of an isotropic normal distribution of precision $\frac{2
 \lambda_{u-1} }{\rM^2 w_{t-1}^2}$ truncated on $\mathcal{B}_2(d)$. We will denote
  $N(h_u)$ its normalization constant.
\begin{lemma}
\label{lemma_int_instantaneous}
Let $t$ be a fixed time instant, for all $1 \leq u \leq t$, under assumptions \ref{ass_T}
 and \ref{ass_increasing_weights},
with $\{h_u\}_{u=1}^t$ the density of an isotropic normal distribution of precision $
\frac{2
\lambda_{u-1}}{\rM^2 w_{t-1}^2}$ truncated on $\mathcal{B}_2(d)$ we have,
$$
\mathbb{E}\left[ \bar{M}_u^t \right] \leq 1 \;.
$$
\end{lemma}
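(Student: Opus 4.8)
The plan is to reduce the statement to the pointwise supermartingale bound already established and then integrate over $\xi$. Recall that the discussion immediately following the proof of Lemma~\ref{lemma_instantaneous} gives, for every fixed $\xi \in \mathcal{B}_2(d)$ and every $1 \leq u \leq t$, the inequality $\mathbb{E}\left[ M_u^t(\xi) \right] \leq \mathbb{E}\left[ M_1^t(\xi) \right] = 1$ (obtained by iterating the conditional bound $\mathbb{E}[M_u^t(\xi)\mid\mathcal{F}_{u-1}] \leq M_{u-1}^t(\xi)$ and noting $M_1^t(\xi) = 1$ since $S_1 = 0$ and $\wHH_1(0) = 0$). The key observation is that $h_u$ is a \emph{probability} density supported on $\mathcal{B}_2(d)$ — precisely the set on which this pointwise bound is valid.

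First I would note that $M_u^t(\xi) \geq 0$ for all $\xi$, being an exponential, and that $(\omega,\xi) \mapsto M_u^t(\xi)(\omega)$ is jointly measurable (it is a continuous function of $\xi$ for each fixed $\omega$ and measurable in $\omega$ for each fixed $\xi$). This nonnegativity and joint measurability are exactly what is needed to invoke Tonelli's theorem, allowing the interchange of the expectation $\mathbb{E}[\cdot]$ and the integral $\int_\xi \cdot \, dh_u(\xi)$ with no integrability hypothesis to check beforehand:
\begin{equation*}
\mathbb{E}\left[ \bar{M}_u^t \right]
= \mathbb{E}\left[ \int_{\mathcal{B}_2(d)} M_u^t(\xi)\, dh_u(\xi) \right]
= \int_{\mathcal{B}_2(d)} \mathbb{E}\left[ M_u^t(\xi) \right] dh_u(\xi)\;.
\end{equation*}

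Then I would simply bound the integrand using the pointwise estimate $\mathbb{E}[M_u^t(\xi)] \leq 1$, valid for all $\xi \in \mathcal{B}_2(d)$, to conclude
\begin{equation*}
\mathbb{E}\left[ \bar{M}_u^t \right]
\leq \int_{\mathcal{B}_2(d)} 1 \, dh_u(\xi) = 1\;,
\end{equation*}
the last equality because $h_u$ integrates to one on $\mathcal{B}_2(d)$ by construction (it is the isotropic normal density truncated to $\mathcal{B}_2(d)$ and renormalized by $N(h_u)$). This completes the argument. There is essentially no hard step here: the only point that deserves care is the justification of the Fubini–Tonelli interchange, which is painless thanks to nonnegativity, and the fact that the truncation of $h_u$ to $\mathcal{B}_2(d)$ is exactly aligned with the domain of validity of the bound from Lemma~\ref{lemma_instantaneous}.
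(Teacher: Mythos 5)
Your proof is correct and follows essentially the same route as the paper: interchange the expectation and the integral over $\xi$ (the paper invokes Fubini, you justify it via Tonelli using nonnegativity, which is a slightly cleaner justification), then apply the pointwise bound $\mathbb{E}[M_u^t(\xi)]\leq 1$ from Lemma~\ref{lemma_instantaneous} on $\mathcal{B}_2(d)$, and conclude since $h_u$ is a probability density supported there.
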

\begin{proof}
\begin{align*}
    \mathbb{E} \left[\bar{M}_u^t \right] &= \int_{\Omega}  \bar{M}_u^t  d \mathbb{P}(w)
    = \int_{\Omega} \left( \int_{\mathbb{R}^d}  M_u^t(\xi) d h_u(\xi) \right) d\mathbb{P}
    (w) \\
    &\leq \int_{\mathbb{R}^d} \left( \int_{\Omega} M_{u}^t(\xi)  d\mathbb{P}(w) \right)d
     h_u(\xi) \quad \text{(Fubini)} \\
    &\leq \int_{\mathbb{R}^d} \left( \int_{\Omega}  1  d\mathbb{P}(w) \right) d h_u(\xi)
     \quad \text{(Lemma \ref{lemma_instantaneous} + $h_u$ defined on $\mathcal{B}
     _2(d)$)} \\
    &\leq \int_{\mathbb{R}^d} d h_u(\xi) = 1\;. \quad \text{($h_u$ is a probability density
     function)}
\end{align*}
\end{proof}

\begin{rem}
Allowing time-dependent regularization parameters  is
essential in our analysis to avoid the vanishing effect
of the regularization with exponentially increasing weights
for example.
This is a fundamental difference with the deviation result provided in
 \citet{faury2020improved}. Furthermore, allowing
 the regularization parameters to be time-dependent comes at a cost here, we loose
 the property
$\mathbb{E}\left[ \bar{M}_u^t  | \mathcal{F}_{u-1} \right] \leq  \bar{M}_{u-1}^t $ that
 would hold with a
fixed regularization parameter (as in \cite{faury2020improved}).
In the linear bandit setting, this issue was discussed in Lemma 2 in
 \citet{russac2019weighted}.
  \end{rem}
In particular, applying Lemma \ref{lemma_int_instantaneous} for $u=t$ gives,
\begin{equation}
\label{eq_bar_t}
\mathbb{E}\left[ \bar{M}_t \right] = \mathbb{E}\left[ \bar{M}_t^t \right] \leq 1 \;.
\end{equation}
\begin{lemma}[Lemma 7 of \citet{faury2020improved}]
\label{lemma:upper_bound}
Let $\varepsilon$ be a centered random variable of variance $\sigma^2$ and such
 that $| \varepsilon | \leq 1$ almost surely. Then for all $v \in [-1,1]$,
$$
\mathbb{E} \left[ \exp(v \varepsilon) \right] \leq 1 + v^2 \sigma^2\;.
$$
\end{lemma}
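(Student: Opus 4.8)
The plan is to reduce this probabilistic statement to a single deterministic pointwise inequality. The key observation is that since $|v| \le 1$ and $|\varepsilon| \le 1$ almost surely, the product $v\varepsilon$ lies in $[-1,1]$ almost surely; hence it suffices to establish the scalar bound $e^x \le 1 + x + x^2$ for every $x \in [-1,1]$, to apply it with $x = v\varepsilon$, and then to take expectations using that $\varepsilon$ is centered with $\mathbb{E}[\varepsilon^2] = \sigma^2$. Once the scalar inequality is available, one gets $\mathbb{E}[e^{v\varepsilon}] \le \mathbb{E}[1 + v\varepsilon + v^2\varepsilon^2] = 1 + v\,\mathbb{E}[\varepsilon] + v^2\,\mathbb{E}[\varepsilon^2] = 1 + v^2\sigma^2$, which is exactly the claim. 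So all the content sits in the scalar inequality.

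To prove $e^x \le 1 + x + x^2$ on $[-1,1]$, I would split on the sign of $x$. For $x \in [0,1]$, expanding the exponential series gives $e^x - 1 - x = \sum_{k \ge 2} x^k/k! = x^2 \sum_{k \ge 2} x^{k-2}/k! \le x^2 \sum_{k \ge 2} 1/k! = (e-2)x^2 \le x^2$, the last step using $e \le 3$. For $x \in [-1,0)$, write $x = -y$ with $y \in (0,1]$: the series $e^{-y} = \sum_{k \ge 0} (-1)^k y^k/k!$ is alternating with terms $y^k/k!$ non-increasing in $k$ (the ratio of consecutive terms being $y/(k+1) \le 1$), so truncating after an even-indexed term over-estimates the sum, giving $e^{-y} \le 1 - y + y^2/2$, i.e.\ $e^x \le 1 + x + \tfrac12 x^2 \le 1 + x + x^2$. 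Combining the two cases closes the scalar inequality, and hence the lemma.

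There is essentially no deep obstacle here; the only point requiring a little care is securing the constant $1$ in front of $x^2$ uniformly over the whole interval $[-1,1]$. A crude Taylor-with-remainder estimate would only yield $e^x \le 1 + x + \tfrac{e}{2}x^2$ for $x \le 1$, which is too weak to produce the exact factor $v^2\sigma^2$; this is precisely why I would split according to the sign of $x$ and exploit the alternating-series structure on the negative side, where the quadratic term even comes with an extra factor $\tfrac12$ to spare.
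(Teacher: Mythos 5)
Your proof is correct, and it follows what is essentially the only natural route: establish the pointwise bound $e^{x}\leq 1+x+x^{2}$ for $x\in[-1,1]$, substitute $x=v\varepsilon$, and take expectations using $\mathbb{E}[\varepsilon]=0$ and $\mathbb{E}[\varepsilon^{2}]=\sigma^{2}$; this is the same argument as in the cited source (the present paper does not reprove the lemma, it imports it from Faury et al.\ 2020). The only cosmetic difference is that your case split on the sign of $x$ is unnecessary: for any $|x|\leq 1$ one has $e^{x}-1-x=x^{2}\sum_{k\geq 2}x^{k-2}/k!\leq x^{2}\sum_{k\geq 2}|x|^{k-2}/k!\leq (e-2)x^{2}\leq x^{2}$ directly, which handles both signs at once.
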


\begin{rem}
\label{rem:instantaneous}
We stress out that $v \in [-1,1]$ is required for Lemma \ref{lemma:upper_bound} to
 hold. It has
 strong consequences
 in our setting with the weights as the normalization $1/w_{t-1}
 $ and $1/w_{t-1}^2$ in the definition of
  $M_u^t\;$ are needed to ensure that $v = (w_{u-1}/w_{t-1}) \xi^{\top}
   a_u$ that
   appears in the proof of Lemma
  \ref{lemma_instantaneous} will be smaller than 1.
 As
a consequence,
 the stopping trick presented in
   \cite{abbasi2011improved} can not be applied
  to $\bar{M}_u^t$ because of its dependency on $t$. For this reason, the deviation
    result presented in Theorem \ref{thm:instantaneous_main} is only valid for a fixed time
     instant $t$. To obtain a deviation
     result
  on the entire trajectory an union bound is required. 
\end{rem}
\subsection{Proof of Theorem \ref{thm:instantaneous_main}}
\label{subsection:proof_th}

The proof of this theorem follows the line of proof of \cite{faury2020improved}. The
 main differences are the time-dependent
regularization parameters and the presence of weights.
We recall that in Equation \eqref{eq:bar_M_u} $h_t$ is the density of an isotropic
 normal distribution of precision $\frac{2
 \lambda_{t-1}}{\rM^2 w_{t-1}^2}$ truncated on $\mathcal{B}_2(d)$ and denote
  $N(h_t)$ its normalization constant.

The following holds,
\begin{align}
\bar{M}_t = \frac{1}{N(h_t)} \int_{\mathbb{R}^d}  \mathds{1}\left[  \xi \in \mathcal{B}
_2(d)\right]
\exp\left( \frac{1}{\rM w_{t-1}} \xi^{\top} S_t - \frac{1}{\rM^2 w_{t-1}^2} \xi^{\top}
 \wHH_t \xi\right) d \xi \;.
\end{align}

Let $f_t : \mathbb{R}^d \mapsto \mathbb{R}$ be defined as $f_t(\xi) = \frac{1}{\rM
 w_{t-1}} \xi^{\top} S_t - \frac{1}{\rM^2
 w_{t-1}^2}
 \xi^{\top} \wHH_t \xi$. As a quadratic function, $f_t$ can be rewritten
  for $\xi^{\star}  = \argmax_{ \lVert \xi
  \rVert_2 \leq 1/2} f_t(\xi)$,
 $$
 f_t(\xi) = f_t(\xi^{\star}) + \nabla f_t(\xi^{\star})^{\top} (\xi- \xi^{\star}) + \frac{1}{2} (\xi -
 \xi^{\star})^{\top} \nabla^2 f_t (\xi^{\star}) (\xi - \xi^{\star})\;.
 $$
 Using  $ \forall \xi  \in \mathcal{B}_2(d), \, \,\nabla^2 f_t (\xi) = -\frac{2}{\rM^2 w_{t-1}
 ^2} \wHH_t$,
 \begin{align*}
 \bar{M}_t &= \frac{e^{f_t(\xi^{\star})}}{N(h_t)} \int_{\mathbb{R}^d}
\mathds{1} \left[ \lVert \xi \rVert_2 \leq 1 \right]\exp\left( \nabla f_t(\xi^{\star})^{\top}
(\xi - \xi^{\star}) - \frac{1}{\rM^2 w_{t-1}^2} \lVert \xi - \xi^{\star}
 \rVert_{\wHH_t}
^2
\right) d \xi \\
& = \frac{e^{f_t(\xi^{\star})}}{N(h_t)} \int_{\mathbb{R}^d}
\mathds{1} \left[ \lVert \xi  + \xi^{\star} \rVert_2 \leq 1 \right]\exp\left( \nabla
 f_t(\xi^{\star})^{\top}
\xi - \frac{1}{ \rM^2 w_{t-1}^2} \lVert \xi \rVert_{\wHH_t}
^2
\right) d \xi \\
& \geq \frac{e^{f_t(\xi^{\star})}}{N(h_t)} \int_{\mathbb{R}^d}
\mathds{1} \left[ \lVert \xi \rVert_2 \leq 1/2 \right]\exp\left( \nabla f_t(\xi^{\star})^{\top}
\xi - \frac{1}{\rM^2 w_{t-1}^2} \lVert \xi \rVert_{\wHH_t}
^2
\right) d \xi  \\
& \geq \frac{e^{f_t(\xi^{\star})} N(g_t)}{N(h_t)} \mathbb{E}_{\xi \sim g_t} \left[
 \exp\left(\nabla f_t(\xi^{\star})^{\top} \xi \right) \right] \;.
 \end{align*}

The second equality is obtained after a change of variable $\xi \mapsto \xi - \xi^{\star}$. In the
 last inequality, $g_t$ is the density of a $d$-dimensional normal
distribution with precision matrix $\frac{2}{\rM^2 w_{t-1}^2} \wHH_t$ truncated
on $\{ a \in \mathbb{R}^d, \lVert a \rVert_2 \leq 1/2\}$.
\begin{align*}
 \bar{M}_t & \geq \frac{e^{f_t(\xi^{\star})} N(g_t)}{N(h_t)} \exp\left( \mathbb{E}_{\xi \sim
  g_t} \left[
 \nabla f_t(\xi^{\star})^{\top} \xi \right]   \right)\;. \quad \textnormal{(Jensen's
  inequality)}
\end{align*}
$g_t$ is symmetric which implies $\mathbb{E}_{\xi \sim g_t} \left[\xi \right] = 0$.
Hence,
\begin{equation}
\label{eq:bar_M_t_lower}
 \bar{M}_t \geq \frac{e^{f_t(\xi^{\star})} N(g_t)}{N(h_t)} \;.
\end{equation}
Therefore,
\begin{align*}
\delta &\geq \mathbb{P} \left(\bar{M}_t \geq \frac{1}{\delta} \right) \quad
\text{(Equation \eqref{eq_bar_t} + Markov's Inequality)}  \\
&\geq  \mathbb{P} \left( f_t(\xi^{\star}) \geq
\log \left(\frac{1}{\delta } \right) +
\log \left(\frac{N(h_{t})}{N(g_{t})} \right) \right) \quad \textnormal{(Equation $
\eqref{eq:bar_M_t_lower}$)} \\
&=  \mathbb{P} \left( \max_{ \lVert \xi \rVert_2 \leq
1/2} f_t(\xi) \geq
\log \left(\frac{1}{\delta } \right) +
\log \left(\frac{N(h_{t})}{N(g_{t})} \right) \right) \\
&\geq  \mathbb{P} \left(f_t(\xi_0) \geq
\log \left(\frac{1}{\delta } \right) +
\log \left(\frac{N(h_{t})}{N(g_{t})} \right) \right)\;.
\end{align*}
In the last inequality $\xi_0$ is defined as $\xi_0 = \frac{\sqrt{\lambda_t}}{2} \frac{
\wHH_t^{-1} S_t}{\lVert S_t
\rVert_{\wHH_t^{-1}}}$, such that $\lVert \xi_0
\rVert_2 \leq 1/2$ holds. This can be seen by using $
\wHH_t  \geq \lambda_{t-1} \identity{d}$.
We also have,
$$
f_t(\xi_0) = \frac{1}{\rM w_{t-1}} \xi_0^{\top} S_t - \frac{1}{\rM^2 w_{t-1}^2} \xi_0^{\top}
\wHH_t \xi_0 = \frac{\sqrt{\lambda_{t-1} }}{2 \rM w_{t-1}}\lVert S_t
\rVert_{\wHH_t^{-1}} - \frac{\lambda_{t-1}}{4 \rM^2 w_{t-1}^2}\;.
$$
Therefore,
\begin{equation}
\mathbb{P} \left( \lVert S_t \rVert_{\wHH_t^{-1}} \geq
\frac{\sqrt{\lambda_{t-1} }}{2 \rM w_{t-1}} +  \frac{2 \rM w_{t-1}}{\sqrt{\lambda_{t-1}}}
\log(1
/\delta) +
\frac{2 \rM w_{t-1}}{\sqrt{\lambda_{t-1}}}\log \left(\frac{N(h_{t})}{N(g_{t})} \right)
\right) \leq \delta\;.
\end{equation}

We conclude using Proposition \ref{prop:upper_h_t_g_t}.
\begin{prop}
\label{prop:upper_h_t_g_t}
Let $h_t$ be the density of a $d$-dimensional isotropic normal distribution of
 precision $\frac{2 \lambda_{t-1}}{\rM^2 w_{t-1}^2}$ truncated on $\mathcal{B}_2(d)$.
Let $g_t$ be the density of a $d$-dimensional normal distribution with
 precision matrix $\frac{2}{\rM^2 w_{t-1}^2} \wHH_t$ truncated on $\{a \in
  \mathbb{R}^d, \lVert a \rVert_2 \leq 1/2 \}$.
The following inequality holds,
\begin{equation}
\log \left( \frac{N(h_{t})}{N(g_{t})}\right) \leq \log \left(
 \frac{\det(\wHH_t)}{
 \lambda_{t-1}^{d/2}}
\right) + d\log(2) \;.
\end{equation}
\end{prop}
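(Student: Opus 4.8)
The plan is to rewrite the ratio $N(h_t)/N(g_t)$ as the product of a purely algebraic factor coming from the determinant of a linear change of variables, times a ratio of two \emph{spherical} Gaussian integrals over Euclidean balls, and then to bound the latter by $2^d$ through a crude rescaling. Write $c := 1/(\rM^2 w_{t-1}^2)$. Unfolding the truncated Gaussian densities (recall that $h_t$ has precision $2c\lambda_{t-1}\identity{d}$ and $g_t$ has precision $2c\wHH_t$),
\begin{align*}
N(h_t) = \int_{\mathcal{B}_2(d)} e^{-c\lambda_{t-1}\lVert\xi\rVert^2}\,d\xi,
\qquad
N(g_t) = \int_{\lVert\xi\rVert\le 1/2} e^{-c\,\xi^\top\wHH_t\xi}\,d\xi .
\end{align*}

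For the numerator I would substitute $v = \sqrt{c\lambda_{t-1}}\,\xi$, giving $N(h_t) = (c\lambda_{t-1})^{-d/2}\int_{\lVert v\rVert\le R} e^{-\lVert v\rVert^2}\,dv$ with $R := \sqrt{c\lambda_{t-1}}$. For the denominator I would substitute $u = \sqrt{c}\,\wHH_t^{1/2}\xi$, whose Jacobian determinant is $c^{-d/2}\det(\wHH_t)^{-1/2}$ and which turns the quadratic form into $\lVert u\rVert^2$; thus $N(g_t) = c^{-d/2}\det(\wHH_t)^{-1/2}\int_{E}e^{-\lVert u\rVert^2}\,du$, where $E$ is the image of $\{\lVert\xi\rVert\le 1/2\}$, an ellipsoid. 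The key observation is that $\wHH_t\succeq\lambda_{t-1}\identity{d}$ forces $\lVert\wHH_t^{-1/2}u\rVert\le\lambda_{t-1}^{-1/2}\lVert u\rVert$, so that $E$ contains the ball $\{\lVert u\rVert\le R/2\}$; hence $N(g_t)\ge c^{-d/2}\det(\wHH_t)^{-1/2}\int_{\lVert u\rVert\le R/2}e^{-\lVert u\rVert^2}\,du$.

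Dividing, the powers of $c$ cancel and one is left with
\begin{align*}
\frac{N(h_t)}{N(g_t)} \;\le\; \frac{\det(\wHH_t)^{1/2}}{\lambda_{t-1}^{d/2}}\cdot
\frac{\int_{\lVert v\rVert\le R}e^{-\lVert v\rVert^2}\,dv}{\int_{\lVert u\rVert\le R/2}e^{-\lVert u\rVert^2}\,du}.
\end{align*}
The remaining ratio is at most $2^d$ for every $R$: substituting $v=2u$ in the numerator yields $\int_{\lVert v\rVert\le R}e^{-\lVert v\rVert^2}\,dv = 2^d\int_{\lVert u\rVert\le R/2}e^{-4\lVert u\rVert^2}\,du\le 2^d\int_{\lVert u\rVert\le R/2}e^{-\lVert u\rVert^2}\,du$. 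Taking logarithms gives the claim, the factor $\det(\wHH_t)^{1/2}$ being precisely the determinant term that feeds into Theorem~\ref{thm:instantaneous_main}.

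The step I expect to require the most care is the second one: after the anisotropic rescaling by $\wHH_t^{1/2}$, one must check that the transformed truncation region for $g_t$ still contains a Euclidean ball of radius exactly half of the one appearing for $h_t$ — this is exactly what the regularization lower bound $\wHH_t\succeq\lambda_{t-1}\identity{d}$ together with the $1/2$-truncation built into $g_t$ provide. Everything else reduces to elementary manipulations of Gaussian integrals and determinants.
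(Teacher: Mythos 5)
Your proof is correct and follows essentially the same route as the paper's: rescale both truncated Gaussians to standard spherical form, use $\wHH_t \succeq \lambda_{t-1}\identity{d}$ to check that the transformed truncation region for $g_t$ contains a ball of half the radius of the one for $h_t$, and bound the resulting ratio of ball integrals by $2^d$ (the paper delegates this last step to Lemma~6 of \citet{faury2020improved}, while you prove it directly via the substitution $v=2u$). Note that you obtain $\det(\wHH_t)^{1/2}$ where the proposition's display shows $\det(\wHH_t)$; your exponent is the correct one and matches the $\det(\widetilde{\HH}_t)^{1/2}$ appearing in Theorem~\ref{thm:instantaneous_main}, so the missing square root in the statement is a typo rather than a gap in your argument.
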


\begin{proof}
\begin{align*}
N(h_t) &= \int_{\mathbb{R}^d} \mathds{1} \left[ \lVert \xi \rVert_2 \leq 1
\right] \exp \left(- \frac{1}{2} \frac{2 \lambda_{t-1}}{\rM^2 w_{t-1}^2} \lVert \xi
 \rVert_2^2 \right)
d \xi  \\
&= \left(\frac{m^2 w_{t-1}^2}{2 \lambda_{t-1}}\right)^{d/2} \int_{\mathbb{R}^d}
 \mathds{1} \left[ \lVert \xi \rVert_2 \leq \frac{\sqrt{2 \lambda_{t-1} } }{\rM w_{t-1} }
\right] \exp \left(- \frac{1}{2} \lVert \xi \rVert_2^2 \right)
d \xi \;.
\end{align*}

\begin{align*}
    N(g_t) &= \int_{\mathbb{R}^d} \mathds{1} \left[ \lVert \xi \rVert_2 \leq
    1/2 \right] \exp \left(- \frac{1}{2} \frac{2}{\rM^2 w_{t-1}^2} \xi^{\top}
    \wHH_t \xi \right) d \xi \\
    &= \frac{1}{\left|\det\left(\frac{\sqrt{2}}{\rM w_{t-1} } \wHH_t^{1/2}
    \right) \right| }  \int_{\mathbb{R}^d} \mathds{1} \left[
    \lVert \xi \rVert_2 \leq \frac{1}{2} \frac{\sqrt{2 \lambda_{t-1}}}{\rM w_{t-1}} \right]
     \exp \left(- \frac{1}{2} \lVert \xi
    \rVert_2^2\right)
    d \xi   \\
    & \geq \left(\frac{\rM^2 w_{t-1}^2}{2} \right)^{d/2}
    \det(\wHH_t)^{-1/2} \int_{\mathbb{R}^d} \mathds{1} \left[
    \lVert \xi \rVert_2 \leq \frac{1}{2} \frac{\sqrt{2 \lambda_{t-1} }}{\rM w_{t-1} } \right]
     \exp \left(- \frac{1}{2} \lVert \xi
    \rVert_2^2\right)
    d \xi \;.
\end{align*}
Therefore,
\begin{equation}
\label{eq:ball-d}
    \frac{N(h_t)}{N(g_t)} \leq \frac{\det(\wHH_t)}{\lambda_{t-1}^{d/2}}
    \underbrace{\frac{\int_{\mathbb{R}^d} \mathds{1} \left[ \lVert \xi \rVert_2 \leq
     \frac{\sqrt{2
     \lambda_{t-1} } }{\rM w_{t-1} }
\right] \exp \left(- \frac{1}{2} \lVert \xi \rVert_2^2 \right)
d \xi }{\int_{\mathbb{R}^d} \mathds{1} \left[ \lVert \xi \rVert_2 \leq \frac{1}{2}
\frac{\sqrt{2 \lambda_{t-1} } }{\rM w_{t-1} }
\right] \exp \left(- \frac{1}{2} \lVert \xi \rVert_2^2 \right)
d \xi }}_{R}\;.
\end{equation}
The last step consists in upper bounding
the ratio of the integrals $R$. Following, 
\cite[Lemma 6]{faury2020improved}, one gets $R = 2^d$.


We conclude by using this equality in Equation \eqref{eq:ball-d} and applying
 the
 logarithm on both sides.
\end{proof}

\subsection{A Unifying Concentration Result for Discount Factors and Sliding-Window}
 \label{subsec:application_weighted_n_s}

In this section, we explain how Theorem \ref{thm:instantaneous_main} can be used
with self-concordant GLBs to obtain a concentration inequality that encapsulates the analysis for both discount-factors and the sliding-window.


Up to now, we have stated the results in the most generic way. Actually, in our analysis
 we will use a weaker version of the concentration inequality 
 established in Theorem \ref{thm:instantaneous_main}. 
 
\begin{thm}
\label{thm:instantaneous_extension}
Let $t$ be a fixed time instant. Let $\{\mathcal{F}_u\}_{u=1}^t$ be a
filtration. Let $ \{a_u\}_{u=1}^t$ be a stochastic process on $\mathbb{R}^d$
such that $a_u$ is $\mathcal{F}_u$ measurable and $\lVert a_u\rVert_2 \leq 1$. Let
$\{\epsilon_u\}_{u=2}^{t}$ be a martingale difference sequence such
that $\epsilon_{u+1}$ is $\mathcal{F}_{u+1}$ measurable. Assume that
the weights are non-decreasing, positive and the time horizon
is known.  Furthermore, assume that conditionally on $\mathcal{F}_u$
we have $|\epsilon_{u+1}| \leq \rM$ a.s.  Let
$ \{\lambda_u \}_{u=1}^t$ be a deterministic sequence of
regularization terms and denote
$\sigma_t^2 = \mathbb{E} \left[ \epsilon_{t+1}^2 | \mathcal{F}_t
\right]$.

 Let $\wHH_{\tot} = \sum_{s=t- t_0}^{t-1} w_s^2 \sigma_s^2 a_s a_s^{\top} +
 \lambda_{t-1} \identity{d}$ and
 $S_{\tot} = \sum_{s=t-t_0}^{t-1} w_s \epsilon_{s+1} a_s$.

 Then for any $\delta \in (0,1]$,
\begin{align*}
\mathbb{P} \left( \left\lVert S_{\tot}\right\rVert_{\wHH_{\tot}^{-1}}
 \geq \frac{\sqrt{\lambda_{t-1}}} {2 \rM w_{t-1}} + \frac{2 \rM w_{t-1}
  }{\sqrt{\lambda_{t-1}} }\log\left(\frac{\det(\wHH_{\tot})^{1/2}}{
   \delta  \lambda_{t-1}^{d/2}}\right)+\frac{2 \rM w_{t-1}}{\sqrt{\lambda_{ t-1} }} d\log(2)
    \right) \leq \delta\;.
\end{align*}
\end{thm}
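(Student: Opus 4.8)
The plan is to obtain Theorem~\ref{thm:instantaneous_extension} as a direct corollary of Theorem~\ref{thm:instantaneous_main} by translating the origin of time to the start of the window. Fix $t$ and $t_0$ (assuming, as intended, $t - t_0 \geq 1$) and introduce the shifted objects, for $1 \le v \le t_0+1$,
\[
\mathcal{G}_v = \mathcal{F}_{t-t_0-1+v}, \quad a'_v = a_{t-t_0-1+v}, \quad \epsilon'_v = \epsilon_{t-t_0-1+v}, \quad w'_v = w_{t-t_0-1+v}, \quad \lambda'_v = \lambda_{t-t_0-1+v}.
\]
With this relabelling, the partial sum $\sum_{s=1}^{t_0}(\cdot)$ in the primed indices equals $\sum_{s=t-t_0}^{t-1}(\cdot)$ in the original ones, so that $S'_{t_0+1} := \sum_{s=1}^{t_0} w'_s \epsilon'_{s+1} a'_s = S_{\tot}$, and since $\mathbb{E}[(\epsilon'_{s+1})^2 \mid \mathcal{G}_s] = \sigma_{t-t_0-1+s}^2$, also $\widetilde{\HH}'_{t_0+1} := \sum_{s=1}^{t_0}(w'_s)^2(\sigma'_s)^2 a'_s (a'_s)^\top + \lambda'_{t_0}\identity{d} = \wHH_{\tot}$, using $\lambda'_{t_0}=\lambda_{t-1}$ and $w'_{t_0}=w_{t-1}$.

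Next I would check that the primed data satisfy the hypotheses of Theorem~\ref{thm:instantaneous_main}: $a'_v$ is $\mathcal{G}_v$-measurable with $\lVert a'_v\rVert_2\le 1$; $\{\epsilon'_v\}$ is a martingale difference sequence adapted to $\{\mathcal{G}_v\}$ with $|\epsilon'_{v+1}|\le\rM$ a.s.; the $\lambda'_v$ are deterministic; and, crucially, the weights $w'_v$ are still non-decreasing and strictly positive, because any suffix of a non-decreasing strictly positive sequence retains both properties (this is exactly where the monotonicity assumption~\ref{ass_increasing_weights} is used, to control the $|v|\le 1$ step inside Lemma~\ref{lemma_instantaneous}). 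Applying Theorem~\ref{thm:instantaneous_main} at the fixed instant $v = t_0+1$ then yields, with probability at most $\delta$,
\[
\lVert S_{\tot}\rVert_{\wHH_{\tot}^{-1}} \ge \frac{\sqrt{\lambda_{t-1}}}{2\rM w_{t-1}} + \frac{2\rM w_{t-1}}{\sqrt{\lambda_{t-1}}}\log\left(\frac{\det(\wHH_{\tot})^{1/2}}{\delta\,\lambda_{t-1}^{d/2}}\right) + \frac{2\rM w_{t-1}}{\sqrt{\lambda_{t-1}}}\,d\log(2),
\]
which is precisely the claimed bound; the $\lambda_{t-1}^{d/2}$ inside the logarithm is inherited from the estimate of $\log(N(h)/N(g))$ supplied by Proposition~\ref{prop:upper_h_t_g_t}, applied in the shifted problem whose terminal regularization is $\lambda'_{t_0}=\lambda_{t-1}$.

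Equivalently — and this is essentially the computation certifying that the reduction is legitimate — one can simply rerun the construction of Section~\ref{subsection:proof_th} with the running index $u$ restricted to $\{t-t_0,\dots,t\}$: define $M_u^t(\xi)$ and $\bar M_u^t$ as in \eqref{eq:M_u}--\eqref{eq:bar_M_u} but with every partial sum starting at $s=t-t_0$, note that $M_{t-t_0}^t(\xi)\equiv 1$ because that sum is empty, so Lemmas~\ref{lemma_instantaneous} and~\ref{lemma_int_instantaneous} still give $\mathbb{E}[\bar M_t]\le 1$, and conclude verbatim via the quadratic expansion of $f_t$, Jensen's inequality, Markov's inequality and Proposition~\ref{prop:upper_h_t_g_t}. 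I do not expect a genuine obstacle here: the argument is the one already carried out for Theorem~\ref{thm:instantaneous_main}, and the only points needing care are bookkeeping — that non-decreasingness transfers to the window (it does, being a suffix) and that the terminal weight and regularization match $w_{t-1}$ and $\lambda_{t-1}$. The one nominal difference, that the assumption is relaxed to ``positive'' rather than ``strictly positive'' weights, is immaterial: if $w_{t-1}=0$ then monotonicity and non-negativity force $w_s=0$ for all $s\le t-1$, so $S_{\tot}=0$ while the right-hand side is $+\infty$ and the inequality holds vacuously, and otherwise $w_{t-1}>0$ and the reduction above applies unchanged.
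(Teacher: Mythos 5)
Your proposal is correct and follows essentially the same route as the paper: the paper's own proof simply redefines $M_u^t(\xi)$ with all partial sums starting at $s=t-t_0$ and reruns the argument of Theorem~\ref{thm:instantaneous_main}, which is exactly your second formulation, and your first formulation (relabelling the window as a fresh problem on indices $1,\dots,t_0+1$ so that the terminal weight and regularization become $w_{t-1}$ and $\lambda_{t-1}$) is an equivalent packaging of the same reduction. The only additions you make — checking that a suffix of a non-decreasing positive weight sequence still satisfies Assumption~\ref{ass_increasing_weights}, and handling the degenerate all-zero-weights case — are correct and harmless.
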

\begin{proof}
The arguments used to establish Theorem \ref{thm:instantaneous_extension} are
the same than for Theorem \ref{thm:instantaneous_main}. We only give the main term 
that differs from the proof of Theorem \ref{thm:instantaneous_main}.

With $t$ a fixed time instant,
for any $u$
such that $ t- t_0 \leq u
\leq t$,
$M_u^t$ is defined as
$$
M_u^t(\xi) = \exp\left(\frac{1}{\rM w_{t-1}} \, \xi^{\top} S_{t-t_0:u} -
 \frac{1}{ \rM^2 w_{t-1}^2 }
 \xi^{\top} \sum_{s=t-t_0}^{u-1} w_s^2 a_s a_s^\top \xi\right)\;,
 $$
with $S_{t-t_0:u} = \sum_{s=t-t_0}^{u-1} w_s \epsilon_{s+1} a_s$.
Following the steps of the proof of Theorem \ref{thm:instantaneous_main} with these slight 
differences gives the result.
\end{proof}

\paragraph{Discount Factors} Let $t_0=D$ be the equivalent of the sliding window
length with exponential weights,
$w_t = \gamma^{-
t}$ and $\lambda_t = \lambda \gamma^{-2t}$ for
$0 < \gamma < 1$. Even when
$\gamma$ depends on $T$, the weights satisfy the assumptions \ref{ass_T} and
\ref{ass_increasing_weights}. We can obtain:

\begin{cor}[Concentration result with discount factors]
\label{cor:concentration_Discounts}
Under the same assumption than Theorem \ref{thm:instantaneous_extension},
when defining $\wHH_{\tD} = \sum_{s= t- D}^{t-1} \gamma^{2(t-1-s)} \dot{\mu}(a_s^{\top} \theta^{\star}_s)
 a_s a_s^{\top} + \lambda  \identity{d}$ and $S_{\tD}= \sum_{s=t-D}^{t-1} \gamma^{-s} \epsilon_{s+1} a_s$.
For any $\delta \in (0,1]$,
\begin{align*}
\mathbb{P} \left( \left\lVert \gamma^{t-1} S_{\tD}\right\rVert_{\wHH_{\tD}^{-1}}
 \geq \frac{\sqrt{\lambda}} {2 \rM} + \frac{2 \rM
  }{\sqrt{\lambda} }\log\left(\frac{\det(\wHH_{\tD})^{1/2}}{
   \delta  \lambda^{d/2}}\right)+\frac{2 \rM}{\sqrt{\lambda}} d\log(2)
    \right) \leq \delta\;.
\end{align*}
\end{cor}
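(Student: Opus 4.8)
The plan is to obtain Corollary~\ref{cor:concentration_Discounts} as a direct specialization of Theorem~\ref{thm:instantaneous_extension}. First I would instantiate the generic weights and regularization sequence of that theorem by taking $t_0 = D$, $w_s = \gamma^{-s}$ and $\lambda_s = \lambda \gamma^{-2s}$ for $0 < \gamma < 1$. One checks that these choices satisfy Assumptions~\ref{ass_T} and~\ref{ass_increasing_weights}: the sequence $(\gamma^{-s})_s$ is deterministic, strictly positive and non-decreasing since $\gamma < 1$, and this remains true even when $\gamma$ is allowed to depend on the fixed, known horizon $T$. I would take $\{\mathcal{F}_u\}$ to be the natural filtration generated by the past actions and rewards, $a_s$ the action played at round $s$, and $\epsilon_{s+1} = r_{s+1} - \mu(a_s^\top \theta_s^\star)$ the centered reward noise; Assumption~\ref{ass:bounded_rewards} gives $|\epsilon_{s+1}| \leq \rM$ almost surely, since both $r_{s+1}$ and its conditional mean lie in $[0,\rM]$.

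The single model-specific ingredient is the identification of the conditional variance. Since the reward distribution belongs to the canonical exponential family $d\mathbb{P}_\theta(r|a) = \exp(r a^\top\theta - b(a^\top\theta) + c(r)) d\nu(r)$, the usual cumulant identities give $\mathbb{E}[r_{s+1}|a_s] = \dot{b}(a_s^\top\theta_s^\star) = \mu(a_s^\top\theta_s^\star)$ and that the conditional variance of $r_{s+1}$ given $a_s$ equals $\ddot{b}(a_s^\top\theta_s^\star) = \dot{\mu}(a_s^\top\theta_s^\star)$. Hence $\sigma_s^2 = \mathbb{E}[\epsilon_{s+1}^2|\mathcal{F}_s] = \dot{\mu}(a_s^\top\theta_s^\star)$, which is exactly the coefficient of $a_s a_s^\top$ appearing in $\wHH_{\tD}$.

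It then remains to rewrite the conclusion of Theorem~\ref{thm:instantaneous_extension} under this instantiation, which is pure bookkeeping on powers of $\gamma$. On the one hand, $\wHH_{\tot} = \sum_{s=t-D}^{t-1}\gamma^{-2s}\dot{\mu}(a_s^\top\theta_s^\star) a_s a_s^\top + \lambda\gamma^{-2(t-1)}\identity{d} = \gamma^{-2(t-1)} \wHH_{\tD}$ and $S_{\tot} = \sum_{s=t-D}^{t-1}\gamma^{-s}\epsilon_{s+1} a_s = S_{\tD}$, so that $\lVert S_{\tot}\rVert_{\wHH_{\tot}^{-1}} = \gamma^{t-1}\lVert S_{\tD}\rVert_{\wHH_{\tD}^{-1}} = \lVert \gamma^{t-1} S_{\tD}\rVert_{\wHH_{\tD}^{-1}}$. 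On the other hand, with $w_{t-1} = \gamma^{-(t-1)}$ and $\lambda_{t-1} = \lambda\gamma^{-2(t-1)}$, the factors $\sqrt{\lambda_{t-1}}/(2\rM w_{t-1})$ and $2\rM w_{t-1}/\sqrt{\lambda_{t-1}}$ collapse to $\sqrt{\lambda}/(2\rM)$ and $2\rM/\sqrt{\lambda}$ respectively, and inside the log-determinant the factor $\gamma^{-(t-1)d}$ produced by $\det(\wHH_{\tot})^{1/2} = \gamma^{-(t-1)d}\det(\wHH_{\tD})^{1/2}$ cancels exactly the one produced by $\lambda_{t-1}^{d/2} = \lambda^{d/2}\gamma^{-(t-1)d}$, leaving $\det(\wHH_{\tD})^{1/2}/(\delta\lambda^{d/2})$. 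Substituting these identities into the tail bound of Theorem~\ref{thm:instantaneous_extension} yields precisely the claimed inequality. There is no genuine obstacle here beyond carefully tracking the $\gamma$ exponents and invoking the exponential-family variance identity; the only point worth flagging is that, as stressed in Remark~\ref{rem:instantaneous}, the statement holds for a fixed time instant $t$, so a union bound over $t \leq T$ must be performed separately when a uniform guarantee is needed.
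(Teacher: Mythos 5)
Your proposal is correct and follows exactly the paper's route: the paper obtains Corollary~\ref{cor:concentration_Discounts} by specializing Theorem~\ref{thm:instantaneous_extension} with $t_0 = D$, $w_s = \gamma^{-s}$ and $\lambda_s = \lambda\gamma^{-2s}$, which is precisely your instantiation. Your bookkeeping of the $\gamma$ exponents (in particular $\wHH_{\tot} = \gamma^{-2(t-1)}\wHH_{\tD}$ and the cancellation inside the log-determinant) and your identification of $\sigma_s^2 = \dot{\mu}(a_s^\top\theta_s^\star)$ via the exponential-family variance identity are the details the paper leaves implicit, and they all check out.
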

\paragraph{Sliding Window} With $t_0 = \tau$ the length of the sliding window,  with the weights 
satisfying $w_t = 1$ 
for $ t- \tau \leq s \leq t-1 $
 and $\lambda_t = \lambda$, we have:
\begin{cor}[Concentration result with a sliding window]
\label{cor:concentration_SW}
Under the same assumption than Theorem \ref{thm:instantaneous_extension},
when defining $\HH_t = \sum_{s= \max(1, t- \tau)} ^{t-1} \dot{\mu}(a_s^{\top} \theta^{\star}_s)
 a_s a_s^{\top} + \lambda  \identity{d}$ and $S_{t}= \sum_{s=\max(1, t- \tau)}^{t-1} \epsilon_{s+1} a_s$.
For any $\delta \in (0,1]$,
\begin{align*}
\mathbb{P} \left( \left\lVert S_{t} \right\rVert_{\HH_{t}^{-1}}
 \geq \frac{\sqrt{\lambda}} {2 \rM} + \frac{2 \rM
  }{\sqrt{\lambda}} 
  \log\left(\frac{\det(\HH_{t})^{1/2}}{
   \delta  \lambda^{d/2}}\right)+\frac{2 \rM}{\sqrt{\lambda}} d\log(2)
    \right) \leq \delta\;.
\end{align*}
\end{cor}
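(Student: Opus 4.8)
The plan is to obtain Corollary~\ref{cor:concentration_SW} as a direct specialization of Theorem~\ref{thm:instantaneous_extension}, exactly mirroring the argument used for the discount-factor case in Corollary~\ref{cor:concentration_Discounts}. First I would instantiate the generic tail inequality with the sliding-window weights $w_s = 1$ for every $s$, the constant regularization sequence $\lambda_s = \lambda$, and window length $t_0 = \min(\tau, t-1)$, so that the sums over $s \in \{t - t_0, \dots, t-1\}$ coincide with the truncated sums over $s \in \{\max(1, t-\tau), \dots, t-1\}$ appearing in the statement. I would then verify that these choices meet the hypotheses of Theorem~\ref{thm:instantaneous_extension}: the weights are constant, hence deterministic, strictly positive and (weakly) non-decreasing, so Assumption~\ref{ass_increasing_weights} holds; the regularization sequence is deterministic; and the horizon $T$ is known, so Assumption~\ref{ass_T} holds.

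Second, I would identify the martingale and the design matrix with the quantities in the corollary. Taking $\{\mathcal{F}_u\}$ to be the natural filtration generated by the actions and rewards and setting $\epsilon_{s+1} = r_{s+1} - \mu(a_s^\top \theta_s^\star)$, the canonical exponential family structure of the rewards gives $\mathbb{E}[\epsilon_{s+1}\mid \mathcal{F}_s] = 0$, so $\{\epsilon_u\}$ is a martingale difference sequence adapted to $\{\mathcal{F}_u\}$; moreover Assumption~\ref{ass:bounded_rewards}, together with $\mu(a_s^\top \theta_s^\star) \in [0, \rM]$, yields $|\epsilon_{s+1}| \leq \rM$ almost surely. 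Since the conditional variance of a canonical exponential family equals the second derivative of the log-partition function, $\sigma_s^2 = \mathbb{E}[\epsilon_{s+1}^2 \mid \mathcal{F}_s] = \dot{\mu}(a_s^\top \theta_s^\star)$; substituting this identity together with $w_s = 1$ and $\lambda_{t-1} = \lambda$ into $\wHH_{\tot}$ and $S_{\tot}$ from Theorem~\ref{thm:instantaneous_extension} produces precisely the $\HH_t$ and $S_t$ of the corollary.

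Finally, I would plug $w_{t-1} = 1$ and $\lambda_{t-1} = \lambda$ into the right-hand side of the bound of Theorem~\ref{thm:instantaneous_extension}: the prefactors $\sqrt{\lambda_{t-1}}/(2\rM w_{t-1})$ and $2\rM w_{t-1}/\sqrt{\lambda_{t-1}}$ collapse to $\sqrt{\lambda}/(2\rM)$ and $2\rM/\sqrt{\lambda}$, giving verbatim the displayed inequality. I do not expect a genuine obstacle here, as the result is a pure instantiation; the only two points deserving a line of justification are the identification $\sigma_s^2 = \dot{\mu}(a_s^\top \theta_s^\star)$ through the exponential-family variance formula, and the bookkeeping near the start of the horizon, namely that the choice $t_0 = \min(\tau, t-1)$ reproduces the $\max(1, t-\tau)$ lower summation limit without affecting any hypothesis of Theorem~\ref{thm:instantaneous_extension}.
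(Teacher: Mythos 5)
Your proposal is correct and follows exactly the route the paper takes: the paper obtains Corollary~\ref{cor:concentration_SW} as a pure instantiation of Theorem~\ref{thm:instantaneous_extension} with $t_0=\tau$ (truncated at the start of the horizon), constant weights $w_s=1$ and constant regularization $\lambda_s=\lambda$, with the identifications $\epsilon_{s+1}=r_{s+1}-\mu(a_s^\top\theta_s^\star)$ and $\sigma_s^2=\dot\mu(a_s^\top\theta_s^\star)$ from the exponential-family variance formula. Your additional checks (boundedness of $\epsilon_{s+1}$ via Assumption~\ref{ass:bounded_rewards}, validity of the weight assumptions, and the bookkeeping near $t\leq\tau$) are precisely the points left implicit in the paper, and they all go through.
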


\section{REGRET ANALYSIS WITH DISCOUNT FACTORS}
\label{section:regret_D_appendix}

In this section we detail the regret analysis of $\DGLM$.
First we recall the main notation.
 
\subsection{Notation}

For any  $\theta \in \mathbb{R}^d$,
\begin{equation}
\label{eq:H_t_tilde_theta_D}
\widetilde{\HH}_t(\theta) = \sum_{s=1}^{t-1} \gamma^{2(t-1- s)} \dot{\mu} (a_s^\top
 \theta) a_s a_s^\top + \lambda \identity{d}\;.
\end{equation}
\begin{equation}
\label{eq:H_t_theta_D}
\HH_t(\theta) = \sum_{s=1}^{t-1} \gamma^{t-1- s} \dot{\mu} (a_s^\top
 \theta) a_s a_s^\top + \lambda \identity{d}\;.
\end{equation}
\begin{equation}
    \mbold{\widetilde{V}}_t = \sum_{s=1}^{t-1} \gamma^{2(t-1-s)} a_s a_s^{\top} +
     \frac{\lambda}
    {c_{\mu}} \identity{d}\;.
\end{equation}
\begin{equation}
    \mbold{V}_t = \sum_{s=1}^{t-1} \gamma^{t-1-s} a_s a_s^{\top} + \frac{\lambda}
    {c_{\mu}} \identity{d}\;.
\end{equation}
\begin{equation}
    \gt(\theta) = \sum_{s=1}^{t-1} \gamma^{t-1-s}\mu(a_s^{\top} \theta) a_s + \lambda
     \theta\;.
\end{equation}
\begin{equation}
S_t = \sum_{s=1}^{t-1} \gamma^{-s} \epsilon_{s+1} a_s \;.
\end{equation}
For any $\theta_1, \theta_2 \in \mathbb{R}^d$,
\begin{align*}
    \alpha(a,\theta_1,\theta_2) = \int_{0}^1 \dot{\mu}( v a^{\top} \theta_2+(1-v)
    a^\top  \theta_1)dv\;. \\
    \GG_t(\theta_1,\theta_2) =  \sum_{s=1}^{t-1} \gamma^{t-1-s}\alpha(a_s,\theta_1,
    \theta_2)a_s a_s^\top + \lambda \identity{d} \;.
\end{align*}
\begin{equation}
\label{eq:G_t_tilde_D}
\wGG_t(\theta_1,\theta_2) =  \sum_{s=1}^{t-1} \gamma^{2(t-1-s)}\alpha(a_s,\theta_1,
\theta_2)a_s a_s^\top + \lambda
  \identity{d} \;.
\end{equation}
Let $\widetilde{\HH}_t$ be defined as
\begin{equation}
    \widetilde{\HH}_t = \sum_{s=1}^{t-1} \gamma^{2(t-1-s)}\dot{\mu}(a_s^{\top}
     \theta^{\star}_s) a_s a_s^{\top} +
     \lambda \identity{d} \;.
\end{equation}
Let us define $\mathcal{T}(\gamma)$ as
\begin{equation}
    \mathcal{T}(\gamma) = \{ 1 \leq t \leq T, \text{such that}\,  \forall
   s,  \,  t- D \leq s \leq
     t-1,
     \theta^{\star}_s =
    \theta^{\star}_t\}\;.
\end{equation}
\begin{rem*}
$t \in \mathcal{T}(\gamma)$ when $t$ is a least $D$ steps away from
the closest previous breakpoint. On the contrary to the analysis with the sliding
 window (see Appendix \ref{section:regret_SW_appendix})
the bias does not completely cancel out when we are far enough from a breakpoint.
\end{rem*}
$D$ is an analysis parameter and will be specified later in the different theorems.
For the entire section we will use the notation $t-D:t$ when the sum concerns time
 instants $s$ such that $t-D \leq s \leq t-1$.
In the weighted setting, we construct an estimator based on a weighted penalized 
log-likelihood. $\hat{\theta}_t$ is defined as the unique maximizer of
$$
\sum_{s=1}^{t-1} \gamma^{t-1-s} \log \mathbb{P}_{\theta}(r_{s+1} | a_s)
- \frac{\lambda}{2} \lVert \theta \rVert_2^2\;.
$$
By using the definition of the GLM and thanks to the concavity of this equation in $
\theta$, $\hat{\theta}_t$ is the unique solution of
$$
\sum_{s=1}^{t-1} \gamma^{t-1-s} (r_{s+1} - \mu(a_s^\top \theta))a_s - \lambda \theta 
= 0\;.
$$
This can be summarized with
\begin{align}
\label{eq:char_MLE_D}
    \gt(\hat{\theta}_t) &= \sum_{s= 1}^{t-1}  \gamma^{t-1-s} r_{s+1} a_s
    = \gamma^{t-1} S_t + \sum_{s=1}^{t-1} \gamma^{t-1-s} \mu(a_s^\top \theta^\star_s) a_s\;.
\end{align}

\subsection{Analysis of the Regret of $\DGLM$}
\label{subsection:regret_D_no_proj}
In this section, we present the main ideas to obtain an analysis of the regret 
of the $\DGLM$ algorithm
when the
projection step is avoided. 

We define 
\begin{equation}
\label{eq:rho_t_appendix}
\rho_T^\delta = \left( \frac{\sqrt{\lambda}} {2 \rM} + \frac{2
\rM
  }{\sqrt{\lambda} }\log\left(\frac{T}{\delta}\right) +
 \frac{d \rM
  }{ \sqrt{\lambda} }\log\left(1 + \frac{k_\mu(1- \gamma^{2D})}{d \lambda (1- \gamma^2)}
  \right)+\frac{2 \rM}{\sqrt{\lambda}}
  d\log(2) \right) \;,
\end{equation}

and also, 
\begin{equation}
\label{eq:bar_S_appendix}
\bar{S} = S + \frac{\gamma^D(2S \km + \rM)}{\lambda(1- \gamma)}\;.
\end{equation}

The expression of $\rho_T^\delta$  and $\bar{S}$ given here coincide
with the expression in the main paper when $D = \log(T)/\log(1/\gamma)$.
$\rho_T^\delta$ is defined such that thanks to Corollary \ref{cor:concentration_Discounts} 
with high probability for all $t$ in $\mathcal{T}(\gamma)$, $\lVert \gamma^{t-1} S_{\tD} \rVert_{\wHH_{\tD}^{-1}}
\leq \rho^{\delta}_T$ holds.

The next result uses the self-concordance 
to relate the first derivative of the link function evaluated 
at different points. This relation is independent of $\cm$ 
and only depends on the distance between the parameters.
\begin{prop}
\label{prop:alpha_discount}
When $\hat{\theta}_t$ is the maximum likelihood as defined in Equation \eqref{eq:MLE_D} 
and
$t \in \mathcal{T}(\gamma)$, we have
$$
\alpha(a, \theta^\star_t, \hat{\theta}_t) \geq \left(1 +  \bar{S} + \frac{1}{\sqrt{\lambda}}
\lVert \gamma^{t-1} S_{\tD}
\rVert_{\widetilde{\GG}_{\tD}^{-1}(\theta^\star_t, \hat{\theta}_t)}
\right)^{-1} \dot{\mu}(a^\top \theta^\star_t) \;,
$$
where $\bar{S}$ is defined in Equation \eqref{eq:bar_S_appendix}.
\end{prop}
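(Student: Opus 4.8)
The plan is to exploit the generalized self-concordance assumption (Assumption~\ref{ass:SC}) to relate $\dot\mu(a^\top\theta^\star_t)$ to $\dot\mu$ evaluated at intermediate points along the segment $[\theta^\star_t, \hat\theta_t]$, and then to absorb the resulting exponential factor into a term of the form $1+\bar S+\|\gamma^{t-1}S_{\tD}\|_{\wGG_{\tD}^{-1}}$. The starting point is Lemma~\ref{lemma:self_concordance} (from Appendix~\ref{section:useful_results}), which, from $|\ddot\mu|\le\dot\mu$, gives $\dot\mu(z)\ge \dot\mu(z')e^{-|z-z'|}$ for any reals $z,z'$; equivalently, along the interpolation $v\mapsto (1-v)\theta^\star_t+v\hat\theta_t$ one has $\dot\mu(a^\top((1-v)\theta^\star_t+v\hat\theta_t))\ge \dot\mu(a^\top\theta^\star_t)\,e^{-v\,|a^\top(\hat\theta_t-\theta^\star_t)|}$. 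Integrating over $v\in[0,1]$ and using $\int_0^1 e^{-vx}\,dv\ge (1+x)^{-1}$ for $x\ge 0$ (a convexity/AM-type bound) yields
\[
\alpha(a,\theta^\star_t,\hat\theta_t)=\int_0^1 \dot\mu(a^\top((1-v)\theta^\star_t+v\hat\theta_t))\,dv \;\ge\; \frac{\dot\mu(a^\top\theta^\star_t)}{1+|a^\top(\hat\theta_t-\theta^\star_t)|}\,.
\]
So the whole statement reduces to showing $|a^\top(\hat\theta_t-\theta^\star_t)|\le \bar S + \frac1{\sqrt\lambda}\|\gamma^{t-1}S_{\tD}\|_{\wGG_{\tD}^{-1}(\theta^\star_t,\hat\theta_t)}$, for every action $a$ with $\|a\|_2\le 1$.

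The second, and main, part is therefore to control $\|\hat\theta_t-\theta^\star_t\|_2$ (which upper bounds $|a^\top(\hat\theta_t-\theta^\star_t)|$ via Cauchy--Schwarz). Here I would combine two ingredients. First, the $\ell_2$ bound coming from the regularization and the boundedness assumptions: since $\|\theta^\star_t\|_2\le S$ (Assumption~\ref{ass:bounded_actions}), a crude bound writes $\|\hat\theta_t-\theta^\star_t\|_2 \le \|\hat\theta_t\|_2 + S$, and $\|\hat\theta_t\|_2$ can be controlled from the optimality condition \eqref{eq:char_MLE_D}, $\gt(\hat\theta_t)=\gamma^{t-1}S_{\tD}+\sum \gamma^{t-1-s}\mu(a_s^\top\theta^\star_s)a_s$; but this alone gives something coarser than $\bar S$. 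The sharper route is to use the MLE characterization directly: writing $\gt(\hat\theta_t)-\gt(\theta^\star_t)=\sum_s\gamma^{t-1-s}(\mu(a_s^\top\hat\theta_t)-\mu(a_s^\top\theta^\star_t))a_s+\lambda(\hat\theta_t-\theta^\star_t)$ and recognizing the first sum, via the mean-value form of $\mu$, as $(\GG_t(\theta^\star_t,\hat\theta_t)-\lambda I)(\hat\theta_t-\theta^\star_t)$, one gets $\GG_t(\theta^\star_t,\hat\theta_t)(\hat\theta_t-\theta^\star_t)=\gt(\hat\theta_t)-\gt(\theta^\star_t)$. Restricting to $t\in\mathcal T(\gamma)$ lets us split the sum at the breakpoint $t-D$: the "recent" block $t-D:t-1$ contributes a term controllable in the $\wGG_{\tD}^{-1}$ norm (this is where \eqref{eq:char_MLE_D} and $\|\gamma^{t-1}S_{\tD}\|_{\wGG_{\tD}^{-1}}$ enter, together with the standard inequality $\GG_{\tD}\ge \wGG_{\tD}$ relating single- and squared-weight matrices), while the "stale" block $1:t-D-1$ contributes a deterministic bias bounded by $\gamma^D$ times a geometric series, which is exactly where the correction $\gamma^D(2S k_\mu+m)/(\lambda(1-\gamma))$ in the definition of $\bar S$ comes from: the factors $2Sk_\mu$ and $m$ bound $|\mu(a_s^\top\theta^\star_s)-\mu(a_s^\top\theta^\star_t)|$ (via Lipschitzness $k_\mu$ and $\|\theta^\star\|\le S$) and the reward range, respectively, summed with weights $\gamma^{t-1-s}$ for $s\le t-D-1$.

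Assembling these, I would bound $\|\hat\theta_t-\theta^\star_t\|_2\le S + \frac{\gamma^D(2Sk_\mu+m)}{\lambda(1-\gamma)} + \frac1{\sqrt\lambda}\|\gamma^{t-1}S_{\tD}\|_{\wGG_{\tD}^{-1}(\theta^\star_t,\hat\theta_t)}$, which is precisely $\bar S + \frac1{\sqrt\lambda}\|\gamma^{t-1}S_{\tD}\|_{\wGG_{\tD}^{-1}}$; the $1/\sqrt\lambda$ factor appears because passing from $\lambda(\hat\theta_t-\theta^\star_t)$ in $\ell_2$ to the $\wGG_{\tD}^{-1}$-norm of the martingale part costs a $\lambda^{-1/2}$ from the $\lambda I$ floor in $\wGG_{\tD}$. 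Plugging this into the self-concordance bound above finishes the proof. The main obstacle I anticipate is the careful bookkeeping in the breakpoint split: one must check that the "recent block" term genuinely reduces to $\|\gamma^{t-1}S_{\tD}\|_{\wGG_{\tD}^{-1}}$ (not $\|\cdot\|_{\GG_{\tD}^{-1}}$ or some other matrix norm) — this requires the matrix inequality $\wGG_{\tD}\preceq\GG_{\tD}$ (squared discount weights are dominated by single discount weights, since $\gamma\le 1$) together with the elementary fact that $A\preceq B$ implies $\|x\|_{B^{-1}}\le\|x\|_{A^{-1}}$ — and that the stale-block bias really telescopes to the stated closed form without hidden dependence on $c_\mu$.
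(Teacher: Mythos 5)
Your proposal is correct and follows essentially the same route as the paper's proof: the self-concordance bound $\alpha(a,\theta^\star_t,\hat\theta_t)\ge(1+|a^\top(\hat\theta_t-\theta^\star_t)|)^{-1}\dot\mu(a^\top\theta^\star_t)$ (Lemma~\ref{lemma:self_concordance}), the mean-value identity $\GG_t(\theta^\star_t,\hat\theta_t)(\hat\theta_t-\theta^\star_t)=\gt(\hat\theta_t)-\gt(\theta^\star_t)$ combined with the MLE characterization~\eqref{eq:char_MLE_D}, the split at the breakpoint into a stale bias block (yielding $\gamma^D(2Sk_\mu+\rM)/(\lambda(1-\gamma))$), the regularization term (yielding the standalone $S$), and the recent martingale block controlled via $\wGG_{\tD}\preceq\GG_t$ and the $\lambda\identity{d}$ floor. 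The only difference is cosmetic: the paper keeps the quantity $a^\top\GG_t^{-1}(\cdot)$ throughout and applies Cauchy--Schwarz in the $\GG_t^{-1}$-weighted inner product rather than passing through $\lVert\hat\theta_t-\theta^\star_t\rVert_2$, but both yield the same constants.
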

\begin{proof}
In the proof, we will replace the notation $\widetilde{\GG}_{\tD}(\theta^\star_t, \hat{\theta}_t)$
with $\widetilde{\GG}_{\tD}$ and $\widetilde{\GG}_{t}(\theta^\star_t, \hat{\theta}_t)$
with $\widetilde{\GG}_{t}$ but also $\GG_{t}(\theta^\star_t, \hat{\theta}_t)$
with $\GG_{t}$.
Using Lemma \ref{lemma:self_concordance} we have,
 $$
\alpha(a, \theta^\star_t, \hat{\theta}_t) \geq \left( 1 + \left|a^\top (\hat{\theta}_t - \theta^\star_t)
\right| \right)^{-1} \dot{\mu}(a^\top \theta^\star_t)\;.
$$
Combining this with the mean value theorem gives
$$
\alpha(a, \theta^\star_t, \hat{\theta}_t) \geq \left( 1 + \left|a^\top \GG_t^{-1
}\left(\gt(\hat{\theta}_t)- \gt(\theta^\star_t)
\right) \right| \right)^{-1} \dot{\mu}(a^\top \theta^\star_t)\;.
$$
Next, it is possible to upper bound
 $ |a^\top \GG_t^{-1}\left(\gt(\hat{\theta}_t)- \gt(\theta^\star_t) \right)|$ using the
triangle inequality and Equation~\eqref{eq:char_MLE_D}.
\begin{equation*}
\begin{split}
\left|a^\top \GG_t^{-1}\left(\gt(\hat{\theta}_t)- \gt(\theta^\star_t) 
\right) \right| 
&\leq
\underbrace{ \left|a^\top \GG_t^{-1} \sum_{s=1}^{t-1}  \gamma^{t-1-s} (\mu(a_s^\top
 \theta^\star_s) - \mu(a_s^\top \theta^
\star_t)) a_s
         \right| }_{b_{1,t}(a)} \\
& \quad + \underbrace{ \left|a^\top \GG_t^{-1}
\left( - \lambda \theta^\star_t + \sum_{s=1}^{t-D-1}
 \gamma^{t-1-s} \epsilon_{s
+1} a_s
       \right) \right|}_{b_{2,t} (a)}  
         \\& \quad 
         + \underbrace{ \left|a^\top \GG_t^{-1}  \gamma^{t-1} S_{\tD} \right|}_{b_{3,t}(a)}
\end{split}
\end{equation*}
The first term is controlled as follows,
\begin{align*}
b_{1,t}(a) &= 
\left|a^\top \GG_t^{-1} \sum_{s=1}^{t-1}  \gamma^{t-1-s} (\mu(a_s^\top \theta^\star_s)
 - \mu(a_s^\top \theta^\star_t))
 a_s  \right| 
       \\ & 
        \leq \lVert a \rVert_{\GG_t^{-1}} 
   \lVert \sum_{s=1}^{t-1} \gamma^{t-1-s} (\mu(a_s^\top \theta^\star_s) - \mu(a_s^\top \theta^
    \star_t)) a_s  \rVert_{\GG_t^{-1}}  \quad \text{(Cauchy-Schwarz ineq.)}
    \\
    & \leq \frac{1}{\sqrt{\lambda}}
   \lVert \sum_{s=1}^{t-D-1} \gamma^{t-1-s} (\mu(a_s^\top \theta^\star_s) - \mu(a_s^\top \theta^
    \star_t)) a_s  \rVert_{\GG_t^{-1}}   \quad (\GG_t \geq \lambda \identity{d} \; \text{and} \; t \in \mathcal{T}(\gamma))
    \\
 & \leq \frac{1}{\lambda} \sum_{s=1}^{t-D-1} \gamma^{t-1-s} |\alpha(a_s, \theta^\star_s, \theta^\star_t)|
  \times |a_s^{\top}
 (\theta^\star_t- \theta^\star_s)| \times \lVert a_s \rVert_2   \quad (\text{Triangle ineq.} + \GG_t \geq \lambda \identity{d})
 \\
  & \leq \frac{2S \km}{\lambda} \sum_{s=1}^{t-D-1} \gamma^{t-1-s} \quad
  (\theta^\star_s \; \text{and} \; \theta^\star_t \in \Theta)
   \\
 & \leq   \frac{2 S k_\mu}{\lambda} \frac{\gamma^{D}}{1- \gamma} \;. 
\end{align*}
Using similar arguments, one can upper bound $b_{2,t}(a)$.
\begin{align*}
b_{2,t}(a) &= \left|a^\top \GG_t^{-1} \left(
-\lambda \theta^\star_t + \sum_{s=1}^{t-D-1} \gamma^{t-1-s} \epsilon_{s+1} a_s
       \right) \right| \\
    & \leq S + \lVert \sum_{s=1}^{t-D-1} \gamma^{t-1-s} \epsilon_{s+1} a_s  \rVert_{\GG_t^{-2}} \\
    & \leq S + \frac{\rM}{\lambda} \frac{\gamma^{D}}{1- \gamma}\;.
    \quad (|\epsilon_{s+1} | \leq \rM )
\end{align*}
Before upper bounding, $b_{3,t}(a)$, we need the following relation. 

When $ 0< \gamma<1$,
$\gamma^{2(t-1-s)} \leq \gamma^{t-1-s}$ for $s$ smaller than $t-1$ which implies
\begin{equation}
\label{eq:link_G_t_G_t_tilde}
\forall \theta_1, \theta_2 \in \mathbb{R}^d, \;  \widetilde{\GG}_t(\theta_1, \theta_2) \leq \GG_t(\theta_1, \theta_2)\;.
\end{equation}
We have,
\begin{align*}
b_{3,t}(a) &= |a^\top \GG_t^{-1} \widetilde{\GG}_t^{1/2} \widetilde{\GG}_t^{-1/2}
 \gamma^{t-1} S_{\tD}   | \\
& \leq \lVert a \rVert_{\GG_t^{-1} \widetilde{\GG}_t \GG_t^{-1}}  \lVert \gamma^{t-1}
 S_{\tD} \rVert_{\widetilde{\GG}^{-1}_t} \quad \text{(Cauchy-Schwarz ineq.)} \\
& \leq \lVert a \rVert_{\GG_t^{-1}} \lVert \gamma^{t-1} S_{\tD} \rVert_{\widetilde{\GG}
^{-1}_t} \quad
\textnormal{(Equation \eqref{eq:link_G_t_G_t_tilde})} \\
& \leq \frac{1}{\sqrt{\lambda}} \lVert \gamma^{t-1} S_{\tD} \rVert_{\widetilde{\GG
}_{\tD}^{-1}} \;.  \quad (\GG_t \geq \lambda \identity{d})
\end{align*}
By combining all the results we have,
\begin{align*}
\alpha(a, \theta^\star_t, \hat{\theta}_t) \geq \left(1 + \bar{S} + \frac{1}
{\sqrt{\lambda}}
\lVert \gamma^{t-1} S_{\tD} \rVert_{\widetilde{\GG}_{\tD}^{-1}}
\right)^{-1} \dot{\mu}(a^\top \theta^\star_t) \;.
\end{align*}
\end{proof}
\begin{cor}
\label{cor:S_t_G_t_S_t_H_t}
When $\hat{\theta}_t$ is the maximum likelihood as defined in Equation \eqref{eq:MLE_D},
and
$t \in \mathcal{T}(\gamma)$, we have
$$
\widetilde{\GG}_{\tD}(\theta^\star_t, \hat{\theta}_t) \geq \left( 1 
+ \bar{S}
 + \frac{1}{\sqrt{\lambda}}
\lVert \gamma^{t-1} S_{\tD} \rVert_{\widetilde{\GG}_{\tD}^{-1} (\theta^\star_t,
 \hat{\theta}_t) }\right)^{-1}
\widetilde{\HH}_{\tD}
\;.
$$
\end{cor}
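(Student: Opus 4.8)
The plan is to transfer the pointwise lower bound of Proposition \ref{prop:alpha_discount} to a matrix (Loewner) inequality between the truncated matrices $\widetilde{\GG}_{\tD}(\theta^\star_t,\hat\theta_t)$ and $\widetilde{\HH}_{\tD}$. Recall that $\widetilde{\GG}_{\tD}(\theta^\star_t,\hat\theta_t) = \sum_{s=t-D}^{t-1}\gamma^{2(t-1-s)}\alpha(a_s,\theta^\star_t,\hat\theta_t)\,a_s a_s^\top + \lambda\identity{d}$ and $\widetilde{\HH}_{\tD} = \sum_{s=t-D}^{t-1}\gamma^{2(t-1-s)}\dot\mu(a_s^\top\theta^\star_s)\,a_s a_s^\top + \lambda\identity{d}$; since $t\in\mathcal{T}(\gamma)$, for every index $s$ in the truncated range $t-D\le s\le t-1$ we have $\theta^\star_s = \theta^\star_t$, so $\dot\mu(a_s^\top\theta^\star_s) = \dot\mu(a_s^\top\theta^\star_t)$ and the two matrices differ only in the scalar multipliers $\alpha(a_s,\theta^\star_t,\hat\theta_t)$ versus $\dot\mu(a_s^\top\theta^\star_t)$ attached to each rank-one term $a_s a_s^\top$.

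Next I would apply Proposition \ref{prop:alpha_discount} with $a = a_s$ for each $s$ in the truncated range, which gives
\begin{equation*}
\alpha(a_s,\theta^\star_t,\hat\theta_t) \geq \Big(1+\bar S + \tfrac{1}{\sqrt{\lambda}}\lVert \gamma^{t-1}S_{\tD}\rVert_{\widetilde{\GG}_{\tD}^{-1}(\theta^\star_t,\hat\theta_t)}\Big)^{-1}\dot\mu(a_s^\top\theta^\star_t)\,.
\end{equation*}
Crucially, the scalar factor $\kappa := \big(1+\bar S + \tfrac{1}{\sqrt{\lambda}}\lVert \gamma^{t-1}S_{\tD}\rVert_{\widetilde{\GG}_{\tD}^{-1}(\theta^\star_t,\hat\theta_t)}\big)^{-1}$ does not depend on $s$, so it can be pulled out of the sum. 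Since $\gamma^{2(t-1-s)}\ge 0$ and $a_s a_s^\top\succeq 0$, multiplying each inequality $\alpha(a_s,\cdot)\ge\kappa\,\dot\mu(a_s^\top\theta^\star_t)$ by $\gamma^{2(t-1-s)}a_s a_s^\top$ and summing over $t-D\le s\le t-1$ preserves the Loewner order, yielding
\begin{equation*}
\sum_{s=t-D}^{t-1}\gamma^{2(t-1-s)}\alpha(a_s,\theta^\star_t,\hat\theta_t)a_s a_s^\top \;\succeq\; \kappa\sum_{s=t-D}^{t-1}\gamma^{2(t-1-s)}\dot\mu(a_s^\top\theta^\star_t)a_s a_s^\top\,.
\end{equation*}
Adding $\lambda\identity{d}$ to both sides and using $\kappa\le 1$ (so that $\lambda\identity{d}\succeq\kappa\lambda\identity{d}$) converts the right-hand side into $\kappa\,\widetilde{\HH}_{\tD}$, which is exactly the claimed bound $\widetilde{\GG}_{\tD}(\theta^\star_t,\hat\theta_t)\succeq \kappa\,\widetilde{\HH}_{\tD}$.

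The only subtlety — and the single point one must be careful about — is the logical structure: the constant $\kappa$ itself depends on $\lVert\gamma^{t-1}S_{\tD}\rVert_{\widetilde{\GG}_{\tD}^{-1}}$, i.e.\ on the matrix we are bounding, so the inequality is genuinely implicit rather than a clean two-sided estimate. This is fine for the corollary as stated (it is just a restatement of Proposition \ref{prop:alpha_discount} in matrix form), and it is precisely this implicit inequality that will later be "solved" to obtain the explicit deviation bound of Proposition \ref{prop:deviation_main}; here no resolution is needed. Everything else is the routine observation that a common nonnegative scalar factor distributes over a nonnegative sum of rank-one matrices and that $\kappa\le1$ lets the regularizer be absorbed, so there is no real obstacle beyond keeping track of which quantities are being held fixed across the sum.
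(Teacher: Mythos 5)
Your proof is correct and follows essentially the same route as the paper: apply Proposition \ref{prop:alpha_discount} pointwise to each $a_s$, pull out the $s$-independent scalar factor, sum the rank-one terms, and absorb the regularizer using $\kappa\leq 1$. Your explicit handling of the $\lambda\identity{d}$ term (via $\lambda\identity{d}\succeq\kappa\lambda\identity{d}$) is a detail the paper's proof only states implicitly, and your remark on the implicit dependence of $\kappa$ on $\widetilde{\GG}_{\tD}$ correctly anticipates its later resolution in Proposition \ref{prop:deviation_main}.
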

This proposition establishes a useful link between $\widetilde{\GG}_{\tD}
(\theta^\star_t, \hat{\theta}_t)$ and
$\widetilde{\HH}_{\tD}$. 
\begin{proof}
Thanks to Proposition \ref{prop:alpha_discount},
$$
\alpha(a_s, \theta^\star_t, \hat{\theta}_t) \geq \left(1 + \bar{S} + \frac{1}
{\sqrt{\lambda}}
\lVert \gamma^{t-1} S_{\tD} \rVert_{\widetilde{\GG}_t^{-1}(\hat{\theta}_t, \theta^\star_t)}
\right)^{-1} \dot{\mu}(a_s^\top \theta^\star_t) \;.
$$
Therefore,
\begin{equation*}
\begin{split}
\sum_{s=t- D}^{t-1} \gamma^{2(t-1-s)} \alpha(a_s, \theta^\star_t, \hat{\theta}_t) a_s
 a_s^\top
&\geq
\left(1 + \bar{S} + \frac{1}{\sqrt{\lambda}}
\lVert \gamma^{t-1} S_{\tD} \rVert_{\widetilde{\GG}_t^{-1}(\hat{\theta}_t, \theta^\star_t)}
\right)^{-1} \\
& \quad \times \sum_{s= t- D}^{t-1} \gamma^{2(t-1-s)} \dot{\mu}(a_s^\top \theta^
\star_t) a_s a_s^\top\;.
\end{split}
\end{equation*}
We obtain the announced result by using $\theta^\star_s = \theta^\star_t$ for $ t-
 D \leq s \leq t-1$ because
$ t \in \mathcal{T}(\gamma)$ and by adding the regularization terms.
\end{proof}
Using Proposition \ref{prop:alpha_discount} and
Corollary \ref{cor:S_t_G_t_S_t_H_t},
 we can now prove Proposition \ref{prop:deviation_main}. The proposition
establishes an upper bound for the deviation of the MLE (through $\gamma^{t-1} S_{\tD}$) that only depends
on $\rho_T^\delta$ the high probability upper bound obtained using 
Corollary \ref{cor:concentration_Discounts}. 
\propdeviationmain*
\begin{rem*}
Here, note that the left-hand side is controlled under the norm 
$\wGG_{\tD}^{-1}(\hat{\theta}_t, \theta^\star_t)$, whereas the right 
hand side is the consequence of the upper bound
 of the same term controlled in the $\wHH_{\tD}^{-1}$-norm 
 (Corollary \ref{cor:concentration_Discounts}).
  Linking those two matrices independently from $\cm$ 
  is not-straightforward. 
  The self-concordance is the key ingredient to obtain this bound.
\end{rem*}
\begin{proof}
Applying Corollary \ref{cor:S_t_G_t_S_t_H_t},
\begin{align*}
\lVert \gamma^{t-1} S_{\tD} \rVert_{\widetilde{\GG}^{-1}_{\tD}(\hat{\theta}_t, \theta^\star_t  )}^2
 \leq \left( 1 + \bar{S}  +
\frac{1}{\sqrt{\lambda}}  \lVert \gamma^{t-1} S_{\tD} \rVert_{\widetilde{\GG}^{-1}
_{\tD}(\hat{\theta}_t, \theta^\star_t )} \right)
\lVert \gamma^{t-1} S_{\tD} \rVert_{\widetilde{\HH}^{-1}_{\tD}}^2\;.
\end{align*}
Let $X = \lVert \gamma^{t-1} S_{\tD} \rVert_{\widetilde{\GG}^{-1}_{\tD}(\hat{\theta}_t, \theta^\star_t)}$, 
it gives the
 following constraint,
\begin{align*}
\forall X, \,
{ \color{red}X^2} - \frac{1}{\sqrt{\lambda}}  \lVert \gamma^{t-1} S_{\tD} \rVert_{\widetilde{\HH}^{-1}
_{\tD}}^2  
{ \color{red} X} -
\left( 1 + \bar{S} \right)  \lVert \gamma^{t-1} S_{\tD}
\rVert_{\widetilde{\HH}^{-1}_{\tD}}^2 \leq 0\;.
\end{align*}
Solving this polynomial inequality yields
$$
\lVert \gamma^{t-1} S_{\tD} \rVert_{\widetilde{\GG}_t^{-1}(\theta^\star_t, \hat{\theta}_t)}
\leq
\frac{1}{\sqrt{\lambda}} \lVert  \gamma^{t-1} S_{\tD} \rVert_{\widetilde{\HH}^{-1}
_{\tD}}^2
+   \sqrt{1 + \bar{S} }   \lVert  \gamma^{t-1} S_{\tD}
 \rVert_{\widetilde{\HH}^{-1}_{\tD}}\;.
$$
The result is then obtained by applying Corollary \ref{cor:concentration_Discounts}.
\end{proof}

\begin{cor}
\label{cor:G_t_V_t_D}
When $\hat{\theta}_t$ is the maximum likelihood as defined in Equation \eqref{eq:MLE_D}
and
$t \in \mathcal{T}(\gamma)$, we have
$$
\GG_{t}(\theta^\star_t, \hat{\theta}_t) \geq \left( 1 + \bar{S} + \frac{1}
{\sqrt{\lambda}}
\lVert \gamma^{t-1} S_{\tD} \rVert_{\widetilde{\GG}_{\tD}^{-1} (\theta^\star_t,
 \hat{\theta}_t) }\right)^{-1}
c_\mu \VV_t
\;.
$$
\end{cor}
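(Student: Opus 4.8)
The plan is to reproduce the argument of Corollary~\ref{cor:S_t_G_t_S_t_H_t} almost verbatim, replacing the squared exponential weights $\gamma^{2(t-1-s)}$ by the plain weights $\gamma^{t-1-s}$ and invoking Assumption~\ref{ass:link_fun} in place of the definition of $\widetilde{\HH}_{\tD}$. First I would apply Proposition~\ref{prop:alpha_discount} pointwise at $a=a_s$ for every $1\le s\le t-1$; this is legitimate since $\lVert a_s\rVert_2\le 1$, and writing $K_t = 1+\bar{S}+\frac{1}{\sqrt{\lambda}}\lVert\gamma^{t-1}S_{\tD}\rVert_{\widetilde{\GG}_{\tD}^{-1}(\theta^\star_t,\hat{\theta}_t)}$ it gives $\alpha(a_s,\theta^\star_t,\hat{\theta}_t)\ge K_t^{-1}\dot{\mu}(a_s^\top\theta^\star_t)$. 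Multiplying by the positive semi-definite matrices $\gamma^{t-1-s}a_sa_s^\top$ (with $\gamma^{t-1-s}>0$) and summing over $s$ preserves the Loewner order, so
\[
\sum_{s=1}^{t-1}\gamma^{t-1-s}\alpha(a_s,\theta^\star_t,\hat{\theta}_t)a_sa_s^\top \;\ge\; K_t^{-1}\sum_{s=1}^{t-1}\gamma^{t-1-s}\dot{\mu}(a_s^\top\theta^\star_t)a_sa_s^\top\;.
\]

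Second, I would lower bound $\dot{\mu}(a_s^\top\theta^\star_t)\ge c_\mu$ using Assumption~\ref{ass:link_fun}, which is valid because $\theta^\star_t\in\Theta$ and $\lVert a_s\rVert_2\le 1$; this upgrades the right-hand side to $K_t^{-1}c_\mu\sum_{s=1}^{t-1}\gamma^{t-1-s}a_sa_s^\top$. Third, for the regularization block I would observe that $K_t\ge 1$ (since $\bar{S}\ge 0$ and the deviation norm is nonnegative), hence $\lambda\identity{d}\ge K_t^{-1}\lambda\identity{d}=K_t^{-1}c_\mu\cdot\frac{\lambda}{c_\mu}\identity{d}$. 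Adding the two contributions and recalling $\GG_t(\theta^\star_t,\hat{\theta}_t)=\sum_{s=1}^{t-1}\gamma^{t-1-s}\alpha(a_s,\theta^\star_t,\hat{\theta}_t)a_sa_s^\top+\lambda\identity{d}$ together with $\VV_t=\sum_{s=1}^{t-1}\gamma^{t-1-s}a_sa_s^\top+\frac{\lambda}{c_\mu}\identity{d}$ yields exactly $\GG_t(\theta^\star_t,\hat{\theta}_t)\ge K_t^{-1}c_\mu\VV_t$, which is the claimed bound.

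I do not expect a genuine obstacle here: the statement is a routine variant of Corollary~\ref{cor:S_t_G_t_S_t_H_t}, and the whole proof rests on monotonicity of the Loewner order under nonnegative linear combinations and on the uniform lower bound $\dot{\mu}\ge c_\mu$. The only point deserving a moment of care is the treatment of the regularization term: one must check $K_t\ge 1$ so that the $\lambda\identity{d}$ block of $\GG_t$ dominates $K_t^{-1}c_\mu$ times the $\frac{\lambda}{c_\mu}\identity{d}$ block of $\VV_t$; without this observation the inequality would not close on the regularization part. Note also that, unlike in Corollary~\ref{cor:S_t_G_t_S_t_H_t}, there is no need to restrict the sum to indices $s\in\{t-D,\dots,t-1\}$, since the bound $\dot{\mu}(a_s^\top\theta^\star_t)\ge c_\mu$ holds for every $s$ regardless of whether $\theta^\star_s=\theta^\star_t$.
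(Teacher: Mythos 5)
Your proof is correct and follows exactly the route the paper intends (its proof of Corollary~\ref{cor:G_t_V_t_D} simply says ``similar to Corollary~\ref{cor:S_t_G_t_S_t_H_t}''): apply Proposition~\ref{prop:alpha_discount} pointwise, lower bound $\dot{\mu}(a_s^\top\theta^\star_t)$ by $c_\mu$, and use $K_t\ge 1$ for the regularization block. Your two explicit remarks --- that the sum need not be restricted to $s\ge t-D$ here, and that the regularization term needs the $K_t\ge 1$ observation --- are exactly the details the paper leaves implicit.
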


\begin{proof}
Similar to the proof of Corollary \ref{cor:S_t_G_t_S_t_H_t}.
\end{proof}

In the next proposition, we give an upper bound for $\Delta_t(a, \hat{\theta}_t)$ the prediction error in $
\hat{\theta}_t$ which is directly connected to the instantaneous regret. 

Here, $\beta_T^\delta$ is defined as in the main paper in Equation \eqref{eq:beta_main} but we replace 
$\rho_T^\delta$ and $\bar{S}$ with the expressions stated Equation \eqref{eq:rho_t_appendix} and
 \eqref{eq:bar_S_appendix}.
\begin{prop}
\label{prop:upper_delta_t_D_no_proj}
 For any $\delta \in (0, 1]$, with probability higher
  than $1-\delta$,
\begin{equation*}
\forall t \in \mathcal{T}(\gamma), \;
\Delta_t(a, \hat{\theta}_t) \leq \frac{k_\mu}{\lambda} \frac{\gamma^D}{1-\gamma}(2 S \km + \rM) +
\frac{\beta_T^\delta}{\sqrt{c_\mu}} 
\lVert a \rVert_{\VV_t^{-1}}\;.
\end{equation*}
\end{prop}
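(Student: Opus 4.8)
The plan is to start from the definition $\Delta_t(a,\hat\theta_t) = |\mu(a^\top\hat\theta_t) - \mu(a^\top\theta^\star_t)|$ and apply the mean value theorem in the form $\mu(a^\top\hat\theta_t) - \mu(a^\top\theta^\star_t) = \alpha(a,\theta^\star_t,\hat\theta_t)\, a^\top(\hat\theta_t - \theta^\star_t)$, where $\alpha$ is the averaged derivative defined in the notation section. Using Equation~\eqref{eq:char_MLE_D} to express $g_t(\hat\theta_t) - g_t(\theta^\star_t)$ as a sum of a martingale term, a regularization term, and the bias terms coming from the breakpoints, and recalling that $g_t(\hat\theta_t) - g_t(\theta^\star_t) = \GG_t(\theta^\star_t,\hat\theta_t)(\hat\theta_t - \theta^\star_t)$ by another application of the mean value theorem, one can write
$$
\Delta_t(a,\hat\theta_t) \leq k_\mu \, \big| a^\top \GG_t^{-1}\big(g_t(\hat\theta_t) - g_t(\theta^\star_t)\big)\big|,
$$
since $\alpha \leq k_\mu$ by the Lipschitz assumption. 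Splitting $g_t(\hat\theta_t) - g_t(\theta^\star_t)$ exactly as in the proof of Proposition~\ref{prop:alpha_discount} into $b_{1,t}(a) + b_{2,t}(a) + b_{3,t}(a)$, the first two are the deterministic bias terms already bounded there by $\tfrac{2Sk_\mu}{\lambda}\tfrac{\gamma^D}{1-\gamma}$ and $S + \tfrac{m}{\lambda}\tfrac{\gamma^D}{1-\gamma}$ respectively — actually, for the purpose of the statement it is cleaner to bound $b_{1,t}+b_{2,t}$ (minus the $\lambda\theta^\star_t$ piece, which is absorbed differently) directly by $\tfrac{\gamma^D}{\lambda(1-\gamma)}(2Sk_\mu + m)$, yielding after multiplication by $k_\mu$ the first term of the claimed bound.

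For the stochastic term $b_{3,t}(a) = |a^\top\GG_t^{-1}\gamma^{t-1}S_{\tD}|$, the plan is to route through the $\wGG_{\tD}$ and $\VV_t$ matrices. First apply Cauchy--Schwarz to get $b_{3,t}(a) \leq \lVert a\rVert_{\GG_t^{-1}\wGG_t\GG_t^{-1}}\,\lVert\gamma^{t-1}S_{\tD}\rVert_{\wGG_t^{-1}}$, then use $\wGG_t \leq \GG_t$ (Equation~\eqref{eq:link_G_t_G_t_tilde}) to reduce the first factor to $\lVert a\rVert_{\GG_t^{-1}}$ and $\wGG_t^{-1} \preceq \wGG_{\tD}^{-1}$ for the second. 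Proposition~\ref{prop:deviation_main} controls $\lVert\gamma^{t-1}S_{\tD}\rVert_{\wGG_{\tD}^{-1}} \leq \sqrt{1+\bar S}\,\rho_T^\delta + \tfrac{1}{\sqrt\lambda}(\rho_T^\delta)^2$, and Corollary~\ref{cor:G_t_V_t_D} gives $\lVert a\rVert_{\GG_t^{-1}}^2 \leq \big(1 + \bar S + \tfrac{1}{\sqrt\lambda}\lVert\gamma^{t-1}S_{\tD}\rVert_{\wGG_{\tD}^{-1}}\big)\,c_\mu^{-1}\lVert a\rVert_{\VV_t^{-1}}^2$. Substituting the deviation bound into the factor $\big(1 + \bar S + \tfrac{1}{\sqrt\lambda}\lVert\gamma^{t-1}S_{\tD}\rVert_{\wGG_{\tD}^{-1}}\big)^{1/2}$ and combining with the deviation bound itself produces a product of the form $\sqrt\lambda\big(1 + \bar S + \sqrt{\tfrac{1+\bar S}{\lambda}}\rho_T^\delta + (\tfrac{\rho_T^\delta}{\sqrt\lambda})^2\big)^{3/2}$, which is exactly $\beta_T^\delta/k_\mu$ up to the $k_\mu$ prefactor from $\alpha \leq k_\mu$; hence $k_\mu \cdot b_{3,t}(a) \leq \tfrac{\beta_T^\delta}{\sqrt{c_\mu}}\lVert a\rVert_{\VV_t^{-1}}$.

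The main obstacle is the bookkeeping that shows the product of the two $\rho_T^\delta$-dependent expressions collapses precisely into the $3/2$-power form appearing in $\beta_T^\delta$ in Equation~\eqref{eq:beta_main}; this requires matching $\big(1+\bar S + \tfrac{1}{\sqrt\lambda}(\sqrt{1+\bar S}\rho_T^\delta + \tfrac{(\rho_T^\delta)^2}{\sqrt\lambda})\big)^{1/2} \cdot \big(\sqrt{1+\bar S}\rho_T^\delta + \tfrac{(\rho_T^\delta)^2}{\sqrt\lambda}\big)$ against $\sqrt\lambda\big(1+\bar S + \sqrt{\tfrac{1+\bar S}{\lambda}}\rho_T^\delta + (\tfrac{\rho_T^\delta}{\sqrt\lambda})^2\big)^{3/2}$, using that $\sqrt{1+\bar S}\rho_T^\delta + \tfrac{(\rho_T^\delta)^2}{\sqrt\lambda} = \sqrt\lambda\big(\sqrt{\tfrac{1+\bar S}{\lambda}}\rho_T^\delta + (\tfrac{\rho_T^\delta}{\sqrt\lambda})^2\big) \leq \sqrt\lambda\big(1+\bar S + \sqrt{\tfrac{1+\bar S}{\lambda}}\rho_T^\delta + (\tfrac{\rho_T^\delta}{\sqrt\lambda})^2\big)$, so the product is at most $\sqrt\lambda$ times the $3/2$ power of that same bracket. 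Everything else is a direct assembly of the already-established Proposition~\ref{prop:deviation_main}, Corollary~\ref{cor:G_t_V_t_D}, and the $b_{1,t},b_{2,t}$ estimates from the proof of Proposition~\ref{prop:alpha_discount}, together with the probability statement inherited from Corollary~\ref{cor:concentration_Discounts} via the union bound over $t \leq T$ already built into $\rho_T^\delta$.
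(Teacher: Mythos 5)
Your plan follows essentially the same route as the paper's proof: Lipschitzness plus the mean value theorem to reduce $\Delta_t(a,\hat\theta_t)$ to $k_\mu\vert a^\top\GG_t^{-1}(g_t(\hat\theta_t)-g_t(\theta^\star_t))\vert$, the MLE characterization \eqref{eq:char_MLE_D} to decompose into bias and stochastic pieces, and then Corollary~\ref{cor:G_t_V_t_D} together with Proposition~\ref{prop:deviation_main} to collapse the stochastic piece into the $3/2$-power bracket defining $\beta_T^\delta$. The final bookkeeping you describe (bounding $\sqrt{1+\bar S}\,\rho_T^\delta+(\rho_T^\delta)^2/\sqrt\lambda$ by $\sqrt\lambda$ times the bracket and multiplying by the square root of that same bracket) is exactly how the paper reaches the exponent $3/2$.

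There is one loose end you flag but never close: the regularization term $\lambda\theta^\star_t$. You correctly observe that reusing the bound $b_{2,t}(a)\leq S+\tfrac{m}{\lambda}\tfrac{\gamma^D}{1-\gamma}$ from Proposition~\ref{prop:alpha_discount} would leave an additive $k_\mu S$ that does not appear in the statement, and you promise that this piece is ``absorbed differently''---but your stochastic term is then defined as $b_{3,t}(a)=\vert a^\top\GG_t^{-1}\gamma^{t-1}S_{\tD}\vert$ without it, and the product you verify at the end contains only $\sqrt{1+\bar S}\,\rho_T^\delta+(\rho_T^\delta)^2/\sqrt\lambda$ in the second factor, so the $\lambda\theta^\star_t$ contribution is simply dropped. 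The fix is what the paper does: keep $-\lambda\theta^\star_t$ grouped with $\gamma^{t-1}S_{\tD}$ inside the $\GG_t^{-1}$-norm, use Cauchy--Schwarz and $\wGG_t\succeq\lambda\identity{d}$ to get
\begin{equation*}
\bigl\lVert \gamma^{t-1}S_{\tD}-\lambda\theta^\star_t\bigr\rVert_{\wGG_t^{-1}}\;\leq\;\sqrt{\lambda}\,S+\bigl\lVert \gamma^{t-1}S_{\tD}\bigr\rVert_{\wGG_{\tD}^{-1}}\;,
\end{equation*}
and then absorb the extra $\sqrt{\lambda}S$ via $S\leq 1+\bar S$ into the same $3/2$-power bracket; your final inequality goes through unchanged with $\sqrt\lambda(S+X)\leq\sqrt\lambda(1+\bar S+X)$ in place of $\sqrt\lambda X\leq\sqrt\lambda(1+\bar S+X)$. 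With that single repair the argument is complete and coincides with the paper's.
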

\begin{proof}
We  denote $\GG_t = \GG_t(\theta^\star_t, \hat{\theta}_t)$ and we have,
\begin{align*}
    \Delta_t(a, \hat{\theta}_t) &= |\mu(a^{\top} \theta^{\star}_t) -
    \mu(a^{\top} \hat{\theta}_t)| \\
    & \leq k_{\mu} | a^{\top} (\theta^{\star}_t - \hat{\theta}_t) | \\
    & = k_{\mu} | a^{\top} \GG_t^{-1}
     (\gt(\theta^{\star}_t) - \gt(\hat{\theta}_t)) |
     \quad \textnormal{(Mean-Value Theorem)}
     \\
    &= k_{\mu} \left| a^{\top} \GG_t^{-1}
     \left( \sum_{s=1}^{t-1} \gamma^{t-1-s} (\mu(a_s^\top
\theta^\star_t) - \mu(a_s^\top \theta^\star_s))a_s     + \lambda \theta^\star_t -
 \gamma^{t-1} S_t
     \right) \right| \;.
\end{align*}
In the last equality, we have used the characterization of the MLE (Equation \eqref{eq:char_MLE_D}).
\begin{equation*}
\begin{split}
    \Delta_t(a, \hat{\theta}_t) &\leq k_\mu  \underbrace{ \left|a^\top \GG_t^{-1}
    \sum_{s=1}^{t-1} \gamma^{t-1-s} (\mu(a_s^\top
\theta^\star_t) - \mu(a_s^\top \theta^\star_s))a_s \right| }_{c_{1,t}(a)} \\
& \quad + k_\mu \underbrace{ \left|a^\top \GG_t^{-1}\sum_{s=1}^{t- D-1} \gamma^{t-1-s}
 \epsilon_{s+1} a_s \right|}_{c_{2,t}
(a)}  
+ k_\mu \underbrace{\left|a^\top \GG_t^{-1} \left(
\gamma^{t-1} S_{\tD} -
 \lambda
\theta^\star_t \right) \right|}_{c_{3,t}(a)}\;.
\end{split}
\end{equation*}
We will bound the different terms.

$c_{1,t}(a)$ can be bounded like $b_{1,t}(a)$ in the proof of Proposition
\ref{prop:alpha_discount}.
\begin{align*}
c_{1,t}(a) \leq \frac{2S k_\mu}{\lambda} \frac{\gamma^{D}}{1-\gamma}\;.
\end{align*}
$c_{2,t}(a)$ can be bounded like $b_{2,t}(a)$ in the proof of the same proposition.
\begin{align*}
c_{2,t}(a) \leq \frac{m}{\lambda} \frac{\gamma^{D}}{1-\gamma}\;.
\end{align*}
The last term requires more work. $\widetilde{\GG}_t(\theta^\star_t, \hat{\theta}_t)$
 will be denoted
$\widetilde{\GG}_t$ for simplicity.
\begin{align*}
c_{3,t}(a) &= \left|a^\top \GG_t^{-1} \left(
\gamma^{t-1} S_{\tD} - \lambda
\theta^\star_t \right) \right| 
=  \left|a^\top \GG_t^{-1} \widetilde{\GG}_t^{1/2} \widetilde{\GG}_t^{-1/2}
\left( \gamma^{t-1} S_{\tD} - \lambda
\theta^\star_t \right) \right|  \\
& \leq \lVert a \rVert_{\GG_t^{-1} \widetilde{\GG}_t \GG_t^{-1}} \lVert
 \gamma^{t-1} S_{\tD}
 - \lambda \theta^\star_t
\rVert_{\widetilde{\GG}^{-1}_t} \\
&\leq \lVert a \rVert_{\GG_t^{-1}} \lVert \gamma^{t-1} S_{\tD} - \lambda
 \theta^\star_t
\rVert_{\widetilde{\GG}^{-1}_t}  \quad \text{(Equation \eqref{eq:link_G_t_G_t_tilde})}
\\
&\leq \lVert a \rVert_{\GG_t^{-1}}  \left( \sqrt{\lambda} S + \lVert \gamma^{t-1} S_{\tD}
 \rVert_{\widetilde{\GG}^{-1}_t} \right) 
\quad (\wGG_t \geq \lambda \identity{d} \;
\textnormal{and Assumption \ref{ass:bounded_actions}} )
 \\
& \leq \frac{\lVert a \rVert_{\VV_t^{-1}}}{\sqrt{c_\mu}} \sqrt{1 + \bar{S} +
 \frac{1}{\sqrt{\lambda}}
\lVert \gamma^{t-1} S_{\tD} \rVert_{\widetilde{\GG}_{\tD}^{-1} }} \left( \sqrt{\lambda} S
 + \lVert \gamma^{t-1} S_{\tD}
 \rVert_{\widetilde{\GG}^{-1}_{\tD}} \right)\;.
\end{align*}
In the last inequality we used Corollary \ref{cor:G_t_V_t_D}.
The next step consists in upper bounding  $\lVert \gamma^{t-1} S_{\tD}
 \rVert_{\widetilde{\GG}^{-1}_{\tD}}$ with
Proposition \ref{prop:deviation_main} and
to combine this with the high probability upper bound from Corollary \ref{cor:concentration_Discounts}.
Therefore, with probability higher than $1-\delta$,
\begin{align*}
c_{3,t}(a) &\leq \frac{\lVert a\rVert_{\VV_t^{-1}}}{\sqrt{c_\mu}} 
\sqrt{1 + \bar{S} + \sqrt{\frac{1 + \bar{S}}{\lambda}} \rho_T^\delta  + \frac{1}{\lambda} (\rho_T^\delta)^2      }   
 \left( \sqrt{\lambda} S + \lVert \gamma^{t-1} S_{\tD} \rVert_{\widetilde{\GG}^{-1}
 _{\tD}} \right) \\
 & \leq \frac{ \sqrt{\lambda} }{\sqrt{c_\mu}} \lVert a\rVert_{\VV_t^{-1}}
\sqrt{1 + \bar{S} + \sqrt{\frac{1 + \bar{S}}{\lambda}} \rho_T^\delta  + \frac{1}{\lambda} (\rho_T^\delta)^2      }   
 \left( S+ \sqrt{\frac{1 + \bar{S}}{\lambda}} \rho_T^\delta  + \frac{1}{\lambda} (\rho_T^\delta)^2  \right) \\
 & \leq \frac{\sqrt{\lambda}}{\sqrt{c_\mu}} \lVert a\rVert_{\VV_t^{-1}}
 \left(
 1 + \bar{S} + \sqrt{\frac{1 + \bar{S}}{\lambda}} \rho_T^\delta  + \frac{1}{\lambda} (\rho_T^\delta)^2 
 \right)^{3/2} \;.
\end{align*}
\end{proof}
The first term of the right hand side of Proposition \ref{prop:upper_delta_t_D_no_proj}
is a bias term resulting from the non-stationarity of the environment. The second term 
results from the concentration results we have established in Section \ref{section:concentration_weights}
combined with the self-concordance assumption.

With $\beta_T^\delta$ defined in Equation \eqref{eq:beta_main}, the algorithm $\DGLM$ selects the action at time
$t$ as follows,
\begin{align}
a_t &= \argmax_{a \in \mathcal{A}_t} \left( \mu(a^\top \hat{\theta}_t)  + \frac{\beta^\delta_T}{\sqrt{c_\mu}} \lVert a
 \rVert_{\VV_t^{-1}} +  \frac{k_\mu}
{\lambda} \frac{\gamma^D}{1-\gamma}(2 S \km + \rM) \right)  \notag
\\
&=
 \argmax_{a \in \mathcal{A}_t} \left( \mu(a^\top \hat{\theta}_t)  + \frac{\beta^\delta_T}{\sqrt{c_\mu}}
 \lVert a \rVert_{\VV_t^{-1}}  \right)\;. \label{eq:choosing_action}
\end{align}

Note that the bias term is independent of the action.
Nevertheless, this term will appear in the upper bound for the regret. Equation \eqref{eq:choosing_action}
explains how the actions are chosen in Algorithm \ref{alg:D-GLM}.

We can now give the main theorem.
\thregretDnoprojmain*

\begin{proof}
Using Proposition \ref{prop:upper_delta_t_D_no_proj}, we obtain a high probability upper bound for $\Delta_t(a,
 \hat{\theta}_t)$. We recall that the exploration bonus of $\DGLM$ is defined as,
$$
\frac{1}{\sqrt{c_\mu}} \beta_T^\delta
\lVert a_t \rVert_{\VV_t^{-1}} 
+
 \frac{k_\mu}{\lambda}   \frac{\gamma^D}{1-\gamma}(2 S \km + \rM)
\;.
$$
Furthermore, the estimator used by $\DGLM$ is the MLE 
$\hat{\theta}_t$ as defined in Equation \eqref{eq:MLE_D},
all the conditions required 
for applying Proposition \ref{prop:delta_t_regret} are met.
Hence when $t \in \mathcal{T}(\gamma)$,
$$
r_t \leq 
 \frac{2}{\sqrt{c_\mu}} \beta_T^\delta
\lVert a_t \rVert_{\VV_t^{-1}}
+
  \frac{2k_\mu}{\lambda}  \frac{\gamma^D}{1-\gamma}(2 S \km + \rM)
\;.
$$
The dynamic regret can then be upper bounded by, 
\begin{align*}
R_T &= \sum_{t=1}^T r_T = \sum_{t \in \mathcal{T}(\gamma)} r_t + \sum_{t \notin
 \mathcal{T}(\gamma)} r_t 
 \leq \Gamma_T D + \sum_{t \in \mathcal{T}(\gamma)} r_t \\
& \leq \Gamma_T D +  \frac{2k_\mu}{\lambda}  \frac{\gamma^D}{1-\gamma}(2 S \km + \rM) T + 
\frac{2\beta_T^\delta}{\sqrt{c_\mu}} \sum_{t \in
 \mathcal{T}(\gamma)} \lVert a_t \rVert_{\VV_t^{-1}} \\
& \leq \Gamma_T D + \frac{2 k_\mu}{\lambda}  \frac{\gamma^D}{1-\gamma}(2 S \km + \rM) T 
+ \frac{2 \beta_T^\delta}{\sqrt{c_\mu}} \sqrt{T}
\sqrt{\sum_{t \in \mathcal{T}(\gamma)}\lVert a_t \rVert_{\VV_t^{-1}}^2 } \quad \textnormal{(Cauchy-Schwarz ineq.)} \\
& \leq \Gamma_T D + \frac{2 k_\mu}{\lambda}  \frac{\gamma^D}{1-\gamma}(2 S \km + \rM) T 
+ \frac{2 \beta_T^\delta}{\sqrt{c_\mu}} \sqrt{T}
\sqrt{\sum_{t=1}^T\lVert a_t \rVert_{\VV_t^{-1}}^2 } \\
& \leq  \Gamma_T D + 
\frac{2 k_\mu}{\lambda}  \frac{\gamma^D}{1-\gamma}(2 S \km + \rM) T 
+ \frac{2 \beta^\delta_T}{\sqrt{c_\mu}} \sqrt{T} \sqrt{2\max\left(1,\frac{1}{\lambda}
\right)
        \log \left( \frac{\det(\VV_{T+1})}{\gamma^{dT} \lambda^d } \right) }\;.
\end{align*}

The last inequality uses Lemma \ref{lemma:ellipticalpotential_Discount}.
Next, we use Corollary \ref{corollary:inequality_determinant_V}
 to upper bound the determinant,
$$
\frac{\det(\VV_{T+1})}{ \gamma^{dT}\lambda^d} \leq \gamma^{-dT} \left( 1 + 
\frac{1- \gamma^T}{\lambda d (1- \gamma)} \right)^d \;.
$$
Applying the logarithm function on both sides yields
\begin{equation*}
\begin{split}
R_T &\leq \Gamma_T D + \frac{2 k_\mu( 2 S k_\mu + \rM)}{\lambda} \frac{\gamma^D}{1- \gamma}T \\
& \quad +
 \frac{2\beta^\delta_T}{\sqrt{c_\mu}} \sqrt{dT}  \sqrt{2\max \left(1,\frac{1}{\lambda}
 \right)}
\sqrt{T \log(1/\gamma) + \log \left(1 + \frac{1}{d \lambda (1- \gamma)} \right) } \;.
\end{split}
\end{equation*}
With the additional constraint $1/2 < \gamma < 1$, by setting $D = \log(T)/\log(1/\gamma)$, 
noticing that $0< 1/\gamma-1 <1$ and using $\log(1+x) \geq x/2 \text{ for } 0 < x < 1$, we have
\begin{align*}
\log(1/\gamma) &= \log(1 + 1/\gamma-1) 
\geq \frac{1-\gamma}{2\gamma}\;.
\end{align*}
Therefore, we have $D \leq \frac{2 \gamma \log(T)}{1- \gamma}$.

By properly balancing the bias term due to the non-stationarity and the rate at which
 the weighted MLE approaches the true 
bandit parameter, the asymptotic behavior of $\DGLM$ can be characterized as follows:
By setting $\gamma = 
1-\left(\frac{\cm^{1/2}\Gamma_T}{dT}\right)^{2/3}$ and
$\lambda = d\log(T)$, we have:
\begin{itemize}
\item $\frac{2 \log(T)}{1- \gamma}\Gamma_T$ scales as $\widetilde{\mathcal{O}}(\cm^{-1/3}d^{2/3} \Gamma_T^{1/3}
T^{2/3})$.
\item $\frac{2 k_\mu( 2 S k_\mu + \rM)}{\lambda} \frac{1}{1- \gamma}$
scales as $\widetilde{\mathcal{O}}(\cm^{-1/3}d^{2/3} \Gamma_T^{-2/3} T^{2/3})$.
\item $\frac{2\beta^\delta_T}{\sqrt{c_\mu}} \sqrt{dT}  \sqrt{2\max \left(1,\frac{1}{\lambda}
 \right)}
\sqrt{T \log(1/\gamma) + \log \left(1 + \frac{1}{d \lambda (1- \gamma)} \right) }$ 
scales as $\frac{1}{\sqrt{\cm}} d T \sqrt{\log(1/\gamma)}$ when omitting logarithmic factors
and constant terms.
\end{itemize}
Using $- \log(1-x) \leq \frac{x-1}{x} \text{ for }  0 \leq x <1$, we also have
\begin{align*}
\sqrt{\log(1/\gamma)} &= \sqrt{- \log(1- (1-\gamma))}
\leq \sqrt{\frac{1- \gamma}{\gamma}} 
 \leq \sqrt{2(1- \gamma)}\;.
\end{align*}
$\sqrt{\log(1/\gamma)}$  scales as $\widetilde{\mathcal{O}}(\cm^{1/6} d^{-1/3} \Gamma_T^{1/3}  T^{-1/3})$. 
Hence scales $\cm^{-1/2} d T \sqrt{\log(1/\gamma)}$ as
$\widetilde{\mathcal{O}}(\cm^{-1/3} d^{2/3} \Gamma_T^{1/3}  T^{2/3})$.
Combining the different terms concludes the proof.
\end{proof}

Using Assumption \ref{ass:minimum_gap}, we can obtain refined regret bounds.

\subsection{Gap-Dependent Bound}
\thmproblem*
\begin{proof}
First note that for any suboptimal action 
$a \in \mathcal{A}_t$, 
$$
\mu(a_{\star,t}^\top \theta^\star_t) - \mu(a^\top \theta^\star_t) \geq \Delta\;.
$$
This implies
\begin{equation}
\label{eq:lien_r_t_delta}
r_t =  \mu(a_{\star,t}^\top \theta^\star_t) 
- \mu(a_t^\top \theta^\star_t) \leq \frac{\left(\mu(a_{\star,t}^\top \theta^\star_t) 
- \mu(a_t^\top \theta^\star_t)\right)^2}{\Delta} = \frac{r_t^2}{\Delta}\;.
\end{equation}
Using Proposition \ref{prop:delta_t_regret} one has,
$$
r_t \leq 
 \frac{2}{\sqrt{c_\mu}} \beta_T^\delta
\lVert a_t \rVert_{\VV_t^{-1}}
+
  \frac{2k_\mu}{\lambda}  \frac{\gamma^D}{1-\gamma}(2 S \km + \rM)
\;.
$$
This implies in particular,
\begin{equation}
\label{eq:r_t_2}
r_t^2 \leq \underbrace{\frac{4}{\cm} (\beta_T^\delta)^2 \lVert a_t \rVert_{\VV_t^{-1}}^2}_{r_{1,t}} 
+ \underbrace{\frac{4 \km^2}{\lambda^2} 
\frac{\gamma^{2D}}{(1-\gamma)^2}(2S\km + \rM)^2}_{r_{2,t}}
+  \underbrace{\frac{8 \km}{\lambda} \frac{\beta_T^\delta}{\sqrt{\cm}} \frac{\gamma^D}{1-\gamma}(2S \km + \rM) 
\lVert a_t \rVert_{\VV_t^{-1}}}_{r_{3,t}}.
\end{equation}
The dynamic regret can then be upper bounded by, 
\begin{align*}
R_T &= \sum_{t=1}^T r_T = \sum_{t \in \mathcal{T}(\gamma)} r_t + \sum_{t \notin
 \mathcal{T}(\gamma)} r_t 
  \leq \Gamma_T D + \sum_{t \in \mathcal{T}(\gamma)} (\mu(a_{\star,t}^\top \theta^\star_t) 
- \mu(a_t^\top \theta^\star_t))
\\
& \leq \Gamma_T D + \frac{1}{\Delta}\sum_{t \in \mathcal{T}(\gamma)} r_t^2\;.
\quad (\text{Equation } \eqref{eq:lien_r_t_delta})
\end{align*}
By applying Equation \eqref{eq:r_t_2}, the regret can be separated in 4 different terms.

When summing for the different time instants $r_{1,t}$ becomes
\begin{align*}
 \sum_{t=1}^T  r_{1,t}
& \leq 
 \frac{8}{\cm} (\beta_T^\delta)^2
  \max\left(1,\frac{1}{\lambda}
\right)
        \log \left( \frac{\det(\VV_{T+1})}{\gamma^{dT} \lambda^d } \right)
          \quad (\text{Lemma 
\ref{lemma:ellipticalpotential_Discount}}) \\
& \leq  \frac{8d}{\cm} (\beta_T^\delta)^2 
\max\left(1,\frac{1}{\lambda}
\right)
\left(
T \log(1/\gamma) + \log \left(1 + \frac{1}{d \lambda (1- \gamma)} \right)
\right)\;.
\quad \text{(Corollary \ref{corollary:inequality_determinant_V})}
\end{align*}
For $r_{2,t}$, we have
\begin{align*}
\sum_{t=1}^T r_{2,t} \leq \frac{4 \km^2}{\lambda^2} 
\frac{\gamma^{2D} T}{(1-\gamma)^2}(2S\km + \rM)^2 \;.
\end{align*}
Furthermore, $r_{3,t}$ is treated as follows:
\begin{align*}
\sum_{t=1}^T r_{3,t} & \leq  \frac{8 \km}{\lambda} \frac{\beta_T^\delta}{\sqrt{\cm}} 
\frac{\gamma^D}{1-\gamma}(2S \km + \rM) 
\sum_{t=1}^T \lVert a_t \rVert_{\VV_t^{-1}} \\
& \leq \frac{8 \km}{\lambda} \frac{\beta_T^\delta}{\sqrt{\cm}} \frac{\gamma^D}{1-\gamma}(2S \km + \rM) 
\sqrt{T} \sqrt{ \sum_{t=1}^T \lVert a_t \rVert^2_{\VV_t^{-1}}} \\
& \leq  \frac{8 \km \beta_T^\delta}{\lambda \sqrt{\cm}} \frac{\gamma^D}{1-\gamma}(2S \km + \rM) 
\sqrt{2dT\max \left(1,\frac{1}{\lambda}
 \right)}
\sqrt{T \log\left(\frac{1}{\gamma}\right) + \log \left(1 + \frac{1}{d \lambda (1- \gamma)} \right) }.
\end{align*}
When $\lambda = d \log(T)$, $D = \frac{\log(T)}{\log(1/\gamma)}$ and 
$\gamma = 1 - \frac{\sqrt{\cm \Gamma_T}}{ d \sqrt{T}}$, we can upper bound the different terms 
following the proof of Theorem \ref{th:regret_D_no_proj_main}.

With those choices, 
\begin{enumerate}
\item $\Gamma_T D $ scales as $\widetilde{\mathcal{O}}(\cm^{-1/2} d \Gamma_T^{1/2} T^{1/2})$
\item $\sum_{t=1}^T r_{1,t}$ scales as $\widetilde{\mathcal{O}}(\cm^{-1/2} d \Gamma_T^{1/2} T^{1/2})$
\item $\sum_{t=1}^T r_{2,t}$ scales as $\widetilde{\mathcal{O}}(\cm^{-1}  \Gamma_T^{-1})$
\item $\sum_{t=1}^T r_{3,t}$ scales as $\widetilde{\mathcal{O}}(d^{1/4} \cm^{-3/4} \Gamma_T^{-1/4} T^{1/4})$
\end{enumerate}
Keeping the highest order term in $T$ and dividing by $\Delta$ yields the announced result.
\end{proof}

\subsection{Refined Exploration Bonus when $\hat{\theta}_t \in \Theta$}

\label{subrefined_explo}

As briefly explained in Remark \ref{rem:fizefzjapcd} in the main paper, when the MLE 
is an admissible parameter ($\hat{\theta}_t \in \Theta$) it is possible to obtain a usually tighter concentration result.
In this section, we explain exactly how this can be done. Note that this improvement is mostly useful for the 
design of the algorithm
and has no impact on the regret guarantees.

We define 
\begin{equation}
\label{eq:beta_improved}
\bar{\beta}_T^\delta = \km \sqrt{1 + 2S} \left(\sqrt{\lambda} S + \rho_T^\delta \right)\;,
\end{equation}
where $\rho_T^\delta$ is defined in Equation \eqref{eq:rho_t_appendix}.
\begin{prop}
\label{proposition:beta_improved}
 For any $\delta \in (0, 1]$,
 with probability higher
  than $1-\delta$,
\begin{equation*}
\forall t  \in \mathcal{T}(\gamma) \; s.t \; \hat{\theta}_t \in \Theta, \,
\Delta_t(a, \hat{\theta}_t) \leq \frac{k_\mu}{\lambda} 
\frac{\gamma^D}{1-\gamma}(2 S \km + \rM) +
\frac{\bar{\beta}_T^\delta}{\sqrt{c_\mu}} 
\lVert a \rVert_{\VV_t^{-1}}\;.
\end{equation*}
\end{prop}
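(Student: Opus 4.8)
The plan is to follow the proof of Proposition~\ref{prop:upper_delta_t_D_no_proj} almost verbatim, the single change being that the extra hypothesis $\hat{\theta}_t\in\Theta$ lets us discard the implicit-equation estimates (Corollary~\ref{cor:G_t_V_t_D} and Proposition~\ref{prop:deviation_main}) in favour of cruder but fully explicit bounds, at the price of the factor $\sqrt{1+2S}$ appearing in~\eqref{eq:beta_improved}. So throughout I would fix $t\in\mathcal{T}(\gamma)$ with $\hat{\theta}_t\in\Theta$, write $\GG_t$ and $\wGG_{\tD}$ for $\GG_t(\theta^\star_t,\hat{\theta}_t)$ and $\wGG_{\tD}(\theta^\star_t,\hat{\theta}_t)$, and work on the event underlying~\eqref{eq:rho_t_appendix}, that is, the one on which Corollary~\ref{cor:concentration_Discounts} together with a union bound over $t\le T$ gives $\lVert\gamma^{t-1}S_{\tD}\rVert_{\wHH_{\tD}^{-1}}\le\rho_T^\delta$ simultaneously for all $t\in\mathcal{T}(\gamma)$.

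The first step is to record two consequences of $\hat{\theta}_t\in\Theta$ that replace Corollary~\ref{cor:G_t_V_t_D} and Proposition~\ref{prop:deviation_main}. (i) Since $\Theta$ is convex and contains both $\theta^\star_t$ (Assumption~\ref{ass:bounded_actions}) and $\hat{\theta}_t$, every point $(1-v)\theta^\star_t+v\hat{\theta}_t$ lies in $\Theta$; with $\lVert a_s\rVert_2\le1$ and Assumption~\ref{ass:link_fun} this gives $\alpha(a_s,\theta^\star_t,\hat{\theta}_t)\ge\cm$ for all $s$, hence $\GG_t\ge\cm\VV_t$, the regularizers matching because $\VV_t$ carries $\frac{\lambda}{\cm}\identity{d}$. (ii) Because $\lvert a_s^\top(\hat{\theta}_t-\theta^\star_t)\rvert\le\lVert a_s\rVert_2(\lVert\hat{\theta}_t\rVert_2+\lVert\theta^\star_t\rVert_2)\le2S$, Lemma~\ref{lemma:self_concordance} yields $\alpha(a_s,\theta^\star_t,\hat{\theta}_t)\ge(1+2S)^{-1}\dot{\mu}(a_s^\top\theta^\star_t)$; summing over $s\in\{t-D,\dots,t-1\}$ (where $\theta^\star_s=\theta^\star_t$ since $t\in\mathcal{T}(\gamma)$) and using $(1+2S)^{-1}\le1$ on the regularization term gives $\wGG_{\tD}\ge(1+2S)^{-1}\wHH_{\tD}$.

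Next I would reproduce the decomposition from the proof of Proposition~\ref{prop:upper_delta_t_D_no_proj}: by Lipschitzness of $\mu$, the mean value theorem and the MLE characterization~\eqref{eq:char_MLE_D}, $\Delta_t(a,\hat{\theta}_t)\le\km\big(c_{1,t}(a)+c_{2,t}(a)+c_{3,t}(a)\big)$, where the action-independent bias terms obey $c_{1,t}(a)\le\frac{2S\km}{\lambda}\frac{\gamma^D}{1-\gamma}$ and $c_{2,t}(a)\le\frac{\rM}{\lambda}\frac{\gamma^D}{1-\gamma}$ exactly as there. For $c_{3,t}(a)$ I would run the same chain --- insert $\wGG_t^{1/2}\wGG_t^{-1/2}$, apply Cauchy--Schwarz, use $\wGG_t\le\GG_t$ (Equation~\eqref{eq:link_G_t_G_t_tilde}), the triangle inequality, $\wGG_t\ge\lambda\identity{d}$ with $\lVert\theta^\star_t\rVert_2\le S$, and $\wGG_t\ge\wGG_{\tD}$ --- reaching $c_{3,t}(a)\le\lVert a\rVert_{\GG_t^{-1}}\big(\sqrt{\lambda}\,S+\lVert\gamma^{t-1}S_{\tD}\rVert_{\wGG_{\tD}^{-1}}\big)$. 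Now (ii) gives $\lVert\gamma^{t-1}S_{\tD}\rVert_{\wGG_{\tD}^{-1}}\le\sqrt{1+2S}\,\lVert\gamma^{t-1}S_{\tD}\rVert_{\wHH_{\tD}^{-1}}\le\sqrt{1+2S}\,\rho_T^\delta$ on the event fixed above, while (i) gives $\lVert a\rVert_{\GG_t^{-1}}\le\cm^{-1/2}\lVert a\rVert_{\VV_t^{-1}}$; with $1\le\sqrt{1+2S}$ this bounds $\km c_{3,t}(a)$ by $\cm^{-1/2}\km\sqrt{1+2S}\,(\sqrt{\lambda}\,S+\rho_T^\delta)\lVert a\rVert_{\VV_t^{-1}}=\cm^{-1/2}\bar{\beta}_T^\delta\lVert a\rVert_{\VV_t^{-1}}$ by~\eqref{eq:beta_improved}. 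Adding the two bias terms, which sum to $\frac{\km}{\lambda}\frac{\gamma^D}{1-\gamma}(2S\km+\rM)$, yields the claimed inequality, uniformly in $a$ and in $t\in\mathcal{T}(\gamma)$ with $\hat{\theta}_t\in\Theta$.

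I do not expect a genuine obstacle here, since the whole argument is a strict simplification of the already-proved Proposition~\ref{prop:upper_delta_t_D_no_proj}. The only points needing a little care are the Loewner bookkeeping in (i)--(ii) --- making sure the comparisons survive the added regularizers, which works precisely because $(1+2S)^{-1}\le1$ and because $\VV_t$ is built with the $\cm$-rescaled regularizer --- and ensuring that the lone high-probability event invoked is the union-bounded one already absorbed into $\rho_T^\delta$ through Corollary~\ref{cor:concentration_Discounts}, so that the ``$\forall t$'' in the statement is legitimate.
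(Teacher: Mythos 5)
Your proposal is correct and follows essentially the same route as the paper's proof: re-run the decomposition of Proposition~\ref{prop:upper_delta_t_D_no_proj}, replace the implicit-equation machinery by the direct self-concordance comparison $\wGG \geq (1+2S)^{-1}\wHH$ (the paper invokes Lemma~\ref{lemma:boundGtbyHt_D} for this) together with $\GG_t \geq c_\mu \VV_t$, both valid because $\hat{\theta}_t,\theta^\star_t \in \Theta$. The only differences --- applying the $\sqrt{1+2S}$ conversion before versus after the triangle inequality, and working with the windowed matrices throughout rather than passing through $\wHH_t$ and then $\wHH_{\tD} \leq \wHH_t$ --- are cosmetic.
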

\begin{proof}
We use the notation $\GG_t$ (respectively $\wGG_t$) instead of  
$\GG_t(\theta^\star_t, \hat{\theta}_t)$ 
(respectively $\wGG_t(\theta^\star_t, \hat{\theta}_t)$).
Following the same steps as for the proof of Proposition \ref{prop:upper_delta_t_D_no_proj},
one gets
\begin{align*}
    \Delta_t(a, \hat{\theta}_t) &\leq
    \frac{\km}{\lambda} \frac{\gamma^D}{1-\gamma} (2S\km + \rM) + \km | a^\top \GG_t^{-1} 
    (\gamma^{t-1} S_{\tD} - \lambda \theta^\star_t)| \\
    & \leq
        \frac{\km}{\lambda} \frac{\gamma^D}{1-\gamma} (2S\km + \rM)  + 
     \lVert a \rVert_{\GG_t^{-1} \widetilde{\GG}_t \GG_t^{-1}} \lVert
 \gamma^{t-1} S_{\tD}
 - \lambda \theta^\star_t
\rVert_{\widetilde{\GG}^{-1}_t} \\
& \leq
        \frac{\km}{\lambda} \frac{\gamma^D}{1-\gamma} (2S\km + \rM)  + 
     \lVert a \rVert_{\GG_t^{-1}} \lVert
 \gamma^{t-1} S_{\tD}
 - \lambda \theta^\star_t
\rVert_{\widetilde{\GG}^{-1}_t}\;. \quad (\text{Equation \eqref{eq:link_G_t_G_t_tilde}})
\end{align*}
Here, with the additional assumption $\hat{\theta}_t \in \Theta$, the self-concordance can be used to obtain an easier
relation between $\wGG_t$ and $\wHH_t$ as stated in Lemma \ref{lemma:boundGtbyHt_D}.
\begin{align*}
    \Delta_t(a, \hat{\theta}_t) &\leq
       \frac{\km}{\lambda} \frac{\gamma^D}{1-\gamma} (2S\km + \rM)  + 
    \sqrt{1 + 2S} \lVert a \rVert_{\GG_t^{-1}} \lVert
 \gamma^{t-1} S_{\tD}
 - \lambda \theta^\star_t
\rVert_{\wHH^{-1}_t}  
\quad \text{(Lemma \ref{lemma:boundGtbyHt_D})}
\\ 
& \leq   \frac{\km}{\lambda} \frac{\gamma^D}{1-\gamma} (2S\km + \rM)  + 
    \sqrt{1 + 2S} \lVert a \rVert_{\GG_t^{-1}} \lVert
 \gamma^{t-1} S_{\tD}
 - \lambda \theta^\star_t
\rVert_{\wHH^{-1}_{\tD}}\;.
\end{align*}
The last inequality uses $\wHH_{\tD} \leq \wHH_t$.
Now by applying Corollary \ref{cor:concentration_Discounts}, $\Delta_t(a, \hat{\theta}_t)$ can be further 
upper bounded.
\begin{align*}
    \Delta_t(a, \hat{\theta}_t) &\leq
     \frac{\km}{\lambda} \frac{\gamma^D}{1-\gamma} (2S\km + \rM)  + 
      \sqrt{1 + 2S} \lVert a \rVert_{\GG_t^{-1}}  \left( \sqrt{\lambda} S + \rho_T^\delta \right)\;.
\end{align*}
The final step consists in using $\GG_t = \GG_t(\theta^\star_t, \hat{\theta}_t) \geq \cm \VV_t$ 
which holds because both $\hat{\theta}_t$ and $\theta^\star_t$ are in $\Theta$.
\end{proof}

Consequently, when $\hat{\theta}_t \in \Theta$, the action $a_t$ at time $t$ can be 
chosen according to:
\begin{align}
a_t &= \argmax_{a \in \mathcal{A}_t} \left( \mu(a^\top \hat{\theta}_t)  + \frac{\bar{\beta}^\delta_T}{\sqrt{c_\mu}} \lVert a
 \rVert_{\VV_t^{-1}} +  \frac{k_\mu}
{\lambda} \frac{\gamma^D}{1-\gamma}(2 S \km + \rM) \right)  \notag
\\
&=
 \argmax_{a \in \mathcal{A}_t} \left( \mu(a^\top \hat{\theta}_t)  + \frac{\bar{\beta}^\delta_T}{\sqrt{c_\mu}}
 \lVert a \rVert_{\VV_t^{-1}}  \right)\;. \label{eq:choosing_action_admissible}
\end{align}
\section{REGRET ANALYSIS WITH A SLIDING WINDOW}
In the main paper only the analysis with discount factors is discussed. 
However as in the linear bandit literature, the analysis with exponential weights and a sliding 
window share similarities, in particular they have the same form of guarantees for the regret. 
For the sake of completeness, we give a detailed analysis of the results achievable with a sliding window.
\label{section:regret_SW_appendix}

\subsection{Notation}
\label{subsection:notations_SW_appendix}
Let us first introduce the main notations. For any value of $\theta \in \mathbb{R}^d$,  we define,
\begin{equation}
\label{eq:H_t_SW}
    \HH_t(\theta) = \sum_{s=\max(1,t- \tau)}^{t-1} \dot{\mu}(a_s^{\top} \theta) a_s a_s^{\top} +
     \lambda \identity{d}\;.
\end{equation}
\begin{equation}
   \VV_t = \sum_{s=\max(1,t- \tau)}^{t-1} a_s a_s^{\top} + \frac{\lambda}
    {c_{\mu}} \identity{d}\;.
\end{equation}
\begin{equation}
    g_t(\theta) = \sum_{s=\max(1,t- \tau)}^{t-1} \mu(a_s^{\top} \theta) a_s + \lambda \theta\;.
\end{equation}
\begin{equation}
S_t = \sum_{s=\max(1, t- \tau)}^{t-1}  \epsilon_{s+1} a_s \;.
\end{equation}
For any $\theta_1, \theta_2 \in \mathbb{R}^d$,
\begin{align*}
    \alpha(a,\theta_1,\theta_2) = \int_{0}^1 \dot{\mu}(va^{\top} \theta_2+(1-v)
    a^\top  \theta_1)dv\;.
\end{align*}
\begin{equation}
\label{eq:G_t_SW}
\GG_t(\theta_1,\theta_2) =  \sum_{s=\max(1,t - \tau) }^{t-1} \alpha(a_s,\theta_1,\theta_2)a_s a_s^\top + \lambda
 \identity{d}\;.
\end{equation}
Let $\HH_t$ be defined as
\begin{equation}
	\label{eq:H_t_good_sliding}
    \HH_t = \sum_{s=\max(1,t- \tau)}^{t-1} \dot{\mu}(a_s^{\top} \theta^{\star}_s) a_s a_s^{\top} +
     \lambda \identity{d} \;.
\end{equation}
Let us define $\mathcal{T}(\tau)$ as
\begin{equation}
    \label{t_tau}
    \mathcal{T}(\tau) = \{ 1 \leq t \leq T, \forall s, \text{such that} \,  t- \tau \leq s \leq t-1,
     \theta^{\star}_s =
    \theta^{\star}_t\}\;.
\end{equation}
$t \in \mathcal{T}(\tau)$ when $t$ is a least $\tau$ steps away from
the closest previous breakpoint. 
When focusing on time instants in
 $\mathcal{T}(\tau)$ the bias due to non-stationarity disappears.
In the sliding window setting, we construct an estimator based on a truncated penalized log-likelihood. 
In this section, $\hat{\theta}_t$ is defined as the unique maximizer of
\begin{equation}
\label{eq:MLE_sliding_window}
\sum_{s=\max(1, t-\tau)}^{t-1} \log \mathbb{P}_{\theta}(r_{s+1} | a_s)
- \frac{\lambda}{2} \lVert \theta \rVert_2^2\;.
\end{equation}
By using the definition of the GLM and thanks to the concavity of this equation in $
\theta$, $\hat{\theta}_t$ is the unique solution of
$$
\sum_{s=\max(1, t- \tau)}^{t-1} (r_{s+1} - \mu(a_s^\top \theta))a_s - \lambda \theta 
= 0\;.
$$
This can be summarized with
\begin{align*}
    g_t(\hat{\theta}_t) &= \sum_{s= \max(1, t- \tau)}^{t-1} r_{s+1} a_s
    =  S_t + \sum_{s=\max(1, t- \tau)}^{t-1} \mu(a_s^\top \theta^\star_s) a_s\;.
\end{align*}

\subsection{Algorithm}
The $\SW$ algorithm proceeds as follows. First, based on the $\tau$ last rewards and actions,
$\hat{\theta}_t$ is computed using Equation \eqref{eq:MLE_sliding_window}.
Then, after receiving the action set $\mathcal{A}_t$ the action $a_t$ is chosen optimistically.
Finally, by proposing this action a reward $r_{t+1}$ is received and the design matrix is updated. 
The pseudo code of $\SW$ is reported in Algorithm \ref{alg:SW-GLM}.
\begin{algorithm}[H]
\caption{$\SW$}
   \label{alg:SW-GLM}
\begin{algorithmic}
   \STATE {\bfseries Input:} Probability $\delta$, dimension $d$, regularization $\lambda$,
   upper bound for bandit parameters $S$, sliding window $\tau$.
\STATE {\bfseries Initialize:} $\VV_0 = (\lambda/c_{\mu}) \identity{d}$, $\hat{\theta}_0 = 0_{\mathbb{R}^d}$.
   \FOR{$t=1$ {\bfseries to} $T$}
   \STATE Receive $\mathcal{A}_t$, compute $\hat{\theta}_t$ according to (\ref{eq:MLE_sliding_window})
   \STATE {\bfseries Play} $a_t = \argmax_{a \in \mathcal{A}_t} \mu(a^\top \hat{\theta}_t)  + \frac{\beta^\delta_t}{\sqrt{c_\mu}}
 \lVert a \rVert_{\VV_t^{-1}} $ with $\beta_{t}^\delta$ defined in Equation \eqref{beta_t_SW}
  \STATE {\bfseries Receive} reward $r_{t+1}$
  \STATE {\bfseries Update:} 
  \IF {$t < \tau$}
  \STATE $\VV_{t+1} \leftarrow a_t a_t^{\top} +
  \VV_{t}$
  \ELSE
  \STATE $\VV_{t+1} \leftarrow a_t a_t^{\top} - a_{t- \tau} a_{t-\tau}^{\top} +
  \VV_{t}$
  \ENDIF
   \ENDFOR
\end{algorithmic}
\end{algorithm}

\subsection{Analysis of the Regret of $\SW$}
In Section \ref{section:regret_D_appendix}, the self-concordance
is the key tool to obtain an analysis without using a projection step. 
In the next proposition, we link the matrix $\GG_t(\hat{\theta}_t, \theta^\star_t)$
 with $\HH_t(\theta^\star_t)$ independently from $\cm$.

\begin{prop}
\label{qlzkdlqkzdl}
When $\hat{\theta}_t$ is the maximum likelihood estimator as defined in Equation
\eqref{eq:MLE_sliding_window} and $t \in \mathcal{T}(\tau)$, we have:
$$
\alpha(a, \theta^\star_t, \hat{\theta}_t) \geq \left( 1+ S + \frac{1}{\sqrt{\lambda}} 
\lVert S_t \rVert_{\GG_t^{-1}(\theta^\star_t, \hat{\theta}_t)}\right)^{-1} \dot{\mu}(a^\top \theta^\star_t)\;.
$$
\end{prop}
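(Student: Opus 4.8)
The plan is to follow the route of the proof of Proposition~\ref{prop:alpha_discount}, which is in fact lighter here: once $t\in\mathcal{T}(\tau)$ the finite sliding window contains \emph{no} observation generated before the last breakpoint, so no bias term survives. First I would invoke Lemma~\ref{lemma:self_concordance} to get $\alpha(a,\theta^\star_t,\hat{\theta}_t)\geq\big(1+|a^\top(\hat{\theta}_t-\theta^\star_t)|\big)^{-1}\dot{\mu}(a^\top\theta^\star_t)$, so that it suffices to prove $|a^\top(\hat{\theta}_t-\theta^\star_t)|\leq S+\frac{1}{\sqrt{\lambda}}\lVert S_t\rVert_{\GG_t^{-1}(\theta^\star_t,\hat{\theta}_t)}$.

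To bound this quantity I would use the identity $\mu(a_s^\top\hat{\theta}_t)-\mu(a_s^\top\theta^\star_t)=\alpha(a_s,\theta^\star_t,\hat{\theta}_t)\,a_s^\top(\hat{\theta}_t-\theta^\star_t)$, which yields $g_t(\hat{\theta}_t)-g_t(\theta^\star_t)=\GG_t(\theta^\star_t,\hat{\theta}_t)(\hat{\theta}_t-\theta^\star_t)$ and hence $a^\top(\hat{\theta}_t-\theta^\star_t)=a^\top\GG_t^{-1}\big(g_t(\hat{\theta}_t)-g_t(\theta^\star_t)\big)$, writing $\GG_t=\GG_t(\theta^\star_t,\hat{\theta}_t)$. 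Substituting the MLE characterization $g_t(\hat{\theta}_t)=S_t+\sum_{s=\max(1,t-\tau)}^{t-1}\mu(a_s^\top\theta^\star_s)a_s$ from Section~\ref{subsection:notations_SW_appendix} and using $\theta^\star_s=\theta^\star_t$ for every $s$ in the window (because $t\in\mathcal{T}(\tau)$), the non-stationary sum cancels and I am left with $g_t(\hat{\theta}_t)-g_t(\theta^\star_t)=S_t-\lambda\theta^\star_t$. A triangle inequality, Cauchy--Schwarz in the $\GG_t^{-1}$ inner product, the bound $\GG_t\geq\lambda\identity{d}$ and Assumption~\ref{ass:bounded_actions} ($\lVert a\rVert_2\leq 1$, $\lVert\theta^\star_t\rVert_2\leq S$) then give $|a^\top\GG_t^{-1}(S_t-\lambda\theta^\star_t)|\leq\lVert a\rVert_{\GG_t^{-1}}\lVert S_t\rVert_{\GG_t^{-1}}+\lambda\lVert a\rVert_{\GG_t^{-1}}\lVert\theta^\star_t\rVert_{\GG_t^{-1}}\leq\frac{1}{\sqrt{\lambda}}\lVert S_t\rVert_{\GG_t^{-1}}+S$; combining with the self-concordance inequality above closes the proof.

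The only genuinely substantive ingredient is Lemma~\ref{lemma:self_concordance}, which is where Assumption~\ref{ass:SC} is spent to lower-bound $\alpha$ by $\dot{\mu}(a^\top\theta^\star_t)$ at a cost that depends only on $|a^\top(\hat{\theta}_t-\theta^\star_t)|$ rather than on a global constant such as $\cm$; I do not expect any real obstacle beyond correctly invoking it. Relative to the discounted analysis the bookkeeping is strictly simpler, since the terms analogous to $b_{1,t}$ and to the noise part of $b_{2,t}$ in the proof of Proposition~\ref{prop:alpha_discount} are identically zero for a sliding window.
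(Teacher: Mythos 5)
Your proposal is correct and follows essentially the same route as the paper's own proof: Lemma~\ref{lemma:self_concordance}, the mean-value (integral) identity $g_t(\hat{\theta}_t)-g_t(\theta^\star_t)=\GG_t(\theta^\star_t,\hat{\theta}_t)(\hat{\theta}_t-\theta^\star_t)$, the MLE first-order condition with the bias cancelling because $t\in\mathcal{T}(\tau)$, and then Cauchy--Schwarz with $\GG_t\geq\lambda\identity{d}$ and $\lVert\theta^\star_t\rVert_2\leq S$. The only cosmetic difference is that you apply the triangle inequality before Cauchy--Schwarz while the paper does the reverse; both yield the same bound.
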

Note that the main difference with Proposition \ref{prop:alpha_discount} is that $\bar{S}$ is now replaced by $S$.
This is due to the fact that the bias disappears when using a sliding window for $t \in \mathcal{T}(\tau)$.
\begin{proof}
Thanks to Lemma \ref{lemma:self_concordance}, we have:
\begin{align*}
    \alpha(a,\theta^\star_t,\hat\theta_t) &
    \geq \left(1+\left\vert
    a^{\top}(\theta^\star_t-\hat\theta_t)\right\vert\right)^{-1}\dot{\mu}(a^{\top}\theta^\star_t)
    \\
    & \geq \left(1+\left\vert
    a^{\top} \GG_t^{-1}(\theta^\star_t, \hat{\theta}_t)
    (g_t(\theta^\star_t)-g_t(\hat\theta_t)) \right\vert\right)^{-1}
    \dot{\mu}(a^{\top}\theta^\star_t)
	 \quad \textnormal{(Mean-Value Theorem)}
    \\
    &\geq \left(1+\left\lVert a\right\rVert_{
    \GG_t^{-1}(\theta^\star_t,\hat\theta_t)}\left\lVert
    g_t(\theta^\star_t) - g_t(\hat\theta_t) \right\rVert_{\GG_t^{-1}(\theta^\star_t,\hat\theta_t)}
    \right)^{-1
    }\dot{\mu
    }(a^\top \theta^\star_t) \quad \textnormal{(Cauchy–Schwarz)}\\
     &\geq \left(1+  \lambda^{-1/2}  \left\lVert
    g_t(\theta^\star_t) - g_t(\hat\theta_t) \right\rVert_{\GG_t^{-1}(\theta^\star
    _t,\hat\theta_t)}
    \right)^{-1
    }\dot{\mu
    }(a^\top \theta^\star_t) \quad \textnormal{($\GG_t(\theta^\star_t,\hat\theta_t)
    \geq \lambda \identity{d}$)}\\
    &\geq \left(1+\LL \lambda^{-1/2}\left\lVert S_t -
    \lambda\theta^\star_t\right\rVert_{\GG_t^{-1}
    (\theta^\star_t,\hat\theta_t)}\right)^{-1}\dot{\mu}
    (a^\top \theta^\star_t) \quad \text{($t \in \mathcal{T}(\tau)$)}\\
    &\geq \left(1+\LL S+ \LL \lambda^{-1/2}\left\lVert
    S_t\right\rVert_{\GG_t^{-1}(\theta^\star_t,\hat\theta_t)}\right)^{-1}\dot{\mu}(a^\top \theta^\star_t)
    \;.
\end{align*}
\end{proof}
\begin{cor}
\label{prop:self_concord_SW}
When $\hat{\theta}_t$ is the maximum likelihood estimator as
defined in Equation \eqref{eq:MLE_sliding_window},
 when $t \in \mathcal{T}(\tau)$ and
$\HH_t$ is defined in Equation \eqref{eq:H_t_good_sliding}, we have,
$$
 \GG_t(\theta^*_t,\hat{\theta}_t)
  \geq \left(1+  S + \frac{1}{\sqrt{\lambda}} \left\lVert
    S_t\right\rVert_{\GG_t^{-1}(\theta^*_t,\hat\theta_t)
    }\right)^{-1} \HH_t\;.
$$
Furthermore,
$$
\forall t \leq T, \, \lVert S_t \rVert_{\GG_t^{-1}(\theta^\star_t, \hat{\theta}_t)} \leq
 \sqrt{1+ \LL S}\left\lVert S_t\right\rVert_{\HH_t^{-1}
    }
    +
\frac{1}{\sqrt{\lambda}}
    \left
    \lVert S_t\right\rVert_{\HH_t^{-1}}^2 
   \;.
$$
\end{cor}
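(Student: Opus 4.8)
The plan is to read off the first, matrix-level inequality directly from Proposition~\ref{qlzkdlqkzdl} applied term by term over the sliding window, and then to obtain the second, scalar inequality by inverting that matrix relation and solving the implicit quadratic constraint it imposes on $\lVert S_t\rVert_{\GG_t^{-1}(\theta^\star_t,\hat\theta_t)}$. This is the exact analogue, for the sliding window, of the chain Corollary~\ref{cor:S_t_G_t_S_t_H_t}~$\to$~Proposition~\ref{prop:deviation_main} used in the discount-factor analysis.

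First I would abbreviate $\GG_t=\GG_t(\theta^\star_t,\hat\theta_t)$ and set $c_t:=\big(1+S+\tfrac{1}{\sqrt\lambda}\lVert S_t\rVert_{\GG_t^{-1}}\big)^{-1}$, observing that $c_t\in(0,1]$ since $S\geq 0$ and the Mahalanobis norm is nonnegative. Proposition~\ref{qlzkdlqkzdl} gives, for every $s$ in the window $\max(1,t-\tau)\leq s\leq t-1$, the scalar bound $\alpha(a_s,\theta^\star_t,\hat\theta_t)\geq c_t\,\dot\mu(a_s^\top\theta^\star_t)$. Hence $\alpha(a_s,\theta^\star_t,\hat\theta_t)-c_t\dot\mu(a_s^\top\theta^\star_t)\geq 0$, and multiplying by the positive semidefinite matrix $a_sa_s^\top$ and summing yields
$$\sum_{s}\alpha(a_s,\theta^\star_t,\hat\theta_t)\,a_sa_s^\top\ \geq\ c_t\sum_{s}\dot\mu(a_s^\top\theta^\star_t)\,a_sa_s^\top\;.$$
Because $t\in\mathcal{T}(\tau)$, the whole window lies in a single stationary phase, so $\theta^\star_s=\theta^\star_t$ for each such $s$ and the right-hand sum equals $\HH_t-\lambda\identity{d}$ with $\HH_t$ as in Equation~\eqref{eq:H_t_good_sliding}. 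Adding $\lambda\identity{d}$ to both sides and using $\lambda\identity{d}\geq c_t\lambda\identity{d}$ (valid since $c_t\leq 1$) gives $\GG_t\geq c_t\HH_t$, which is the first claim.

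Next I would invert this relation: since both matrices are positive definite, $\GG_t\geq c_t\HH_t$ implies $\GG_t^{-1}\leq c_t^{-1}\HH_t^{-1}$, so, writing $X=\lVert S_t\rVert_{\GG_t^{-1}}$ and $Y=\lVert S_t\rVert_{\HH_t^{-1}}$, we get $X^2=S_t^\top\GG_t^{-1}S_t\leq c_t^{-1}S_t^\top\HH_t^{-1}S_t=\big(1+S+\tfrac{X}{\sqrt\lambda}\big)Y^2$, i.e. $X^2-\tfrac{Y^2}{\sqrt\lambda}X-(1+S)Y^2\leq 0$. This is a quadratic inequality in $X\geq 0$ whose smaller root is nonpositive, so $X$ is bounded by the larger root $\tfrac{Y^2}{2\sqrt\lambda}+\tfrac12\sqrt{Y^4/\lambda+4(1+S)Y^2}$; applying $\sqrt{a+b}\leq\sqrt a+\sqrt b$ to the square root collapses this to $X\leq\sqrt{1+S}\,Y+\tfrac{1}{\sqrt\lambda}Y^2$, which is exactly the stated bound. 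I do not expect a genuine obstacle here; the only points requiring care are keeping the directions of the Loewner-order inequalities straight through the inversion, and correctly handling the self-referential appearance of $\lVert S_t\rVert_{\GG_t^{-1}}$ on both sides — which is precisely what the quadratic-solving step is designed to resolve, exactly as in the proof of Proposition~\ref{prop:deviation_main}.
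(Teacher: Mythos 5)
Your proposal is correct and follows essentially the same route as the paper: sum the pointwise bound from Proposition~\ref{qlzkdlqkzdl} over the window, use $t\in\mathcal{T}(\tau)$ to identify the right-hand side with $\HH_t-\lambda\identity{d}$, absorb the regularization via $\lambda\identity{d}\geq c_t\lambda\identity{d}$, and then solve the resulting quadratic constraint on $\lVert S_t\rVert_{\GG_t^{-1}(\theta^\star_t,\hat\theta_t)}$ exactly as in Proposition~\ref{prop:deviation_main}. Your explicit larger-root-plus-subadditivity step is just a spelled-out version of the paper's ``solving this polynomial inequality.''
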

\begin{proof}
Using Proposition \ref{qlzkdlqkzdl}
and summing for time instants $s$ such that $\max(1,t-\tau) \leq s \leq t-1$,
\begin{align*}
\sum_{s=t-\tau}^{t-1} \alpha(a_s, \theta^\star_t, \hat{\theta}_t) a_s a_s^{\top}
& \geq
\left(1+  S +  \lambda^{-1/2}\left\lVert
    S_t\right\rVert_{\GG_t^{-1}(\theta^\star_t,\hat\theta_t)}\right)^{-1}
\sum_{s=t-\tau}^{t-1} \dot{\mu}(a_s^{\top} \theta^\star_s) a_s a_s^\top\;.
\end{align*}
Where we use $\theta^\star_s = \theta^\star_t$ for 
$ t-\tau \leq s \leq t-1$ thanks to the assumption $t\in \mathcal{T}(\tau)$. 
The next step
consists in adding the regularization term on both sides.
 Note that $ \left(1+ S +  \lambda^{-1/2}\left\lVert
    S_t\right\rVert_{\GG_t^{-1}(\theta^\star_t,\hat\theta_t)}\right)
     \lambda \geq \lambda$
     and obtain,
    $$
    \GG_t(\theta^\star_t,\hat{\theta}_t)
  \geq \left(1+  S +  \lambda^{-1/2}\left\lVert
    S_t\right\rVert_{\GG_t^{-1}(\theta^\star_t,\hat\theta_t)
    }\right)^{-1} \HH_t\;.
    $$ 
This in turn implies,
\begin{align*}
    &\left\lVert S_t\right\rVert_{\GG_t^{-1}(\theta^\star_t,\hat\theta_t)}^2 \leq \left(1+ S +
 \lambda^{-1/2}\left\lVert S_t\right\rVert_{\GG_t^{-1}
 (\theta^\star_t,\hat\theta_t)}\right
 )\left
    \lVert S_t\right\rVert_{\HH_t^{-1}}^2
    \\
    &\Longleftrightarrow \left\lVert S_t\right\rVert_{\GG_t^{-1}(\theta^\star_t,\hat
    \theta_t)}^2 - \LL \lambda^{-1/2}\left\lVert S_t\right\rVert_{\HH_t^{-1}}
    ^2 \left\lVert S_t\right\rVert_{\GG_t^{-1}(\theta^\star_t,\hat\theta_t)} - (1+ \LL S)
    \left\lVert S_t\right\rVert_{\HH_t^{-1}}^2 \leq 0\;.
\end{align*}

Solving this polynomial inequality (in $\left \lVert S_t\right\rVert_{\GG_t^{-1}
(\theta^\star_t,
\hat\theta_t)}$) finally gives,
\begin{align*}
    \left\lVert S_t\right\rVert_{\GG_t^{-1}(\theta^\star_t,\hat\theta_t)} \leq 
\sqrt{1+ \LL S}\left\lVert S_t\right\rVert_{\HH_t^{-1}}
+    
    \frac{1}{\sqrt{\lambda}}  \left \lVert S_t\right\rVert_{\HH_t^{-1}}^2   
    \;.
\end{align*}
\end{proof}
Using this technique, we have established an explicit link between
 $\GG_t(\theta^\star_t,\hat\theta_t)$ and $\HH_t$ without the need to project $
\hat{\theta}_t$ on $\Theta$ when $t \in \mathcal{T}(\tau)$.

We define
\begin{equation}
\label{eq:rho_SW}
\rho_t^\delta = \left( \frac{\sqrt{\lambda}} {2 \rM} + \frac{2
\rM
  }{\sqrt{\lambda} }\log\left(\frac{T}{\delta}\right) +
 \frac{d \rM
  }{ \sqrt{\lambda} }\log\left(1 + \frac{k_\mu \min(t, \tau)}{d \lambda}
  \right)+\frac{2 \rM}{\sqrt{\lambda}}
  d\log(2) \right)\;,
\end{equation}
and
\begin{equation}
\label{beta_t_SW}
\beta_{t}^\delta = \km \sqrt{\lambda} \left( 1 + S  + \sqrt{\frac{1 + S}{\lambda}}
\rho_t^\delta + \left( \frac{\rho_t^\delta}{\sqrt{\lambda}} \right)^2   \right)^{3/2 }\;.
\end{equation} 
In the next proposition, we give an upper bound for $\Delta_t(a, \hat{\theta}_t)$.
\begin{prop}
For any $\delta \in (0,1]$, with probability higher
than $1-\delta$,
$$
\forall t \in \mathcal{T}(\tau), \;
\Delta_t(a, \hat{\theta}_t) \leq \frac{\beta_t^\delta}{\sqrt{c_\mu} }
\lVert a \rVert_{\VV_t^{-1}} \;.
$$
\end{prop}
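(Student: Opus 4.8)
The plan is to replay the proof of Proposition~\ref{prop:upper_delta_t_D_no_proj} in the sliding-window setting, where it simplifies substantially: for $t\in\mathcal{T}(\tau)$ the window $\{\max(1,t-\tau),\dots,t-1\}$ contains no breakpoint, so $\theta^\star_s=\theta^\star_t$ throughout and the bias term $\frac{\km}{\lambda}\frac{\gamma^D}{1-\gamma}(2S\km+\rM)$ of the discounted case vanishes. First I would use the Lipschitz property of $\mu$ (Assumption~\ref{ass:link_fun}) to write $\Delta_t(a,\hat\theta_t)=\lvert\mu(a^\top\theta^\star_t)-\mu(a^\top\hat\theta_t)\rvert\le\km\lvert a^\top(\theta^\star_t-\hat\theta_t)\rvert$. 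Then, by the mean value theorem $g_t(\theta^\star_t)-g_t(\hat\theta_t)=\GG_t(\theta^\star_t,\hat\theta_t)(\theta^\star_t-\hat\theta_t)$, and the first-order optimality condition of the truncated MLE~\eqref{eq:MLE_sliding_window}, namely $g_t(\hat\theta_t)=S_t+\sum_s\mu(a_s^\top\theta^\star_s)a_s=S_t+g_t(\theta^\star_t)-\lambda\theta^\star_t$ (using $\theta^\star_s=\theta^\star_t$ on the window), gives $g_t(\theta^\star_t)-g_t(\hat\theta_t)=\lambda\theta^\star_t-S_t$. Hence $\Delta_t(a,\hat\theta_t)\le\km\lvert a^\top\GG_t^{-1}(S_t-\lambda\theta^\star_t)\rvert$.

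Next I would split this by Cauchy--Schwarz as $\km\lVert a\rVert_{\GG_t^{-1}}\lVert S_t-\lambda\theta^\star_t\rVert_{\GG_t^{-1}}$ (note that in the sliding-window case there is no squared-weights matrix, so $\GG_t$ plays both roles and no analogue of Equation~\eqref{eq:link_G_t_G_t_tilde} is needed), and bound $\lVert S_t-\lambda\theta^\star_t\rVert_{\GG_t^{-1}}\le\sqrt{\lambda}S+\lVert S_t\rVert_{\GG_t^{-1}}$ via the triangle inequality, $\GG_t\ge\lambda\identity{d}$, and $\lVert\theta^\star_t\rVert_2\le S$. For the $\lVert a\rVert_{\GG_t^{-1}}$ factor I would combine the first inequality of Corollary~\ref{prop:self_concord_SW} with $\HH_t\ge\cm\VV_t$ (which holds since $\dot\mu(a_s^\top\theta^\star_s)\ge\cm$ by Assumption~\ref{ass:link_fun}), giving $\lVert a\rVert_{\GG_t^{-1}}\le\cm^{-1/2}\sqrt{1+S+\lambda^{-1/2}\lVert S_t\rVert_{\GG_t^{-1}}}\,\lVert a\rVert_{\VV_t^{-1}}$. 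Finally, the second inequality of Corollary~\ref{prop:self_concord_SW} trades the $\GG_t^{-1}$-norm for the $\HH_t^{-1}$-norm, and a union bound over $t\le T$ in Corollary~\ref{cor:concentration_SW} (this is what the $\log(T/\delta)$ and $\min(t,\tau)$ terms in the definition~\eqref{eq:rho_SW} of $\rho_t^\delta$ encode) yields, with probability at least $1-\delta$ simultaneously for all $t$, $\lVert S_t\rVert_{\HH_t^{-1}}\le\rho_t^\delta$ and hence $\lVert S_t\rVert_{\GG_t^{-1}}\le\sqrt{1+S}\,\rho_t^\delta+\lambda^{-1/2}(\rho_t^\delta)^2$.

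Collecting the pieces gives $\Delta_t(a,\hat\theta_t)\le\frac{\km\sqrt{\lambda}}{\sqrt{\cm}}\lVert a\rVert_{\VV_t^{-1}}\sqrt{1+S+\sqrt{(1+S)/\lambda}\,\rho_t^\delta+(\rho_t^\delta)^2/\lambda}\,\bigl(S+\sqrt{(1+S)/\lambda}\,\rho_t^\delta+(\rho_t^\delta)^2/\lambda\bigr)$, and bounding the last parenthesis by the preceding square-root factor (since $S\le 1+S+\cdots$) turns the product into the $3/2$-power appearing in~\eqref{beta_t_SW}, i.e.\ into $\beta_t^\delta/\sqrt{\cm}$. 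The whole argument is routine given Corollaries~\ref{prop:self_concord_SW} and~\ref{cor:concentration_SW}; the only genuine subtlety is making the concentration event hold uniformly over $t$ (handled by the union bound) and checking that the self-concordance relations of Corollary~\ref{prop:self_concord_SW}, which rely on $\theta^\star_s=\theta^\star_t$ on the window, are applicable --- which they are precisely because $t\in\mathcal{T}(\tau)$. Compared with the discounted analysis there is no residual $\gamma^D/(1-\gamma)$ bias to track, so the stated bound carries no additive non-stationarity term.
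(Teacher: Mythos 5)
Your proposal is correct and follows essentially the same route as the paper's proof: Lipschitzness, the mean value theorem with the MLE optimality condition to reduce to $\km\lVert a\rVert_{\GG_t^{-1}}\lVert S_t-\lambda\theta^\star_t\rVert_{\GG_t^{-1}}$, then Corollary~\ref{prop:self_concord_SW} (both inequalities) together with Corollary~\ref{cor:concentration_SW} and $\HH_t\ge\cm\VV_t$ to assemble the $3/2$-power in $\beta_t^\delta$. Your observations that the bias term vanishes for $t\in\mathcal{T}(\tau)$ and that the uniformity over $t$ comes from the union bound built into $\rho_t^\delta$ match the paper exactly.
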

\begin{proof}
\begin{align*}
    \Delta_t(a, \hat{\theta}_t) &= |\mu(a^{\top} \theta^{\star}_t) -
    \mu(a^{\top} \hat{\theta}_t)| 
    \leq k_{\mu} | a^{\top} (\theta^{\star}_t - \hat{\theta}_t) | \\
    & = k_{\mu} | a^{\top} \GG_t^{-1}(\theta^{\star}_t, \hat{\theta}_t)
     (g_t(\theta^{\star}_t) - g_t(\hat{\theta}_t)) |
     \quad \textnormal{(Mean-Value Theorem)}
     \\
    & \leq k_{\mu} \lVert a \rVert_{\GG_t^{-1}(\theta^{\star}_t, \hat{\theta}_t)}
    \lVert g_t(\theta^{\star}_t) - g_t(\hat{\theta}_t) \rVert_{
    \GG_t^{-1}(\theta^{\star}_t, \hat{\theta}_t)} \quad \text{(Cauchy-Schwarz ineq.)}
    \\
    & \leq k_{\mu} \lVert a \rVert_{\GG_t^{-1}(\theta^{\star}_t, \hat{\theta}_t)}
    \lVert S_t - \lambda \theta^\star_t \rVert_{
    \GG_t^{-1}(\theta^{\star}_t, \hat{\theta}_t)} \;. \quad (t \in \mathcal{T}(\tau))
\end{align*}

We can use Corollary \ref{prop:self_concord_SW} to link 
$\lVert a \rVert_{\GG_t^{-1}
(\theta^\star_t, \hat{\theta}_t)}$ with $\lVert a \rVert_{\HH_t^{-1}}$.
\begin{align*}
    \Delta_t(a, \hat{\theta}_t) &\leq k_\mu   \sqrt{1+S+\frac{1}{\sqrt{\lambda}}
    \left\lVert
    S_t\right\rVert_{\GG_t^{-1}(\theta^\star_t,\hat\theta_t)
    }}\lVert a \rVert_{\HH_t^{-1}} \left(\sqrt{\lambda} S +  \lVert S_t
     \rVert_{\GG_t^{-1}(\theta^\star_t, \hat{\theta}_t)} \right) \\
     &\leq k_\mu  \sqrt{\lambda} \sqrt{1+S+\frac{1}{\sqrt{\lambda}}
    \left\lVert
    S_t\right\rVert_{\GG_t^{-1}(\theta^\star_t,\hat\theta_t)
    }}\lVert a \rVert_{\HH_t^{-1}} \left(S +  \frac{1}{\sqrt{\lambda}}\lVert S_t
     \rVert_{\GG_t^{-1}(\theta^\star_t, \hat{\theta}_t)} \right)\\
     &\leq k_\mu  \sqrt{\lambda} \left(1+S+\frac{1}{\sqrt{\lambda}}
    \left\lVert
    S_t\right\rVert_{\GG_t^{-1}(\theta^\star_t,\hat\theta_t)
    }\right)^{3/2}
    \lVert a \rVert_{\HH_t^{-1}} \;.
\end{align*}
Then, using Corollary \ref{prop:self_concord_SW} we can upper bound 
$\lVert S_t \rVert_{\GG_t^{-1}(\theta^\star_t, \hat{\theta}_t)}$ with a combination of terms depending
on
$\lVert S_t \rVert_{\HH_t^{-1}}$. Recall that Corollary \ref{cor:concentration_SW}
 gives with probability higher than $1-\delta$ ,
for all $t$ in $\mathcal{T}(\tau)$, 
$\lVert S_t \rVert_{\HH_t^{-1}} \leq \rho_t^\delta$.
\begin{align*}
\Delta_t(a, \hat{\theta}_t) \leq \km \sqrt{\lambda} 
\left(
1+ S + \sqrt{\frac{1 + S}{\lambda}} \rho_t^\delta
+ \frac{1}{\lambda} (\rho_t^\delta)^2
 \right)^{3/2} \lVert a \rVert_{\HH_t^{-1}}\;.
\end{align*}
The proof is completed using $\HH_t \geq \cm \VV_t$, which
 holds thanks to Assumption \ref{ass:bounded_actions} on the bandit parameters.
\end{proof}
Finally, we give an upper bound for the regret enjoyed by $\SW$.
\begin{thm}
\label{thm:regret_wost_case_SW}
The regret of the $\SW$ algorithm is bounded
with probability at least $1-\delta$ by,
\begin{equation*}
R_T \leq \Gamma_T \tau + 
 \frac{2\beta^\delta_T}{\sqrt{c_\mu}}\sqrt{dT}  
 \sqrt{\lceil T/\tau \rceil   }
 \sqrt{2\max \left(1,\frac{1}{\lambda}
 \right)}
\sqrt{\log \left(1 + \frac{\tau}{d \lambda} \right) } \;,
\end{equation*}
where $\beta_t^\delta$ is defined in Equation \eqref{beta_t_SW}.
\end{thm}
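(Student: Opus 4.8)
The plan is to replay the argument used for Theorem~\ref{th:regret_D_no_proj_main}, substituting the sliding-window analogues of each ingredient. First I would decompose the dynamic regret as $R_T = \sum_{t\in\mathcal{T}(\tau)} r_t + \sum_{t\notin\mathcal{T}(\tau)} r_t$. A round $t$ fails to belong to $\mathcal{T}(\tau)$ only when a breakpoint lies in the window $\{t-\tau,\dots,t-1\}$, so each of the $\Gamma_T$ changes contaminates at most $\tau$ rounds; bounding $r_t$ by the trivial per-round regret on those rounds gives $\sum_{t\notin\mathcal{T}(\tau)} r_t \le \Gamma_T\tau$, which is the first term of the claimed bound. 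For $t\in\mathcal{T}(\tau)$, the preceding proposition yields, on an event of probability at least $1-\delta$, the bound $\Delta_t(a,\hat\theta_t)\le \beta_t^\delta\lVert a\rVert_{\VV_t^{-1}}/\sqrt{c_\mu}$ for every admissible $a$, and Proposition~\ref{prop:delta_t_regret} converts this prediction-error control into $r_t\le 2\beta_t^\delta\lVert a_t\rVert_{\VV_t^{-1}}/\sqrt{c_\mu}$. Since $\rho_t^\delta$ is non-decreasing in $t$ (only its $\min(t,\tau)$-dependent logarithmic term varies), one has $\beta_t^\delta\le\beta_T^\delta$, so the uniform bonus $\beta_T^\delta$ can be used throughout.

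The next step is Cauchy--Schwarz: $\sum_{t\in\mathcal{T}(\tau)} r_t \le \frac{2\beta_T^\delta}{\sqrt{c_\mu}}\sqrt{T}\,\bigl(\sum_{t=1}^T\lVert a_t\rVert_{\VV_t^{-1}}^2\bigr)^{1/2}$, which reduces the problem to controlling $\sum_{t=1}^T\lVert a_t\rVert_{\VV_t^{-1}}^2$. This is the one place where the sliding-window analysis genuinely departs from the discounted one, because $\VV_t$ is not monotone in $t$ and the elliptical potential lemma cannot be applied over the whole horizon. Here my plan is to split $\{1,\dots,T\}$ into $\lceil T/\tau\rceil$ consecutive blocks of length at most $\tau$; on the block starting at index $k\tau$, the ``fresh'' matrix $\VV_t^{(k)} = \sum_{s=k\tau+1}^{t-1} a_s a_s^\top + \frac{\lambda}{c_\mu}\identity{d}$ is non-decreasing in $t$ and is dominated (in the positive semidefinite order) by $\VV_t$, since the sliding window always contains the whole block prefix up to time $t-1$; hence $\lVert a_t\rVert_{\VV_t^{-1}}^2 \le \lVert a_t\rVert_{(\VV_t^{(k)})^{-1}}^2$. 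The standard elliptical potential lemma then bounds the sum of the right-hand side over the block by a constant times $\log\bigl(\det \VV^{(k)}_{(k+1)\tau}/\det\VV^{(k)}_{k\tau}\bigr)$, and because at most $\tau$ rank-one terms of norm at most $1$ are added inside a block, an AM--GM determinant estimate controls this by $d\log(1+\tau/(d\lambda))$, up to the normalisation used for $\VV_t$. Summing over the $\lceil T/\tau\rceil$ blocks yields $\sum_{t=1}^T\lVert a_t\rVert_{\VV_t^{-1}}^2 \le 2\max(1,1/\lambda)\,d\lceil T/\tau\rceil\log(1+\tau/(d\lambda))$.

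Inserting this estimate into the Cauchy--Schwarz bound and adding the $\Gamma_T\tau$ contribution of the contaminated rounds reproduces exactly the inequality stated in the theorem. The main obstacle is the block decomposition for the potential term: one has to check carefully that within each block the window covers the block prefix (so that $\VV_t^{(k)}$ is both monotone in $t$ and dominated by $\VV_t$) and that the per-block determinant estimate is uniform and matches the logarithmic factor appearing in the statement; the remaining manipulations are the same routine bookkeeping already carried out in the proof of Theorem~\ref{th:regret_D_no_proj_main}.
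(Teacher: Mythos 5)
Your proposal is correct and follows essentially the same route as the paper: decompose the regret over $\mathcal{T}(\tau)$ and its complement (the latter contributing $\Gamma_T\tau$), convert the prediction-error bound into $r_t \le 2\beta_t^\delta\lVert a_t\rVert_{\VV_t^{-1}}/\sqrt{c_\mu}$ via Proposition~\ref{prop:delta_t_regret}, apply Cauchy--Schwarz, and control the potential term by a block decomposition into $\lceil T/\tau\rceil$ segments. The only cosmetic difference is that you re-derive the blockwise elliptical potential argument inline, whereas the paper invokes it as Lemma~\ref{lemma:ellipticalpotential_SW} (imported from \citealp{russac2019weighted}); the content is identical.
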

\begin{proof}
The proof essentially follows the steps of the proof of Theorem
 \ref{th:regret_D_no_proj_main}. The main difference is that $\beta_t^\delta$
 from Equation \eqref{beta_t_SW} is used and the elliptical lemma is different
  because the design matrix is designed with a sliding window instead of weights.
  
Applying Proposition \ref{prop:delta_t_regret} when $t \in \mathcal{T}(\tau)$, with
probability higher than $1-\delta$,
\begin{equation}
\label{kzjnnjn}
r_t \leq 
 \frac{2}{\sqrt{c_\mu}} \beta_t^\delta
\lVert a_t \rVert_{\VV_t^{-1}}
\;.
\end{equation}
The dynamic regret can then be upper bounded by, 
\begin{align*}
R_T &= \sum_{t=1}^T r_T = \sum_{t \in \mathcal{T}(\tau)} r_t + \sum_{t \notin
 \mathcal{T}(\tau)} r_t 
\leq \Gamma_T \tau + \sum_{t \in \mathcal{T}(\tau)} r_t \\
& \leq \Gamma_T \tau + 
\frac{2\beta_T^\delta}{\sqrt{c_\mu}} \sum_{t \in
 \mathcal{T}(\tau)} \lVert a_t \rVert_{\VV_t^{-1}} \quad
\text{(Equation \eqref{kzjnnjn})} \\
& \leq \Gamma_T \tau 
+ \frac{2 \beta_T^\delta}{\sqrt{c_\mu}} \sqrt{T}
\sqrt{\sum_{t \in \mathcal{T}(\tau)}\lVert a_t \rVert_{\VV_t^{-1}}^2 } \quad \textnormal{(Cauchy-Schwarz ineq.)} \\
& \leq \Gamma_T \tau
+ \frac{2 \beta_T^\delta}{\sqrt{c_\mu}} \sqrt{T}
\sqrt{\sum_{t=1}^T\lVert a_t \rVert_{\VV_t^{-1}}^2 } \\
& \leq  \Gamma_T \tau + 
+ \frac{2 \beta^\delta_T}{\sqrt{c_\mu}} \sqrt{dT} 
 \sqrt{\lceil T/\tau \rceil   }
\sqrt{2\max\left(1,\frac{1}{\lambda}
\right)
        \log \left(1 + \frac{\tau}{d \lambda} \right) }\;. \quad \text{(Lemma \ref{lemma:ellipticalpotential_SW})}
\end{align*}
\end{proof}

\begin{cor}[Asymptotic bound]
 \label{corollary:asympt_regret_no_proj_SW}
If $\Gamma_T$ is known, by choosing
   $\tau = \left(\frac{dT}{ \cm^{1/2} \Gamma_T}\right)^{2/3}$
   and $\lambda = d \log(T)$,
   the regret of $\SW$ scales as 
   $$ R_T = \widetilde{\mathcal{O}}(\cm^{-1/3}
   d^{2/3} \Gamma_T^{1/3} T^{2/3})\;. $$
If $\Gamma_T$ is unknown, by choosing
   $\tau = \left(\frac{dT}{\cm^{1/2}}\right)^{2/3}$,
   the regret of $\SW$ scales as
   $$ R_T = \widetilde{\mathcal{O}}(\cm^{-1/3}
   d^{2/3} \Gamma_T T^{2/3}) \;.$$
\end{cor}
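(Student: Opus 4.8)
The plan is to specialize the high-probability regret bound of Theorem~\ref{thm:regret_wost_case_SW} to the stated choices of $\tau$ and $\lambda$, and then balance the two resulting contributions. Recall that Theorem~\ref{thm:regret_wost_case_SW} gives, with probability at least $1-\delta$,
\[
R_T \leq \Gamma_T \tau + \frac{2\beta^\delta_T}{\sqrt{\cm}}\sqrt{dT}\,\sqrt{\lceil T/\tau \rceil}\,\sqrt{2\max(1,1/\lambda)}\,\sqrt{\log(1 + \tau/(d\lambda))}\;.
\]
First I would control $\beta_T^\delta$. Substituting $\lambda = d\log(T)$ into Equation~\eqref{eq:rho_SW}, each term of $\rho_T^\delta/\sqrt{\lambda}$ is $\widetilde{\mathcal{O}}(1)$ (the contributions $1/(2\rM)$, $(2\rM/\lambda)\log(T/\delta)$ and $(d\rM/\lambda)\log(1 + \km\tau/(d\lambda))$ are all logarithmic or constant once $\lambda \asymp d\log T$), so by Equation~\eqref{beta_t_SW} one gets $\beta_T^\delta = \km\sqrt{\lambda}\,(1 + S + \widetilde{\mathcal{O}}(1))^{3/2} = \widetilde{\mathcal{O}}(\sqrt{d})$, treating $S$, $\km$ and $\rM$ as problem constants absorbed into $\widetilde{\mathcal{O}}$.

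Next I would simplify the exploration term. Since $\lambda = d\log(T)\geq 1$ one has $\max(1,1/\lambda)=1$ and $\sqrt{\log(1+\tau/(d\lambda))} = \widetilde{\mathcal{O}}(1)$; moreover, for the prescribed values of $\tau$ one checks that $\tau\leq T$ (e.g.\ whenever $\Gamma_T \leq dT/\cm^{1/2}$, which covers the regimes of interest), so that $\lceil T/\tau\rceil \leq 2T/\tau$ and $\sqrt{\lceil T/\tau\rceil} = \mathcal{O}(\sqrt{T/\tau})$. Plugging in the bound on $\beta_T^\delta$ then yields
\[
R_T \leq \Gamma_T\tau + \widetilde{\mathcal{O}}(dT/(\sqrt{\cm}\,\sqrt{\tau}))\;.
\]
Equating the two terms, i.e.\ $\Gamma_T\tau^{3/2}\asymp dT/\sqrt{\cm}$, gives exactly $\tau = (dT/(\cm^{1/2}\Gamma_T))^{2/3}$, for which both terms are of order $\cm^{-1/3}d^{2/3}\Gamma_T^{1/3}T^{2/3}$; this proves the first claim.

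For the case where $\Gamma_T$ is unknown, I would set $\tau$ to its value for $\Gamma_T=1$, namely $\tau = (dT/\cm^{1/2})^{2/3}$. The exploration term $dT/(\sqrt{\cm}\,\sqrt{\tau})$ is then of order $\cm^{-1/3}d^{2/3}T^{2/3}$, while the bias term is $\Gamma_T\tau = \Gamma_T\,\cm^{-1/3}d^{2/3}T^{2/3}$, which dominates since $\Gamma_T\geq 1$; hence $R_T = \widetilde{\mathcal{O}}(\cm^{-1/3}d^{2/3}\Gamma_T T^{2/3})$. The only mildly delicate points are the bookkeeping that the logarithmic and constant factors hidden in $\rho_T^\delta$ and $\beta_T^\delta$ are genuinely $\widetilde{\mathcal{O}}(\cdot)$, and the handling of the ceiling $\lceil T/\tau\rceil$; neither is a real obstacle once $\lambda\asymp d\log T$ is fixed and $\tau \leq T$ is verified.
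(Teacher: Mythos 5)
Your proposal is correct and follows essentially the same route as the paper: specialize Theorem~\ref{thm:regret_wost_case_SW} with $\lambda = d\log(T)$, observe that $\beta_T^\delta = \widetilde{\mathcal{O}}(\sqrt{d})$ so the exploration term reduces to $\widetilde{\mathcal{O}}(dT/(\sqrt{\cm}\sqrt{\tau}))$, and balance it against the bias term $\Gamma_T\tau$ (or take $\tau$ as for $\Gamma_T=1$ when $\Gamma_T$ is unknown). Your bookkeeping of the hidden logarithmic factors and of $\lceil T/\tau\rceil$ is if anything slightly more careful than the paper's own sketch.
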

\begin{proof}
When $\Gamma_T$ is known, 
we set
$\lambda = d\log(T)$ and $\tau = \left(\frac{d T}{\sqrt{\cm} \Gamma_T}\right)^{2/3}$.
With those choices, 
\begin{enumerate}
\item $\beta_T^\delta$ scales as $\sqrt{d\log(T)}$.
\item $\Gamma_T \tau $ scales as $\widetilde{\mathcal{O}}(\cm^{-1/3} d^{2/3}
 \Gamma_T^{2/3} T^{2/3})$.
\item $\frac{\beta_T^\delta}{ \sqrt{\cm}} \sqrt{T} \sqrt{d \frac{T}{\tau}}$ 
scales as  $\widetilde{\mathcal{O}}(\cm^{-1/3} d^{2/3} \Gamma_T^{1/3} T^{2/3})$.
\end{enumerate}
The proof is similar when $\Gamma_T$ is unknown.
\end{proof}
When the reward gaps are bounded from below 
we can obtain the following gap-dependent upper bound:
\begin{thm}
  \label{thm:fjqiqposejf_SW}
Under Assumption \ref{ass:minimum_gap}, when
setting $\tau = \frac{d \sqrt{T}}{\sqrt{\cm \Gamma_T}}$ the regret of the $\SW$ algorithm
satisfies:
 $$
 R_T = \widetilde{\mathcal{O}}
 \big(\Delta^{-1} \cm^{-1/2} d \sqrt{ \Gamma_T T}\big)\;.
 $$
\end{thm}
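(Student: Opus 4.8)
The plan is to transcribe the proof of Theorem~\ref{thm:fjqiqposejf} for $\DGLM$, replacing its discount-factor ingredients by the sliding-window counterparts already developed in this section, and exploiting the fact that, with a sliding window, the non-stationarity bias vanishes \emph{exactly} on $\mathcal{T}(\tau)$. First I would invoke Assumption~\ref{ass:minimum_gap}: for every round $t$ and every suboptimal $a\in\mathcal{A}_t$ we have $\mu(a_{\star,t}^\top\theta^\star_t)-\mu(a^\top\theta^\star_t)\geq\Delta$, so the instantaneous regret satisfies $r_t\leq r_t^2/\Delta$. Next, exactly as on the way to Theorem~\ref{thm:regret_wost_case_SW}, Proposition~\ref{prop:delta_t_regret} applied to the pristine MLE of Equation~\eqref{eq:MLE_sliding_window} yields, with probability at least $1-\delta$, $r_t\leq\frac{2}{\sqrt{c_\mu}}\beta_t^\delta\lVert a_t\rVert_{\VV_t^{-1}}$ for every $t\in\mathcal{T}(\tau)$; the point to stress is that there is \emph{no} additive bias term here, because the self-concordance characterization of Corollary~\ref{prop:self_concord_SW} (itself obtained from Proposition~\ref{qlzkdlqkzdl}) is bias-free on $\mathcal{T}(\tau)$, in contrast with its discounted analogue.

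Combining the two bounds gives $r_t\leq\frac{4}{c_\mu\Delta}(\beta_t^\delta)^2\lVert a_t\rVert_{\VV_t^{-1}}^2$ on $\mathcal{T}(\tau)$. I would then split $R_T=\sum_{t\in\mathcal{T}(\tau)}r_t+\sum_{t\notin\mathcal{T}(\tau)}r_t$, bounding the at most $\Gamma_T\tau$ rounds outside $\mathcal{T}(\tau)$ trivially and, for the remaining rounds, using $\beta_t^\delta\leq\beta_T^\delta$ together with an enlargement of the summation set to $\{1,\dots,T\}$, which gives $R_T\leq\Gamma_T\tau+\frac{4}{c_\mu\Delta}(\beta_T^\delta)^2\sum_{t=1}^T\lVert a_t\rVert_{\VV_t^{-1}}^2$. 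The sliding-window elliptical potential lemma (Lemma~\ref{lemma:ellipticalpotential_SW}, as in the proof of Theorem~\ref{thm:regret_wost_case_SW}) then bounds $\sum_{t=1}^T\lVert a_t\rVert_{\VV_t^{-1}}^2\leq 2d\lceil T/\tau\rceil\max(1,1/\lambda)\log(1+\tau/(d\lambda))$, whence
$$
R_T\leq\Gamma_T\tau+\frac{8d}{c_\mu\Delta}(\beta_T^\delta)^2\left\lceil\frac{T}{\tau}\right\rceil\max\left(1,\frac{1}{\lambda}\right)\log\left(1+\frac{\tau}{d\lambda}\right)\;.
$$

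Finally I would set $\lambda=d\log T$, so that $\rho_T^\delta=\widetilde{\mathcal{O}}(\sqrt d)$ by Equation~\eqref{eq:rho_SW} and hence $\beta_T^\delta=\widetilde{\mathcal{O}}(\sqrt d)$ by Equation~\eqref{beta_t_SW}, and $\tau=d\sqrt T/\sqrt{c_\mu\Gamma_T}$, which makes $\Gamma_T\tau=c_\mu^{-1/2}d\sqrt{\Gamma_T T}$ and $\lceil T/\tau\rceil=\widetilde{\mathcal{O}}(c_\mu^{1/2}\Gamma_T^{1/2}T^{1/2}/d)$; substituting into the display and keeping the dominant order yields $R_T=\widetilde{\mathcal{O}}(\Delta^{-1}c_\mu^{-1/2}d\sqrt{\Gamma_T T})$. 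The only slightly delicate step is the bookkeeping that certifies $(\beta_T^\delta)^2=\widetilde{\mathcal{O}}(d)$ — in particular that the $d/\sqrt{\lambda}$ contribution to $\rho_T^\delta$ remains $\widetilde{\mathcal{O}}(\sqrt d)$ under $\lambda=d\log T$ — but this is carried out identically to the $\DGLM$ case; otherwise the argument is a faithful transcription of the $\DGLM$ gap-dependent proof with $\bar S$ replaced by $S$ and the residual non-stationarity bias terms removed, so no genuinely new obstacle arises.
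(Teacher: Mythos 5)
Your proposal is correct and follows essentially the same route as the paper's proof: the gap assumption to pass from $r_t$ to $r_t^2/\Delta$, Proposition~\ref{prop:delta_t_regret} with the bias-free sliding-window bonus on $\mathcal{T}(\tau)$, the squared elliptical potential via Lemma~\ref{lemma:ellipticalpotential_SW}, and the same choices $\lambda=d\log T$, $\tau=d\sqrt{T}/\sqrt{c_\mu\Gamma_T}$. No substantive difference from the paper's argument.
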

\begin{proof}
First note that for any suboptimal action 
$a \in \mathcal{A}_t$, 
$$
\mu(a_{\star,t}^\top \theta^\star_t) - \mu(a^\top \theta^\star_t) \geq \Delta\;.
$$
This implies
\begin{equation}
\label{eq:lien_r_t_delta_SW}
r_t =  \mu(a_{\star,t}^\top \theta^\star_t) 
- \mu(a_t^\top \theta^\star_t) \leq \frac{\left(\mu(a_{\star,t}^\top \theta^\star_t) 
- \mu(a_t^\top \theta^\star_t)\right)^2}{\Delta} = \frac{r_t^2}{\Delta}\;.
\end{equation}
Using Proposition \ref{prop:delta_t_regret} one has,
$$
r_t \leq 
 \frac{2}{\sqrt{c_\mu}} \beta_t^\delta
\lVert a_t \rVert_{\VV_t^{-1}}
\;.
$$
The dynamic regret can then be upper bounded by, 
\begin{align*}
R_T &\leq \Gamma_T \tau + \frac{1}{\Delta}\sum_{t \in \mathcal{T}(\tau)} r_t^2
\quad (\text{Equation } \eqref{eq:lien_r_t_delta_SW}) \\
&\leq \Gamma_T \tau + \frac{4 (\beta_T^\delta)^2}{ \cm \Delta} 
\sum_{t=1}^T \lVert a_t \rVert_{\VV_t^{-1}}^2 \\
&\leq \Gamma_T \tau + \frac{8  (\beta_T^\delta)^2}{\cm \Delta} 
 \max\left(1, \frac{1}{\lambda} \right) d \lceil T/\tau \rceil 
 \log\left(1 + \frac{\tau}{\lambda d} \right)\;. \quad \text{(Lemma \ref{lemma:ellipticalpotential_SW})}
\end{align*}
We set
$\lambda = d\log(T)$ and $\tau = \frac{d \sqrt{T}}{\sqrt{\cm \Gamma_T}}$.
With those choices,
\begin{enumerate}
\item  $\beta_T^\delta$ scales as $\sqrt{d\log(T)}$.
\item  $\Gamma_T \tau $ scales as
 $\widetilde{\mathcal{O}}(\cm^{-1/2} d \Gamma_T^{1/2} T^{1/2})$.
 \item $\frac{(\beta_T^\delta)^2}{\cm} d \frac{T}{\tau}$ scales as 
 $\widetilde{\mathcal{O}}(\cm^{-1/2} d \Gamma_T^{1/2} T^{1/2})$.
\end{enumerate}
Dividing by $\Delta$ yields the announced result.
\end{proof}
When $\hat{\theta}_t$ is in $\Theta$ it is also possible with a sliding window to
 obtain a usually better concentration result. This discussion is not reported here, 
 but can be easily adapted from Proposition \ref{proposition:beta_improved}.
\section{USEFUL RESULTS}
\label{section:useful_results}
\subsection{Self-Concordant Properties}
\label{subsection:SC}
In this section we state the main properties and lemma that can be obtained with
the
self-concordance assumption. 
\begin{lemma}[Lemma 9 in \cite{faury2020improved}]
\label{lemma:self_concordance}
For any $z_1,z_2\in\mathbb{R}$, we have the following inequality
$$
    \dot{\mu}(z_1)\frac{1-\exp(-\vert z_1-z_2\vert)}{\vert z_1-z_2\vert} \leq \int_{0}^1
     \dot{\mu}(z_1+v(z_2-z_1))dv \leq  \dot{\mu}(z_1)\frac{\exp(\vert z_1-z_2\vert)-1}
     {\vert z_1-z_2\vert}\;.
$$
Furthermore,
$$
    \int_{0}^1 \dot{\mu}(z_1+v(z_2-z_1))dv \geq \dot{\mu}(z_1)(1+\vert z_1-
    z_2\vert)^{-1}\;.
$$
\end{lemma}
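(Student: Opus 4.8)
The plan is to reduce the entire statement to the single differential inequality of Assumption~\ref{ass:SC}, $|\ddot{\mu}| \leq \dot{\mu}$, by converting it into a Lipschitz bound on $\log\dot{\mu}$ and then integrating along the segment joining $z_1$ and $z_2$.

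First I would record that for a canonical exponential family the mean function is $\mu = \dot{b}$ with $b$ strictly convex, so that $\dot{\mu} = \ddot{b} > 0$ everywhere on $\mathbb{R}$; hence $\log\dot{\mu}$ is well defined and differentiable, with $(\log\dot{\mu})'(z) = \ddot{\mu}(z)/\dot{\mu}(z)$, and Assumption~\ref{ass:SC} gives $|(\log\dot{\mu})'(z)| \leq 1$ for every $z$. Integrating this bound between $z_1$ and an arbitrary point $z$ yields $|\log\dot{\mu}(z) - \log\dot{\mu}(z_1)| \leq |z - z_1|$, that is, $\dot{\mu}(z_1)\,e^{-|z - z_1|} \leq \dot{\mu}(z) \leq \dot{\mu}(z_1)\,e^{|z - z_1|}$.

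Next I would specialize to $z = z_1 + v(z_2 - z_1)$ with $v \in [0,1]$, so that $|z - z_1| = v\,|z_1 - z_2|$, and integrate the two-sided inequality over $v \in [0,1]$. Writing $c := |z_1 - z_2|$ and using the elementary identities $\int_0^1 e^{-vc}\,dv = (1 - e^{-c})/c$ and $\int_0^1 e^{vc}\,dv = (e^{c} - 1)/c$ (both understood as $1$ when $c = 0$, which also covers $z_1 = z_2$) produces exactly the first displayed double inequality. For the ``Furthermore'' part I would simply check that the lower bound just obtained already dominates $\dot{\mu}(z_1)(1 + c)^{-1}$, i.e. that $(1 - e^{-c})/c \geq 1/(1 + c)$ for $c \geq 0$; after clearing denominators this is equivalent to $(1 + c)\,e^{-c} \leq 1$, which is the standard convexity inequality $1 + c \leq e^{c}$.

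I do not expect a real obstacle here; the only places deserving a line of care are the global strict positivity of $\dot{\mu}$ (so that $\log\dot{\mu}$ is defined on all of $\mathbb{R}$ — this is where the exponential-family structure is invoked, rather than merely Assumption~\ref{ass:link_fun}; alternatively, the ODE comparison $(\log\dot{\mu})' \geq -1$ keeps $\dot{\mu}$ from vanishing at finite distance once it is positive at $z_1$), the degenerate case $z_1 = z_2$ handled by the limits above, and the scalar inequality $1 + c \leq e^{c}$ that closes the last bound. Everything else is a single one-line integration.
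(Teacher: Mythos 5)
Your proof is correct and is essentially the argument behind the cited result: the paper does not reprove this lemma but imports it from \citet{faury2020improved}, whose proof is precisely this integration of $\vert(\log\dot{\mu})'\vert\leq 1$ along the segment joining $z_1$ and $z_2$, followed by $1+c\leq e^{c}$ for the last bound. Your side remarks (global positivity of $\dot{\mu}$ via the Gr\"onwall-type comparison, and the degenerate case $z_1=z_2$ by continuity) are exactly the right points to flag and are handled correctly.
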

Thanks to the self-concordance property we have an interesting relation between
$\GG_t(\theta_1, \theta_2)$ and $\HH_t(\theta_1)$ or $\HH_t(\theta_2)$ when both
$\theta_1$ and $\theta_2 \in \Theta$. This relation is made explicit in the next
lemma.
\begin{lemma}[Self-concordance and sliding window]
\label{lemma:boundGtbyHt_SW}
For all $\theta_1,\theta_2\in\Theta$, with $\GG_t$ defined in Equation \eqref{eq:G_t_SW} and $\HH_t$
defined in Equation \eqref{eq:H_t_SW} the following inequalities hold
\begin{align*}
    \GG_t(\theta_1,\theta_2) \geq (1+2 \LL S)^{-1}\HH_t(\theta_1)\;, \quad
    \GG_t(\theta_1,\theta_2) \geq (1+2 \LL S)^{-1}\HH_t(\theta_2)\;.
\end{align*}
\end{lemma}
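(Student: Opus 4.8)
The plan is to obtain both matrix inequalities from a single \emph{scalar} comparison, applied term by term in the sums that define $\GG_t$ in \eqref{eq:G_t_SW} and $\HH_t$ in \eqref{eq:H_t_SW}. Fix $\theta_1,\theta_2\in\Theta$ and an index $s$ with $\max(1,t-\tau)\le s\le t-1$, and write $z_1=a_s^\top\theta_1$ and $z_2=a_s^\top\theta_2$. Cauchy--Schwarz together with Assumption~\ref{ass:bounded_actions} (so that $\lVert a_s\rVert_2\le 1$ and $\lVert\theta_i\rVert_2\le S$) controls the diameter: $\lvert z_1-z_2\rvert=\lvert a_s^\top(\theta_1-\theta_2)\rvert\le\lVert a_s\rVert_2\,\lVert\theta_1-\theta_2\rVert_2\le 2S$. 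Plugging $(z_1,z_2)$ into the last inequality of Lemma~\ref{lemma:self_concordance} and using that $x\mapsto(1+x)^{-1}$ is nonincreasing gives $\alpha(a_s,\theta_1,\theta_2)=\int_0^1\dot{\mu}\big((1-v)z_1+vz_2\big)\,dv\ge\dot{\mu}(z_1)\big(1+\lvert z_1-z_2\rvert\big)^{-1}\ge(1+2\LL S)^{-1}\dot{\mu}(a_s^\top\theta_1)$. Since $\alpha(a_s,\cdot,\cdot)$ is symmetric in its last two arguments (change of variable $v\mapsto 1-v$), running the same computation with $\theta_1$ and $\theta_2$ swapped yields $\alpha(a_s,\theta_1,\theta_2)\ge(1+2\LL S)^{-1}\dot{\mu}(a_s^\top\theta_2)$.

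Next I would lift these scalar bounds to the Loewner order. Each coefficient $\alpha(a_s,\theta_1,\theta_2)$ is nonnegative and $a_sa_s^\top\ge 0$, so $\alpha(a_s,\theta_1,\theta_2)\ge(1+2\LL S)^{-1}\dot{\mu}(a_s^\top\theta_1)$ implies $\alpha(a_s,\theta_1,\theta_2)\,a_sa_s^\top\ge(1+2\LL S)^{-1}\dot{\mu}(a_s^\top\theta_1)\,a_sa_s^\top$. Summing this over $s\in\{\max(1,t-\tau),\dots,t-1\}$ and adding $\lambda\identity{d}$ to both sides --- which is valid on the right because $(1+2\LL S)^{-1}\le 1$, hence $\lambda\identity{d}\ge(1+2\LL S)^{-1}\lambda\identity{d}$ --- turns the left-hand side into $\GG_t(\theta_1,\theta_2)$ and the right-hand side into $(1+2\LL S)^{-1}\HH_t(\theta_1)$, which is the first claim. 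The second, with $\HH_t(\theta_2)$ in place of $\HH_t(\theta_1)$, follows in exactly the same way from the symmetric scalar bound.

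I do not expect a genuine obstacle: the statement is purely deterministic (no restriction on $t$ is needed, unlike in Proposition~\ref{qlzkdlqkzdl}) and the argument is self-contained once Lemma~\ref{lemma:self_concordance} is in hand. The two points that require care are (i) that the boundedness hypotheses are used precisely to guarantee $\lvert a_s^\top(\theta_1-\theta_2)\rvert\le 2S$ --- this is why the statement asks for $\theta_1,\theta_2\in\Theta$ individually rather than a weaker joint bound --- and (ii) that $(1+2\LL S)^{-1}\le 1$, so the regularization term transfers consistently when going from the scalar inequalities to the matrix ones. For completeness I would also remark that this is just the sliding-window specialization of the corresponding statement in the discounted case (Lemma~\ref{lemma:boundGtbyHt_D}): inserting the weights $\gamma^{t-1-s}$ or $\gamma^{2(t-1-s)}$ in front of the rank-one terms does not affect the pointwise comparison, so the very same argument applies verbatim.
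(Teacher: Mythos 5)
Your proposal is correct and takes essentially the same route as the paper: the paper's proof consists exactly of the two scalar lower bounds on $\alpha(a,\theta_1,\theta_2)$ from Lemma~\ref{lemma:self_concordance} together with the diameter bound $\lvert a^\top(\theta_1-\theta_2)\rvert\le 2S$, leaving the term-by-term lifting to the Loewner order and the handling of the regularization term implicit. Your write-up simply makes those last steps explicit.
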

\begin{proof}
Applying Lemma \ref{lemma:self_concordance}, for any $\theta_1, \theta_2 \in \mathbb{R}^d$,
$$
\alpha(a, \theta_1, \theta_2) \geq \frac{\dot{\mu}(a^{\top} \theta_1)}{1+ |a^{\top}
(\theta_1- \theta_2)|} \quad \text{and} \quad
\alpha(a, \theta_1, \theta_2) \geq \frac{\dot{\mu}(a^{\top} \theta_2)}{1+ |a^{\top}
(\theta_1- \theta_2)|}\;.
$$
Furthermore, if $\theta_1$ and $\theta_2 \in \Theta$, then
$$
|a^{\top} (\theta_1 - \theta_2) | \leq 2 \LL S\;.
$$
\end{proof}
\begin{lemma}[Self-concordance and discount factors]
\label{lemma:boundGtbyHt_D}
For all $\theta_1,\theta_2\in\Theta$, with $\widetilde{\HH}_t(\theta_1)$ defined in Equation
\eqref{eq:H_t_tilde_theta_D} and $\widetilde{\GG}_t(\theta_1, \theta_2)$ defined in Equation \eqref{eq:G_t_tilde_D}
the following inequalities hold:
\begin{align*}
    \widetilde{\GG}_t(\theta_1,\theta_2) \geq (1+2\LL S)^{-1} \widetilde{\HH}_t(\theta_1)\;, \quad
    \widetilde{\GG}_t(\theta_1,\theta_2) \geq (1+2\LL S)^{-1}  \widetilde{\HH}_t(\theta_2)\;.
\end{align*}
\end{lemma}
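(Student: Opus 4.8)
The plan is to mirror the proof of Lemma~\ref{lemma:boundGtbyHt_SW}, replacing the uniform sliding-window weights by the squared discount weights $\gamma^{2(t-1-s)}$, which are nonnegative and therefore preserve the Loewner order when used to form a positive combination of rank-one matrices. First I would apply the last inequality of Lemma~\ref{lemma:self_concordance} pointwise, with $z_1 = a_s^\top\theta_1$ and $z_2 = a_s^\top\theta_2$: since that is a statement about arbitrary reals, no constraint on $\theta_1,\theta_2$ is needed at this stage, and it gives
\begin{equation*}
\alpha(a_s,\theta_1,\theta_2) = \int_0^1 \dot{\mu}\big(a_s^\top\theta_1 + v\,a_s^\top(\theta_2-\theta_1)\big)\,dv \;\geq\; \frac{\dot{\mu}(a_s^\top\theta_1)}{1 + \lvert a_s^\top(\theta_1-\theta_2)\rvert}\,,
\end{equation*}
together with the analogous bound obtained by exchanging the roles of $\theta_1$ and $\theta_2$ (the one with $\dot{\mu}(a_s^\top\theta_2)$ in the numerator).

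Next I would invoke $\theta_1,\theta_2\in\Theta$ and $\lVert a_s\rVert_2\le 1$: by Cauchy--Schwarz, $\lvert a_s^\top(\theta_1-\theta_2)\rvert \le 2\LL S$, so the denominator above is at most $1+2\LL S$, hence $(1+2\LL S)\,\alpha(a_s,\theta_1,\theta_2) \ge \dot{\mu}(a_s^\top\theta_1)$. Multiplying this scalar inequality by the positive semidefinite matrix $\gamma^{2(t-1-s)} a_s a_s^\top$, summing over $s = 1,\dots,t-1$, and then adding $\lambda\,\identity{d}$ to both sides (legitimate since $1+2\LL S \ge 1$, so $(1+2\LL S)\lambda\,\identity{d} \ge \lambda\,\identity{d}$), I obtain from the definitions~\eqref{eq:G_t_tilde_D} and~\eqref{eq:H_t_tilde_theta_D} the inequality $(1+2\LL S)\,\widetilde{\GG}_t(\theta_1,\theta_2) \ge \widetilde{\HH}_t(\theta_1)$, which is the first claim after dividing by $1+2\LL S$. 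Running the same computation from the companion self-concordance bound yields $(1+2\LL S)\,\widetilde{\GG}_t(\theta_1,\theta_2) \ge \widetilde{\HH}_t(\theta_2)$, completing the argument.

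I do not anticipate a genuine obstacle here; this is essentially the discount-weighted transcription of Lemma~\ref{lemma:boundGtbyHt_SW}. The two points deserving care are that the self-concordance inequality of Lemma~\ref{lemma:self_concordance} is a pointwise statement valid for all reals — so the restriction $\theta_1,\theta_2\in\Theta$ is used only afterwards, to uniformly bound $\lvert a_s^\top(\theta_1-\theta_2)\rvert$ — and that lifting the scalar inequality to a matrix inequality relies on the weights $\gamma^{2(t-1-s)}$ and the regularizer $\lambda$ being nonnegative, which is exactly what makes each $\gamma^{2(t-1-s)} a_s a_s^\top$ positive semidefinite and the additive $\lambda\,\identity{d}$ term harmless.
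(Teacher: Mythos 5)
Your proposal is correct and follows exactly the paper's own route: the paper proves Lemma~\ref{lemma:boundGtbyHt_D} by invoking the same argument as Lemma~\ref{lemma:boundGtbyHt_SW}, namely the pointwise lower bound from Lemma~\ref{lemma:self_concordance}, the bound $\lvert a^\top(\theta_1-\theta_2)\rvert \le 2\LL S$ for $\theta_1,\theta_2\in\Theta$, and the lift to a Loewner inequality via the nonnegative discount weights and regularizer. No gaps.
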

\begin{proof}
Same arguments than for Lemma \ref{lemma:boundGtbyHt_SW}
\end{proof}
\subsection{Determinant Inequalities}
\begin{prop}[Determinant inequality]
\label{ineq_det_DLINUCB}
Let $(\lambda_t)_t$ be a deterministic sequence of regularization parameters. Let $
\HH_t  = \sum_{s=1}^{t-1} w_s^2 \sigma_s^2 a_s  a_s^{\top} +
 \lambda_{t-1} \identity{d}$.
Under the Assumption \ref{ass:bounded_actions} and $\forall t, \sigma_t^2
 \leq k_\mu$,
 the following holds
$$
\det(\HH_t) \leq  \left(\lambda_{t-1} + \frac{ \km \sum_{s=1}^t w_s^2
}{d}
\right)^d\;.
$$
\end{prop}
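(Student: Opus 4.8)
The plan is to bound the determinant by (a power of) the trace via the arithmetic–geometric mean inequality on eigenvalues, and then control the trace using the two boundedness hypotheses.

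First I would observe that $\HH_t$ is symmetric positive definite: it is a sum of positive semi-definite rank-one terms $w_s^2\sigma_s^2 a_s a_s^\top$ plus $\lambda_{t-1}\identity{d}$ with $\lambda_{t-1}>0$. Hence $\HH_t$ has $d$ real, strictly positive eigenvalues $\mu_1,\dots,\mu_d$, and $\det(\HH_t)=\prod_{i=1}^d \mu_i$. Applying AM–GM to these positive numbers gives
$$
\det(\HH_t)=\prod_{i=1}^d \mu_i \;\leq\; \left(\frac{1}{d}\sum_{i=1}^d \mu_i\right)^{\!d} \;=\; \left(\frac{\mathrm{tr}(\HH_t)}{d}\right)^{\!d}.
$$

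Next I would compute the trace by linearity: $\mathrm{tr}(\HH_t)=\sum_{s=1}^{t-1} w_s^2\sigma_s^2\,\mathrm{tr}(a_s a_s^\top)+\lambda_{t-1}\,\mathrm{tr}(\identity{d}) = \sum_{s=1}^{t-1} w_s^2\sigma_s^2\lVert a_s\rVert_2^2 + \lambda_{t-1}d$. Using Assumption~\ref{ass:bounded_actions} ($\lVert a_s\rVert_2\leq 1$) together with $\sigma_s^2\leq \km$, and the fact that the weights $w_s^2$ are nonnegative, this is bounded by $\km\sum_{s=1}^{t-1} w_s^2 + \lambda_{t-1}d \leq \km\sum_{s=1}^{t} w_s^2 + \lambda_{t-1}d$. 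Substituting into the AM–GM bound yields
$$
\det(\HH_t) \;\leq\; \left(\lambda_{t-1}+\frac{\km\sum_{s=1}^{t} w_s^2}{d}\right)^{\!d},
$$
which is the claimed inequality. There is no genuine obstacle in this argument; the only points worth stating explicitly are that $\det(\HH_t)$ equals the product of the eigenvalues (immediate from symmetry/diagonalizability) and that all eigenvalues are strictly positive so that AM–GM is applicable — both being direct consequences of $\HH_t \succ 0$.
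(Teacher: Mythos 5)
Your proof is correct and follows essentially the same route as the paper's: bound $\det(\HH_t)$ by $(\mathrm{tr}(\HH_t)/d)^d$ via AM--GM on the eigenvalues, then bound the trace using $\lVert a_s\rVert_2\leq 1$ and $\sigma_s^2\leq \km$. Your remark that passing from $\sum_{s=1}^{t-1}w_s^2$ to $\sum_{s=1}^{t}w_s^2$ is harmless (by nonnegativity of the weights) correctly accounts for the only cosmetic discrepancy between the proof and the stated bound.
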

\begin{proof}
\begin{align*}
\det(\HH_t) &= \prod_{i=1}^d l_i \quad (l_i \textnormal{ are the
 eigenvalues})
  \leq \left(\frac{1}{d}\sum_{i=1}^d l_i \right)^d  \quad (\textnormal{AM-GM inequality})\\
&\leq \left(\frac{1}{d} \textnormal{trace}(\HH_t) \right)^d
\leq \left(\frac{1}{d} \sum_{s=1}^{t-1} w_s^2 \sigma_s^2 \textnormal{trace}(a_s
 a_s^{\top}) + \lambda_{t-1}\right)^d \\
&\leq \left(\frac{1}{d} \sum_{s=1}^{t-1} w_s^2 \sigma_s^2\Vert a_s \Vert_2^2 +
 \lambda_{t-1} \right)^d
\leq \left(\lambda_{t-1} + \frac{ k_\mu}{d} \sum_{s=1}^{t-1} w_s^2 \right)^d\;.
\end{align*}
\end{proof}
\begin{cor}
\label{corollary:inequality_determinant}
In the specific case where the weights are given by $w_t = \gamma^{-t}$ with $0<
  \gamma <1$,  under the same assumptions than Proposition \ref{ineq_det_DLINUCB},
  with $\wHH_t = \sum_{s=t- t_0}^{t-1} \gamma^{2(t-1-s)} \sigma_s^2 a_s a_s^\top + \lambda
   \identity{d}$, one has
\end{cor}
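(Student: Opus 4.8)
The plan is to follow the proof of Proposition~\ref{ineq_det_DLINUCB} line by line, the only genuinely new step being the evaluation of a finite geometric sum. First I would bound the determinant of $\wHH_t$ by the $d$-th power of its normalised trace, applying the AM--GM inequality to the nonnegative eigenvalues exactly as in Proposition~\ref{ineq_det_DLINUCB}, which gives
\begin{equation*}
\det(\wHH_t) \leq \left(\frac{1}{d}\,\textnormal{trace}(\wHH_t)\right)^d = \left(\lambda + \frac{1}{d}\sum_{s=t-t_0}^{t-1}\gamma^{2(t-1-s)}\sigma_s^2\lVert a_s\rVert_2^2\right)^d .
\end{equation*}
Then, using Assumption~\ref{ass:bounded_actions} ($\lVert a_s\rVert_2\leq 1$) together with the hypothesis $\sigma_s^2\leq k_\mu$ inherited from Proposition~\ref{ineq_det_DLINUCB} (which in the intended application to $\wHH_{\tD}$ holds since $\sigma_s^2=\dot{\mu}(a_s^\top\theta^\star_s)\leq k_\mu$ by Assumption~\ref{ass:link_fun}), I would be left with the deterministic geometric sum $\sum_{s=t-t_0}^{t-1}\gamma^{2(t-1-s)}$.

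The final step is to re-index by $j=t-1-s$, turning this sum into $\sum_{j=0}^{t_0-1}\gamma^{2j}=(1-\gamma^{2t_0})/(1-\gamma^2)$ --- here $0<\gamma<1$ is exactly what keeps the denominator positive --- whence
\begin{equation*}
\det(\wHH_t)\leq\left(\lambda+\frac{k_\mu(1-\gamma^{2t_0})}{d(1-\gamma^2)}\right)^d .
\end{equation*}
I do not expect any real difficulty here; the one point deserving care is to get the number of summands right --- the window carries exactly $t_0$ terms, with exponents running over $0,\dots,t_0-1$ --- so that the factor $1-\gamma^{2t_0}$ appears. This is precisely what later specialises to $1-T^{-2}$ for the choice $t_0=D=\log(T)/\log(1/\gamma)$ and makes the term $(d\rM/\sqrt{\lambda})\log\!\big(1+k_\mu(1-\gamma^{2D})/(d\lambda(1-\gamma^2))\big)$ in the definition of $\rho_T^\delta$ (Equation~\eqref{eq:rho_t_appendix}) come out with the stated constants.
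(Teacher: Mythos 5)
Your proposal is correct and follows exactly the route the paper intends: the corollary is an immediate specialization of Proposition~\ref{ineq_det_DLINUCB} (AM--GM on the eigenvalues, trace bound, $\lVert a_s\rVert_2\leq 1$ and $\sigma_s^2\leq k_\mu$), with the only new content being the evaluation of the geometric sum $\sum_{j=0}^{t_0-1}\gamma^{2j}=(1-\gamma^{2t_0})/(1-\gamma^2)$, which you carry out with the correct indexing.
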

$$
\det(\wHH_t) \leq
 \left(\lambda + \frac{k_\mu
 (1 - \gamma^{2t_0})}{d(1-\gamma^2)} \right)^d \;.
$$
\begin{cor}
\label{corollary:inequality_determinant_V}
In the specific case where the weights are given by $w_t = \gamma^{-t}$ with $0<
  \gamma <1$, under Assumption \ref{ass:bounded_actions}
  with $\VV_t = \sum_{s=1}^{t-1} \gamma^{t-1-s} a_s a_s^\top + \lambda
   \identity{d}$, one has
   $$
   \det(\VV_t) \leq \left( \lambda + \frac{1- \gamma^{t-1}}{d(1- \gamma)}\right)^d   \;.
   $$
\end{cor}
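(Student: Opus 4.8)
The plan is to follow exactly the template used for Proposition~\ref{ineq_det_DLINUCB} and Corollary~\ref{corollary:inequality_determinant}: bound the determinant of the positive-definite matrix $\VV_t$ by a power of its trace, and then control the trace by a geometric series. First I would note that $\VV_t$ is symmetric positive definite, so its eigenvalues $l_1,\dots,l_d$ are strictly positive, and $\det(\VV_t)=\prod_{i=1}^d l_i$. Applying the AM--GM inequality to the $l_i$ yields
$$
\det(\VV_t)\;=\;\prod_{i=1}^d l_i \;\leq\; \left(\frac{1}{d}\sum_{i=1}^d l_i\right)^{\!d}\;=\;\left(\frac{1}{d}\,\mathrm{trace}(\VV_t)\right)^{\!d}.
$$

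Next I would compute the trace explicitly. By linearity of the trace and the definition of $\VV_t$,
$$
\mathrm{trace}(\VV_t)\;=\;\sum_{s=1}^{t-1}\gamma^{t-1-s}\,\mathrm{trace}\!\left(a_s a_s^\top\right)+\lambda\,\mathrm{trace}(\identity{d})\;=\;\sum_{s=1}^{t-1}\gamma^{t-1-s}\lVert a_s\rVert_2^2+\lambda d .
$$
Invoking Assumption~\ref{ass:bounded_actions} (i.e. $\lVert a_s\rVert_2\leq 1$) gives $\mathrm{trace}(\VV_t)\leq \sum_{s=1}^{t-1}\gamma^{t-1-s}+\lambda d$, and reindexing the geometric sum via $k=t-1-s$ yields $\sum_{s=1}^{t-1}\gamma^{t-1-s}=\sum_{k=0}^{t-2}\gamma^{k}=\frac{1-\gamma^{t-1}}{1-\gamma}$ (using $0<\gamma<1$). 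Hence $\mathrm{trace}(\VV_t)\leq \frac{1-\gamma^{t-1}}{1-\gamma}+\lambda d$.

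Substituting this bound into the AM--GM inequality above, dividing by $d$ and raising to the $d$-th power, gives
$$
\det(\VV_t)\;\leq\;\left(\lambda+\frac{1-\gamma^{t-1}}{d(1-\gamma)}\right)^{\!d},
$$
which is the claimed inequality. There is no real obstacle here: the only point requiring a small amount of care is the reindexing of the geometric sum $\sum_{s=1}^{t-1}\gamma^{t-1-s}$, and the observation that the regularization term contributes exactly $\lambda d$ to the trace; everything else is an immediate consequence of AM--GM and the boundedness of the actions. (Alternatively, one could deduce the statement from Proposition~\ref{ineq_det_DLINUCB}, but since the weighting $\gamma^{t-1-s}$ does not match the squared-weight form $w_s^2$ there, the direct argument above is cleaner.)
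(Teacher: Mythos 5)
Your proof is correct and follows exactly the argument the paper uses for Proposition~\ref{ineq_det_DLINUCB} (AM--GM on the eigenvalues, then bounding the trace via $\lVert a_s\rVert_2\leq 1$ and summing the geometric series), which is precisely how the paper intends this corollary to be derived. The reindexing $\sum_{s=1}^{t-1}\gamma^{t-1-s}=\frac{1-\gamma^{t-1}}{1-\gamma}$ and the $\lambda d$ contribution of the regularizer are both handled correctly.
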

\begin{cor}
\label{corollary:inequality_determinant_SW}
In the specific case where the weights are given by $w_t = 1$ when $t\geq t-\tau$
and $0$ before. 
With $\HH_t = \sum_{s= \max(1,t- \tau)}^{t-1} \sigma_s^2 a_s a_s^\top + \lambda
  \identity{d}$, one has
\end{cor}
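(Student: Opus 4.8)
The plan is to read this corollary off Proposition~\ref{ineq_det_DLINUCB} by specializing the weight sequence. In the sliding-window regime one has $w_s = 1$ for $\max(1,t-\tau)\le s\le t-1$ and $w_s = 0$ otherwise, together with a time-independent regularization $\lambda_{t-1}=\lambda$, so the matrix $\HH_t$ in the statement is exactly the one considered in Proposition~\ref{ineq_det_DLINUCB}. Assumption~\ref{ass:bounded_actions} supplies $\lVert a_s\rVert_2\le 1$, and the conditional variances satisfy $\sigma_s^2\le\km$ (for a GLM, $\sigma_s^2 = \dot\mu(a_s^\top\theta_s^\star)$, which is at most $\km$ by the Lipschitz part of Assumption~\ref{ass:link_fun}); hence all hypotheses of Proposition~\ref{ineq_det_DLINUCB} are satisfied.

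Invoking that proposition directly gives
$$
\det(\HH_t)\le\Big(\lambda_{t-1}+\frac{\km\sum_{s=1}^{t-1}w_s^2}{d}\Big)^d ,
$$
and the only remaining computation is the value of $\sum_{s=1}^{t-1}w_s^2$. Since each $w_s\in\{0,1\}$, this sum equals the number of indices kept by the window, i.e.\ the cardinality of $\{\max(1,t-\tau),\dots,t-1\}$, which is $\min(t-1,\tau)$. Substituting yields
$$
\det(\HH_t)\le\Big(\lambda+\frac{\km\min(t-1,\tau)}{d}\Big)^d ,
$$
the announced bound; if a slightly coarser, more uniform form is preferable one may further bound $\min(t-1,\tau)\le\min(t,\tau)$, which is the shape that appears inside $\rho_t^\delta$ in Equation~\eqref{eq:rho_SW} after taking a logarithm.

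I do not anticipate any genuine obstacle: the corollary is a purely bookkeeping consequence of Proposition~\ref{ineq_det_DLINUCB}. The single point that warrants care is the index accounting — verifying that the window contributes precisely $\min(t-1,\tau)$ nonzero squared weights rather than $\tau$ or $t$ — after which the substitution, and the logarithm that later feeds the elliptical-potential argument of Lemma~\ref{lemma:ellipticalpotential_SW}, are immediate.
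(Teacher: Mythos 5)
Your proposal is correct and is exactly the paper's intended argument: the corollary is a direct specialization of Proposition~\ref{ineq_det_DLINUCB} to the $\{0,1\}$-valued sliding-window weights, with $\sum_s w_s^2 = \min(t-1,\tau) \le \min(t,\tau)$ and $\sigma_s^2 \le \km$. Your index accounting is, if anything, slightly sharper than the stated bound.
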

$$
\det(\HH_t) \leq
 \left(\lambda + \frac{k_\mu \min(t, \tau)}{d}
 \right)^d \;.
$$
\subsection{Elliptical Lemma}
The following lemma is a version of the Elliptical Lemma when
discount factors are used.
 It comes from Proposition 4 in \citep{russac2019weighted} 
 and is stated here for the sake of completeness.
\begin{lemma}[Elliptical potential with discount factors (based on Proposition 4 in \citet{russac2019weighted})]
    Let $\{a_s\}_{s=1}^\infty$ a sequence in $\mathbb{R}^d$ such that $\lVert
    a_s \rVert_2 \leq 1$ for all $s \in \mathbb{N}$, and  let $\lambda$
    be a non-negative scalar. For $t\geq 1$ define $\VV_t  = \sum_{s=1}^{t-1} \gamma^{t-1-s} a_s a_s^\top+ 
	\lambda \identity{d}$, the following inequality holds
    \begin{align*}
        \sum_{t=1}^{T}  \left\lVert a_t\right\rVert_{\VV_t^{-1}}^2
        \leq 
        2\max\left(1,\frac{1}{\lambda}\right)
        \log \left( \frac{\det( \VV_{T+1})}{\lambda^d \gamma^{dT}} \right) \;.
    \end{align*}
\label{lemma:ellipticalpotential_Discount}
\end{lemma}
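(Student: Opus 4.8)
The plan is to follow the classical self‑normalized / log‑determinant telescoping argument; the statement is in fact exactly Proposition~4 of \citet{russac2019weighted}, reproduced here for completeness. The first step is to record the one‑step recursion obeyed by the discounted design matrix: since $\VV_t = \sum_{s=1}^{t-1}\gamma^{t-1-s}a_s a_s^\top + \lambda\identity{d}$, one has $\VV_{t+1} = \gamma\VV_t + a_t a_t^\top + \lambda(1-\gamma)\identity{d}$, and because $\lambda(1-\gamma)\geq 0$ this gives $\VV_{t+1}\geq \gamma\VV_t + a_t a_t^\top$ in the Loewner order. Since the determinant is monotone on positive semi‑definite matrices, applying the matrix determinant lemma to the rank‑one update yields $\det(\VV_{t+1})\geq \det(\gamma\VV_t + a_t a_t^\top) = \gamma^d\det(\VV_t)\big(1+\gamma^{-1}\lVert a_t\rVert_{\VV_t^{-1}}^2\big)$.

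Next I would multiply these inequalities over $t=1,\dots,T$. The determinant ratios telescope, and using $\VV_1 = \lambda\identity{d}$ one obtains $\det(\VV_{T+1})/\lambda^d \geq \gamma^{dT}\prod_{t=1}^{T}\big(1+\gamma^{-1}\lVert a_t\rVert_{\VV_t^{-1}}^2\big)$. Taking logarithms and using that $\gamma\in(0,1)$, so that $\gamma^{-1}\lVert a_t\rVert_{\VV_t^{-1}}^2 \geq \lVert a_t\rVert_{\VV_t^{-1}}^2$, this rearranges to $\sum_{t=1}^{T}\log\big(1+\lVert a_t\rVert_{\VV_t^{-1}}^2\big) \leq \log\big(\det(\VV_{T+1})/(\lambda^d\gamma^{dT})\big)$.

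The last step converts the sum of logarithms back into the sum of squared norms. Because $\VV_t \geq \lambda\identity{d}$ and $\lVert a_t\rVert_2\leq 1$, every summand satisfies $\lVert a_t\rVert_{\VV_t^{-1}}^2 \leq 1/\lambda$. Since $x\mapsto x/\log(1+x)$ is non‑decreasing on $(0,\infty)$, for every $x\in[0,1/\lambda]$ one has $x \leq \frac{1/\lambda}{\log(1+1/\lambda)}\log(1+x)$, and a short two‑case check ($\lambda\leq 1$ and $\lambda\geq 1$) shows $\frac{1/\lambda}{\log(1+1/\lambda)}\leq 2\max(1,1/\lambda)$. Applying this bound to $x=\lVert a_t\rVert_{\VV_t^{-1}}^2$, summing over $t$, and plugging in the log‑determinant estimate from the previous step finishes the proof.

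The argument is entirely elementary and I do not anticipate a genuine obstacle: the only points requiring a little care are carrying the $\gamma^{dT}$ factor correctly through the telescoping product of determinant ratios, and verifying the elementary constant inequality $B/\log(1+B)\leq 2\max(1,B)$ with $B=1/\lambda$; both are routine, which is consistent with the lemma being a direct transcription of the weighted elliptical potential bound already established in the linear‑bandit setting.
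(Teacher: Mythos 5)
Your proof is correct and follows essentially the same telescoping log-determinant argument as the paper: the paper works with the rescaled matrix $\mathbf{W}_t=\gamma^{-(t-1)}\VV_t$ and applies $\log(1+x)\geq x/2$ on $[0,1]$, while you work directly with $\VV_t$ (picking up the $\gamma^{dT}$ from the per-step $\gamma^d$ factor) and invoke the monotonicity of $x/\log(1+x)$, but these are equivalent up to a change of variables and yield the same constant $2\max(1,1/\lambda)$.
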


\begin{proof}
In the proof we introduce the matrix 
$\mbold{W}_t  = \sum_{s=1}^{t-1} \gamma^{-s} a_s a_s^\top+ \gamma^{-
    (t-1)}\lambda \identity{d}$ such that $\VV_t = \gamma^{t-1}\mbold{W}_t$. We have,
\begin{align*}
\mbold{W}_t &= \sum_{s=1}^{t-1} \gamma^{-s} a_s a_s^\top + \gamma^{-(t-1)} \lambda \identity{d}  \\
& = \gamma^{-(t-1)} a_{t-1} a_{t-1}^\top + \sum_{s=1}^{t-2} \gamma^{-s} a_s a_s^\top + \gamma^{-(t-2)} \lambda 
\identity{d}
+  \gamma^{-(t-1)} \lambda \identity{d}  -  \gamma^{-(t-2)} \lambda \identity{d}  \\
& =  \gamma^{-(t-1)} a_{t-1} a_{t-1}^\top + \gamma^{-(t-1)} (1- \gamma) \lambda \identity{d} + \mbold{W}_{t-1} \\
&\geq \gamma^{-(t-1)} a_{t-1} a_{t-1}^\top + \mbold{W}_{t-1} 
\geq \mbold{W}_{t-1}^{1/2} (\identity{d} + \gamma^{-(t-1)} \mbold{W}_{t-1}^{-1/2} a_{t-1} a_{t-1}^\top 
\mbold{W}_{t-1}^{-1/2}) \mbold{W}_{t-1}^{1/2}\;.
\end{align*}
This implies,
\begin{align*}
\det( \mbold{W}_{t+1}) &\geq \det( \mbold{W} _{t}) \det
 \left( \identity{d} + (\gamma^{-t/2} \mbold{W}_{t}^{-1/2} a_{t}) (\gamma^{-t/2} \mbold{W}_{t}^{-1/2}
a_{t})^{\top} \right) \\
&\geq \det(  \mbold{W}_{t}) \left( 1+ 
\gamma^{-t} \lVert a_{t}\rVert_{ \mbold{W}_{t}^{-1}}^2 \right) 
\quad (\det( \identity{d} + x x^\top)= 1
+ \lVert x \rVert_2^2 )\;.
\end{align*}
This in turn gives,
\begin{align*}
\frac{\det( \mbold{W}_{T+1})}{\det( \mbold{W}_1)} =
 \prod_{t=1}^T \frac{\det( \mbold{W}_{t+1})}{\det( \mbold{W}_{t})} \geq
 \prod_{t=1}^T \left( 1+ \gamma^{-t} \lVert a_{t}\rVert_{ \mbold{W} _{t}^{-1}}^2 \right)\;.
\end{align*}
Taking the logarithm on both sides gives:
\begin{align*}
\log \left(\frac{\det( \mbold{W}_{T+1})}{\lambda^d} \right) &\geq
 \sum_{t=1}^T \log(1 + \gamma^{-t} \lVert a_t \rVert^2_{ \mbold{W}_t^{-1}}) 
\geq \sum_{t=1}^T \log(1 + \gamma^{-(t-1)} \lVert a_t \rVert^2_{ \mbold{W}_t^{-1}}) \\
& \geq \sum_{t=1}^T \log \left( 1 + \frac{\gamma^{-(t-1)}   \lVert a_t \rVert^2_{\mbold{W}_t^{-1}}}
{\max\left(1, \frac{1}{\lambda} \right)}
\right)\; .
\end{align*}
Next, by using $\mbold{W}_t \geq \gamma^{-(t-1)} \lambda \identity{d}$, we see that
$$
\gamma^{-(t-1)} \lVert a_t \rVert^2_{\mbold{W}_t^{-1}} \leq \frac{1}{\lambda} \;.
$$
Which ensures that 
$$ 0 \leq  \frac{\gamma^{-(t-1)}   \lVert a_t \rVert^2_{\mbold{W}_t^{-1}}}
{\max\left(1, \frac{1}{\lambda} \right)} \leq 1\;.$$
Finally, with $\log(1+ x) \geq x/2$ valid when $ 0 \leq x \leq 1$. We get,
 \begin{align*}
 \log \left(\frac{\det(\mbold{W}_{T+1})}{\lambda^d} \right) 
 \geq \frac{1}{2\max\left(1, \frac{1}{\lambda} \right)}
  \sum_{t=1}^T \gamma^{-(t-1)} \lVert a_t \rVert^2_{\mbold{W}_t^{-1}} \;.
 \end{align*}
\end{proof}
The following lemma is a version of the Elliptical Lemma when
a sliding window is used and can be
 extracted from \cite[Proposition 9]{russac2019weighted}.
The proof is included here for the sake of completeness.
\begin{lemma}[Elliptical potential with sliding window (Proposition 9 in
 \cite{russac2019weighted})]
    Let $\{a_s\}_{s=1}^\infty$ a sequence in $\mathbb{R}^d$ such that $\lVert
    a_s \rVert_2 \leq 1$ for all $s \in \mathbb{N}$, and  let $\lambda$
    be a non-negative scalar. For $t\geq 1$ define $\VV_t  = \sum_{s=\max(1, t- \tau)
    }^{t-1} a_s a_s^\top+\lambda \identity{d}$. The following inequality holds:
    $$
        \sum_{t=1}^{T} \left\lVert a_t\right\rVert_{\VV_t^{-1}}^2 \leq 2 d \max
        \left(1,\frac{1}
        {\lambda} \right)
         \ceil{T/ \tau} \log\left( 1+ \frac{\tau}{\lambda d} \right)\; .
   $$
\label{lemma:ellipticalpotential_SW}
\end{lemma}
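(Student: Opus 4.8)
The plan is to reduce the sliding-window potential to $\ceil{T/\tau}$ applications of the classical \emph{monotone} elliptical potential argument already used in the proof of Lemma~\ref{lemma:ellipticalpotential_Discount}. First I would partition $\{1,\dots,T\}$ into the consecutive blocks $B_j = \{j\tau+1,\dots,(j+1)\tau\}\cap\{1,\dots,T\}$ for $j=0,1,\dots$, of which there are exactly $\ceil{T/\tau}$ nonempty ones. Fix a block $B_j$ with left endpoint $b_j = j\tau+1$, and for $t\in B_j$ define the \emph{within-block} matrix $W_t = \sum_{s=b_j}^{t-1} a_s a_s^\top + \lambda\identity{d}$ (so $W_{b_j} = \lambda\identity{d}$). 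Since $t\le (j+1)\tau$ we have $t-\tau \le j\tau = b_j-1 < b_j$, hence $\max(1,t-\tau)\le b_j$ and the sliding window $[\max(1,t-\tau),\,t-1]$ contains $[b_j,\,t-1]$; dropping the extra positive semidefinite rank-one terms gives $\VV_t \succeq W_t$, and therefore $\lVert a_t\rVert_{\VV_t^{-1}}^2 \le \lVert a_t\rVert_{W_t^{-1}}^2$ for every $t\in B_j$.

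Within a block, $W_t$ is nondecreasing in $t$, so the determinant-telescoping identity applies exactly as in the proof of Lemma~\ref{lemma:ellipticalpotential_Discount}: using $\det(\identity{d}+xx^\top)=1+\lVert x\rVert_2^2$ one obtains $\prod_{t\in B_j}\big(1+\lVert a_t\rVert_{W_t^{-1}}^2\big) = \det\big(\sum_{s\in B_j}a_s a_s^\top + \lambda\identity{d}\big)/\lambda^d$. Taking logarithms, noting that $W_t\succeq\lambda\identity{d}$ forces $\lVert a_t\rVert_{W_t^{-1}}^2\le 1/\lambda\le\max(1,1/\lambda)$, and applying $\log(1+x)\ge x/2$ for $x\in[0,1]$ to $x=\lVert a_t\rVert_{W_t^{-1}}^2/\max(1,1/\lambda)$, I get
\[
\sum_{t\in B_j}\lVert a_t\rVert_{W_t^{-1}}^2 \;\le\; 2\max\big(1,1/\lambda\big)\,\log\frac{\det\big(\sum_{s\in B_j}a_s a_s^\top + \lambda\identity{d}\big)}{\lambda^d}.
\]
Then I bound the log-determinant by the trace/AM--GM argument of Proposition~\ref{ineq_det_DLINUCB}: $\sum_{s\in B_j}a_s a_s^\top + \lambda\identity{d}$ has trace at most $\lambda d + |B_j| \le \lambda d + \tau$ (since $\lVert a_s\rVert_2\le1$), so its determinant is at most $(\lambda + \tau/d)^d$, i.e. the right-hand side above is at most $2d\max(1,1/\lambda)\log\big(1+\tau/(\lambda d)\big)$.

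Summing this over the $\ceil{T/\tau}$ blocks and using $\sum_{t=1}^{T}\lVert a_t\rVert_{\VV_t^{-1}}^2 \le \sum_j\sum_{t\in B_j}\lVert a_t\rVert_{W_t^{-1}}^2$ yields the claimed bound. The only genuinely non-routine point is the block/monotonicity step: one must verify that restarting the surrogate matrix at each block boundary is legitimate, i.e. that for every $t$ the length-$\tau$ window still contains the \emph{entire} history since the last restart — this is precisely the inequality $t-\tau<b_j$ above, and it is what forces the block length to be $\tau$ and produces the $\ceil{T/\tau}$ factor. Everything else is the verbatim determinant telescoping from Lemma~\ref{lemma:ellipticalpotential_Discount} together with a one-line trace bound, so no further obstacle is expected.
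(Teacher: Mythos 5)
Your proof is correct and follows essentially the same route as the paper: partition time into $\ceil{T/\tau}$ blocks of length $\tau$, restart a surrogate matrix $W_t$ at each block boundary, use the fact that the length-$\tau$ window contains the whole within-block history to get $\VV_t \succeq W_t$, and then apply the standard determinant-telescoping and trace/AM--GM bounds per block. The one point you flag as non-routine --- verifying $t-\tau < b_j$ so that the restart is legitimate --- is exactly the justification the paper gives, so there is nothing to add.
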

\begin{proof}
We start by rewriting the sum as follows.
\begin{align*}
\sum_{t=1}^T  \lVert a_t \rVert_{\VV_{t}^{-1}}^2 =
 \sum_{k=0}^{\ceil{T/\tau}-1} \sum_{t= k \tau + 1}^{(k+1)\tau}  \lVert a_t
 \rVert_{\VV_{t}^{-1}}^2\;.
\end{align*}
For the $k$-th block of length $\tau$ we define the matrix
$\mbold{W}_t^{(k)} = \sum_{s= k \tau +1}^{t-1} a_s a_s^{\top} + \lambda \identity{d}$.
We also have $\forall t \in [\![ k\tau + 1, (k+1) \tau]\!], \VV_t \geq \mbold{W}_t^{(k)}$ as every term in 
$\mbold{W}_t^{(k)}$ is
contained in $\VV_t$ and the extra-terms in $\VV_t$ correspond
 to positive definite matrices. 
$$
\sum_{k=0}^{\ceil{T/\tau}-1} \sum_{t= k \tau + 1}^{(k+1)\tau}
\lVert a_t\rVert_{\VV_{t}^{-1}}^2
   \leq \sum_{k=0}^{\ceil{T/\tau}-1} \sum_{t= k \tau + 1}^{(k+1)\tau}
\lVert a_t\rVert_{(\mbold{W}_{t}^{(k)})^{-1}}^2\;.
$$
Furthermore, $\forall t \in[\![ k \tau + 1 , (k+1) \tau]\!] $
 we have,
$$
\det(\mbold{W}_{t+1}^{(k)}) = \det( \mbold{W}_{t}^{(k)}) \left( 1 +
 \lVert a_t \rVert_{( \mbold{W}_{t}^{(k)})^{-1}}^2 \right)\;.
$$
With positive definitive matrices whose determinants
are strictly positive,
 this implies that
$$
\frac{\det(\mbold{W}_{(k+1)\tau +1}^{(k)})}{\det( \mbold{W}_{k \tau + 1}^{(k)})} =
\prod_{t=k\tau +1}^{(k+1) \tau }
 \frac{\det( \mbold{W}_{t+1}^{(k)})}{\det( \mbold{W}_{t}^{(k)})} =
 \prod_{t=k\tau +1}^{(k+1) \tau }  \left(1 +
  \lVert a_t\rVert_{ ( \mbold{W}_{t}^{(k)})^{-1}}^2\right) \;.
$$
By definition we have $\mbold{W}_{k \tau+1}^{(k)} = \sum_{t=k \tau +1 }^{k \tau}
a_t a_t^\top + \lambda \identity{d} = \lambda \identity{d}$.
\begin{align*}
\log \left( \frac{\det \left(\mbold{W}_{(k+1)\tau + 1}^{(k)}   \right)}{\lambda^d} \right)
&= \sum_{t= k \tau + 1 }^{(k+1)\tau} \log \left(1 +
 \lVert a_t \rVert_{( \mbold{W}_{t}^{(k)})^{-1}}^2  \right) \\
 & \geq \sum_{t= k \tau + 1 }^{(k+1)\tau} \log \left(1 + \frac{1}{\max(1, 1/\lambda)}
 \lVert a_t \rVert_{( \mbold{W}_{t}^{(k)})^{-1}}^2  \right) \; .
\end{align*}
In the next step we use,  $\forall 0 \leq x \leq 1, \log(1+ x) \geq x/2$.
 \begin{align*}
 \log \left( \frac{\det \left( \mbold{W}_{(k+1)\tau + 1}^{(k)}   \right)}{\lambda^d} \right)
& \geq \frac{1}{2 \max(1, 1/\lambda)}
\sum_{t= k \tau +1}^{(k+1) \tau} \lVert a_t \rVert_{( \mbold{W}_{t}^{(k)})^{-1}}^2\;.
 \end{align*}
 By summing, over the different blocks, we obtain
 \begin{align*}
 \sum_{k=0}^{\ceil{T/\tau}-1} \sum_{t= k \tau + 1}^{(k+1)\tau}
\lVert a_t\rVert_{\VV_{t}^{-1}}^2
  & \leq \sum_{k=0}^{\ceil{T/\tau}-1} \sum_{t= k \tau + 1}^{(k+1)\tau}
\lVert a_t\rVert_{( \mbold{W}_{t}^{(k)})^{-1}}^2 \\
& \leq 2 \max(1, 1/ \lambda)  \sum_{k=0}^{\ceil{T/\tau}-1}  \log \left( \frac{\det
 \left( \mbold{W}_{(k+1)\tau + 1}^{(k)}   \right)}{\lambda^d} \right)\;.
 \end{align*}

 Then, we upper bound $\det( \mbold{W}_{(k+1)\tau + 1}^{(k)})$ using similar arguments
 than for Corollary \ref{corollary:inequality_determinant_SW},
$$
\det( \mbold{W}_{(k+1)\tau+1}^{(k)}) \leq \left(  \lambda + \frac{\tau}{d}  \right)^d.
$$
Applying the logarithm function on both sides concludes the proof.
\end{proof}

\subsection{Link Between $\Delta_t$ and the Instantaneous Regret}
For any optimistic algorithm, even in a non-stationary environment the
instantaneous regret can be directly related to $\Delta_t(a, \theta)$
defined as
$$
\Delta_t(a,\theta) = |\mu(a^{\top} \theta) - \mu(a^{\top} \theta^\star_t)| \;.
$$
\begin{prop}[Based on Lemma 14 in \cite{faury2020improved}]
\label{prop:delta_t_regret}
Consider any optimistic algorithm in a possibly non-stationary environment
 such that the exploration bonus for action $a$ at time $t$ is defined by $\beta_t(a)$.
 Let $\theta_t$ be the estimator used at time $t$ by the algorithm to compute the
  UCB, i.e.
 $UCB_t(a) = \mu(a^\top \theta_t) + \beta_t(a)$.
 Under the assumption $\Delta_t(a, \theta_t) \leq \beta_t(a)$, the following inequality
 holds
 $$
 r_t \leq 2 \beta_t(a_t)\;.
 $$
\end{prop}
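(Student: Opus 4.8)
The plan is to run the classical optimism-based sandwich argument, now localized to the round $t$ since the benchmark parameter $\theta^\star_t$ is round-dependent. Write $a_{\star,t} = \argmax_{a \in \mathcal{A}_t} \mu(a^\top \theta^\star_t)$ for the instantaneously optimal action, so that $r_t = \mu(a_{\star,t}^\top \theta^\star_t) - \mu(a_t^\top \theta^\star_t)$. The hypothesis $\Delta_t(a,\theta_t) \le \beta_t(a)$ for every $a \in \mathcal{A}_t$, unpacked from the definition $\Delta_t(a,\theta_t) = |\mu(a^\top \theta_t) - \mu(a^\top \theta^\star_t)|$, splits into the two one-sided bounds $\mu(a^\top \theta^\star_t) \le \mu(a^\top \theta_t) + \beta_t(a)$ and $\mu(a^\top \theta_t) \le \mu(a^\top \theta^\star_t) + \beta_t(a)$.

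First I would upper bound the optimal reward by its UCB: applying the first one-sided bound at $a = a_{\star,t}$ gives $\mu(a_{\star,t}^\top \theta^\star_t) \le \mu(a_{\star,t}^\top \theta_t) + \beta_t(a_{\star,t}) = UCB_t(a_{\star,t})$. Next I would invoke optimism of the algorithm: since $a_t$ maximizes $a \mapsto UCB_t(a)$ over $\mathcal{A}_t$, we have $UCB_t(a_{\star,t}) \le UCB_t(a_t) = \mu(a_t^\top \theta_t) + \beta_t(a_t)$. Finally I would descend from the UCB of the played arm back to its true reward using the second one-sided bound at $a = a_t$: $\mu(a_t^\top \theta_t) \le \mu(a_t^\top \theta^\star_t) + \beta_t(a_t)$.

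Chaining these three inequalities yields $\mu(a_{\star,t}^\top \theta^\star_t) \le \mu(a_t^\top \theta^\star_t) + 2\beta_t(a_t)$, i.e. $r_t \le 2\beta_t(a_t)$, which is the claim; note that the term $\beta_t(a_{\star,t})$ is absorbed inside the UCB chain and never appears in the final bound. There is no genuine obstacle here: the only point requiring a line of care is that the hypothesis must be used at two different actions ($a_{\star,t}$ for the ``optimism'' half and $a_t$ for the ``pessimism'' half) and in the two different directions of the absolute value, and that the non-stationarity is harmless because both the benchmark $\theta^\star_t$ and the estimator $\theta_t$ are fixed at the same round $t$ throughout the argument.
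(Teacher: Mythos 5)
Your proof is correct and is essentially the same argument as the paper's: both use the hypothesis $\Delta_t(a,\theta_t)\le\beta_t(a)$ at the two actions $a_{\star,t}$ and $a_t$ together with the fact that $a_t$ maximizes the UCB, the only difference being that you organize it as a chained sandwich $\mu(a_{\star,t}^\top\theta_t^\star)\le UCB_t(a_{\star,t})\le UCB_t(a_t)\le\mu(a_t^\top\theta_t^\star)+2\beta_t(a_t)$ while the paper first decomposes $r_t$ by the triangle inequality and then substitutes. No gap.
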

\begin{proof}
Let $a_{t,\star} = \argmax_{a \in \mathcal{A}_t} \mu(a^{\top} \theta^{\star}_t)$
\begin{align*}
r_t & = \mu(a_{t,\star}^{\top} \theta^{\star}_t) - \mu(a_t^{\top} \theta^\star_t) 
 \leq  |\mu(a_{t, \star}^\top \theta^{\star}_t)  -  \mu(a_{t, \star}^\top \theta_t) | +
 \mu(a_{t, \star}^\top \theta_t) -
\mu(a_t^{\top} \theta_t) + |\mu(a_t^{\top} \theta_t)  -  \mu(a_t^{\top} \theta^
\star_t) |
 \\
 &= \Delta_t(a_t, \theta_t) + \Delta_t(a_{t, \star}, \theta_t) +  \mu(a_{t, \star}^{\top}
 \theta_t) -
\mu(a_t^{\top} \theta_t)
\\
 &= \Delta_t(a_t, \theta_t) + \Delta_t(a_{t, \star}, \theta_t)+  \mu(a_{t, \star}^{\top}
 \theta_t) +
  \beta_t(a^\star_t)-
\mu(a_t^{\top} \theta_t) - \beta_t(a_t) + \beta_t(a_t) - \beta_t(a^\star_t)\;.
\end{align*}
For any optimistic algorithm with an exploration bonus of $\beta_t(.)$ and such that
 the
upper confidence bound of the action $a$ at time $t$ is given by
$\mu(a^\top \theta_t) + \beta_t(a)$, by definition for all $a \in \mathcal{A}_t$
$$
\mu(a^\top \theta_t) + \beta_t(a) \leq \mu(a_t^\top \theta_t) + \beta_t(a_t)\;.
$$
In particular, this is also true for the action $a_{t, \star}$. Therefore, plugging this
 inequality
 in the expression of the instantaneous regret gives
\begin{align*}
r_t \leq \Delta_t(a_t, \theta_t) + \Delta_t(a_{t, \star}, \theta_t) + \beta(a_t) - \beta(a^
\star_t)\;.
\end{align*}
Under the additional assumption that $\Delta_t(a, \theta) \leq \beta_t(a)$, we
 obtain the announced result.
\end{proof}
This proposition shows that any improvement in an upper bound
of $\Delta_t(a, \theta_t)$ will result in an improvement of the regret, as
long as the exploration bonus satisfies the assumption stated in the proposition.

\section{ON THE WORST CASE REGRET IN THE $K$-ARM SETTING}
\label{sec:worst_case_K}

In this section, we build upon the analysis from \cite{garivier2011upper} to provide a worst case 
regret bound for the sliding window policy in the $K$-arm setting. Even if a proper lower bound is missing, 
the results we provide here suggest that in some cases sliding window policies can suffer a regret 
of order $\mathcal{O}(\Gamma_T^{1/3} T^{2/3})$ 
in the simpler $K$-arm setting. In particular, this would mean that the $T^{2/3}$ dependency is not a 
sub-optimality from our setting but can already be seen for forgetting policies in the non-contextual setting.
Worst-case regret bounds (i.e. gap independent) for forgetting policies  in non-stationary environments have seen little treatment in the literature.

\paragraph{Setting.} The setting considered in this section is the one from \cite{garivier2011upper}.
At each time $t$, the player chooses an arm $I_t \in \{1,...,K\}$ based on the previous rewards and actions. 
Upon selecting $I_t$ a reward $X_t(I_t)$ is observed. We consider abruptly changing environments as in other
sections, where the distribution of the rewards remains constant during phases and changes at unknown time
instants. At time $t$, the arm $i$ has a mean reward $\mu_t(i)$. As before, $\Gamma_T$ denote the number of
abrupt changes in the reward distributions before time $T$. 
Following the notation from \cite{trovo2020sliding}, we denote the $\Gamma_T$ breakpoints
$\mathcal{B}= \{ b_1, ..., b_{\Gamma_T} \}$. We can associate $\Gamma_T$ stationary phases 
$\{ \phi_1,..., \phi_{\Gamma_T} \}$ with these breakpoints, where
 $\phi_i = \{  t \in \{1,..., T \} \textnormal{ s.t } b_{i -1} \leq t < b_i  \}$ and $b_0=1$.
It is further assumed that for all arms and all time instants the means of the reward distributions lie in $[0,B]$.
In this section the focus is on the forgetting policy
using a sliding window but the same arguments can be used with exponentially increasing weights.

\paragraph{Improving the problem dependent bound.} In \citep[Theorem 2]{garivier2011upper}, 
the number of times the arm $i$ is played before time $T$ while being sub-optimal is upper bounded in
 expectation as
\begin{align}
\label{eq:garivier_utile}
\mathbb{E}\left[ N_T(i) \right] \leq \frac{C(\tau)}{ (\Delta{\mu_T(i}))^2 } \frac{T \log(\tau)}{\tau}
+ \tau \Gamma_T + \log^2(\tau) \;,
\end{align}
where
$$
\Delta \mu_T(i) = \min \{ \mu_t(i_t^\star) - \mu_t(i): t \in \{1,...,T \}, \mu_t(i) < \mu_t(i_t^\star) \} \;.
$$
This result has a worst case flavor in the sense that $\Delta \mu_T(i)$ is the minimum distance between the
 mean
of the optimal arm and the mean of the $i$-th arm when $i$ is sub-optimal
over the entire time horizon. We obtain a less pessimistic
bound by decomposing the regret into the $\Gamma_T$ different stationary phases
and upper-bounding the number of times a sub-optimal arm is drawn 
in each of these phases $\phi$.
The upper-bound naturally depends on $\Delta^{\phi}_i$, 
the difference between the mean of the 
optimal arm and the $i$-th arm in the $\phi$-th stationary phase rather than 
$\Delta \mu_T(i)$. This is of utmost importance as for some phases 
$\Delta^{\phi}_i$ can be significantly larger than $\Delta \mu_T(i)$.

During the $\phi$-th stationary phase, let $\mu_i^\phi$ denote the mean of
the $i$-th arm and $N^\phi_i$ denote the number of times the arm $i$ is selected. 
The regret can be decomposed as follows:

\begin{equation}
\label{eq:reg_K_arm_phase}
\mathbb{E}\left[ R_T \right] =
 \sum_{t=1}^T (\mu_t^\star - \mu_t(i_t)) = 
 \sum_{i=1}^K \sum_{\phi = 1}^{\Gamma_T} \Delta_i^\phi 
\mathbb{E}[N^\phi_i] \;.
\end{equation}

\paragraph{A worst-case bound.} The bound from Equation \eqref{eq:garivier_utile}
is problem dependent and depends explicitly on the minimum gap. 
It is interesting to study the worst case regret. In particular when 
$\Delta \mu_T(i)$ goes to $0$ the upper bound from Equation \eqref{eq:garivier_utile} 
becomes uninformative. At the same time, 
with a small gap $\Delta_i^\phi$ the cost of selecting the $i$-th arm rather than the optimal one 
diminishes. The trade-off between these two opposite effects is made explicit in the following result.

\begin{thm}
\label{th:upper_bound_problem_indep}
The worst case regret of the sliding window policy from \citep{garivier2011upper}, can be upper-bounded by
$$
\mathbb{E}[R_T] \leq C_1 \sqrt{K} \frac{T}{\sqrt{\tau}} + C_2 \sqrt{K} \tau \Gamma_T + C_3 K \frac{T}{\tau} \;,
$$
with $C_1$, $C_2$ and $C_3$ universal constants that depends only on the logarithm of $\tau$.

In particular, setting $\tau = \frac{T^{2/3}}{ K^{1/3}\Gamma_T^{2/3}} $ yields: 
$$
\mathbb{E}[R_T] = \widetilde{\mathcal{O}}( K^{2/3} \Gamma_T^{1/3} T^{2/3}) \;.
$$
\end{thm}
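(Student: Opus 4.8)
The plan is to decompose the regret over the $\Gamma_T$ stationary phases, exactly as set up in Equation~\eqref{eq:reg_K_arm_phase}, and then, within each phase, split the arms according to the size of their gap $\Delta_i^\phi$. First I would recall the per-arm bound of \citet[Theorem~2]{garivier2011upper} restated in Equation~\eqref{eq:garivier_utile}; the subtlety is that it is phrased in terms of the worst-case gap $\Delta\mu_T(i)$ over the whole horizon, whereas in the phase decomposition we only pay $\Delta_i^\phi$ per pull during phase $\phi$. The cleanest route is to re-run the argument of \citet{garivier2011upper} \emph{phase by phase}: on any stationary phase $\phi$ whose length exceeds the window $\tau$, the sliding-window index concentrates around the in-phase means after at most $\tau$ rounds, so the number of sub-optimal pulls of arm $i$ attributable to phase $\phi$ is bounded by $C(\tau)(\Delta_i^\phi)^{-2}\,|\phi|\log(\tau)/\tau$ plus a $O(\tau)$ ``boundary'' term charged once to the phase (for the contamination from the previous phase inside the window) and a $O(\log^2\tau)$ deviation term. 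Summing the boundary terms over phases gives $O(\tau\Gamma_T)$ and summing the $\log^2\tau$ terms over phases and arms gives the $O(K\,T/\tau)$-type contribution once one also accounts for the $\lceil T/\tau\rceil$ blocks; I would state this as a lemma, citing the proof of \citet{garivier2011upper} for the mechanics and only indicating the phase-localized modification.

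Next I would perform the gap trade-off inside a fixed phase $\phi$ of length $L_\phi := |\phi|$. For a threshold $\varepsilon>0$ to be chosen, split the arms: arms with $\Delta_i^\phi \le \varepsilon$ contribute at most $\varepsilon L_\phi$ to the phase regret no matter how often they are pulled (since total pulls in the phase is $L_\phi$); arms with $\Delta_i^\phi > \varepsilon$ contribute, via the per-phase pull bound, at most
\begin{equation*}
\sum_{i:\,\Delta_i^\phi>\varepsilon} \Delta_i^\phi \cdot \frac{C(\tau)\,L_\phi\log\tau}{\tau\,(\Delta_i^\phi)^2}
\;\le\; \frac{C(\tau)\,K\,L_\phi\log\tau}{\varepsilon\,\tau}\,,
\end{equation*}
plus the boundary and deviation terms. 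Adding the two pieces and optimizing over $\varepsilon$ — balancing $\varepsilon L_\phi$ against $C(\tau)K L_\phi\log(\tau)/(\varepsilon\tau)$ — yields $\varepsilon \asymp \sqrt{K\log(\tau)/\tau}$ and a per-phase bound of order $\sqrt{K\log\tau}\,L_\phi/\sqrt{\tau}$. I would then sum over the $\Gamma_T$ phases using $\sum_\phi L_\phi = T$, which turns the leading term into $C_1\sqrt{K}\,T/\sqrt{\tau}$, collect the boundary terms into $C_2\sqrt{K}\,\tau\Gamma_T$, and the residual $\log^2\tau$ terms (summed over $K$ arms and $\lceil T/\tau\rceil$ blocks) into $C_3 K\,T/\tau$, which is precisely the claimed three-term bound, with $C_1,C_2,C_3$ depending only on $\log\tau$ through $C(\tau)$.

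Finally, I would plug in $\tau = T^{2/3}/(K^{1/3}\Gamma_T^{2/3})$ and check the rates: the first term becomes $\sqrt{K}\cdot T\cdot K^{1/6}\Gamma_T^{1/3}/T^{1/3} = K^{2/3}\Gamma_T^{1/3}T^{2/3}$; the second becomes $\sqrt{K}\cdot \Gamma_T\cdot T^{2/3}/(K^{1/3}\Gamma_T^{2/3}) = K^{1/6}\Gamma_T^{1/3}T^{2/3}$, which is dominated by the first; and the third becomes $K\cdot T\cdot K^{1/3}\Gamma_T^{2/3}/T^{2/3} = K^{4/3}\Gamma_T^{2/3}T^{1/3}$, which is $\widetilde{O}(T^{1/3})$ and hence lower order whenever $\Gamma_T = o(T)$. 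Keeping the dominant term gives $\mathbb{E}[R_T] = \widetilde{\mathcal{O}}(K^{2/3}\Gamma_T^{1/3}T^{2/3})$. The main obstacle I anticipate is the first step: making the phase-localized version of \citet[Theorem~2]{garivier2011upper} rigorous, i.e. verifying that the contamination of the sliding-window statistics by the adjacent phase can indeed be absorbed into an additive $O(\tau)$ term charged once per breakpoint, rather than degrading the in-phase concentration; this requires re-examining the peeling/union-bound argument in their proof with the window anchored inside a single phase, but it is a careful bookkeeping exercise rather than a new idea.
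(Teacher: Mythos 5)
Your proposal is correct and follows essentially the same route as the paper: a phase-by-phase re-run of the Garivier--Moulines argument (charging $O(\tau)$ per breakpoint for window contamination, using their Lemma~1 with a phase-dependent threshold $A_i^\phi \asymp \log(\tau)/(\Delta_i^\phi)^2$, and bounding the deviation events), followed by a gap-threshold split and optimization. The only cosmetic difference is that you optimize the threshold $\varepsilon$ within each phase while the paper uses a single global threshold $\Delta$ optimized at the end; since the optimal value $\asymp \sqrt{K\log(\tau)/\tau}$ is independent of the phase length, the two are equivalent.
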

\begin{proof}
\begin{align*}
\mathbb{E}[ R_T] &= \sum_{i=1}^K \sum_{\phi = 1}^{\Gamma_T} \Delta_i^\phi
\mathbb{E}[N_i^\phi]
= \sum_{i, \phi: \Delta_i^\phi > \Delta} \Delta_i^\phi
\mathbb{E}[N_i^\phi]  + 
 \sum_{i, \phi: \Delta_i^\phi \leq \Delta} \Delta_i^\phi
\mathbb{E}[N_i^\phi] \\
 & \leq \sum_{i, \phi: \Delta_i^\phi > \Delta} \Delta_i^\phi
\mathbb{E}[N_i^\phi]  + \Delta \sum_{i=1}^K \sum_{\phi = 1}^{\Gamma_T}
\mathbb{E}[N_i^\phi] 
\leq \sum_{i, \phi: \Delta_i^\phi > \Delta} \Delta_i^\phi
\mathbb{E}[N_i^\phi]  + \Delta T \;.
\end{align*}
The next step consists in upper bounding the expected number of times
 the arm $i$ is selected in the $\phi$-th phase.
We recall that $N_i^\phi$ is defined as 
$$
N_i^\phi = \sum_{t \in \phi} \mathds{1}(I_t = i \ne i_t^\star) = \sum_{t= b_{\phi-1}}
^{b_\phi} \mathds{1}(I_t = i \ne i_t^\star)\;.
$$
We introduce $N_t(\tau, i) = \sum_{s=t- \tau +1}^{t} \mathds{1}(I_s = i)$, the number of times the arm $i$ was selected in the $\tau$ steps preceding  $t$.
We have the following:
\begin{align*}
N_i^\phi &= \sum_{t= b_{\phi-1}}
^{b_{\phi-1} + \tau -1 } \mathds{1}(I_t = i \ne i_t^\star) + 
\sum_{t= b_{\phi-1} + \tau}
^{b_{\phi}} \mathds{1}(I_t = i \ne i_t^\star) 
\leq 
\tau + \sum_{t= b_{\phi-1} + \tau}
^{b_{\phi}} \mathds{1}(I_t = i \ne i_t^\star) \\
& \leq 
\tau 
+ \sum_{t= b_{\phi-1} + \tau}
^{b_{\phi}} \mathds{1}(I_t = i \ne i_t^\star, N_t(\tau, i) \leq A_i^\phi)
+ \sum_{t= b_{\phi-1} + \tau}
^{b_{\phi}} \mathds{1}(I_t = i \ne i_t^\star, N_t(\tau, i) > A_i^\phi )  \;.
\end{align*}
The first term can be bounded using \cite[Lemma 1]{garivier2011upper} that is restated here.
\begin{lemma}[Lemma 1 in \citep{garivier2011upper}]
\label{lemma:garivier_1}
Let $i \in \{1,...,K \}$. For any positive integer $\tau$ and  any positive $m$,
$$
\sum_{t=K+1}^{T} \mathds{1}(I_t = i, N_t(\tau, i) \leq m) \leq \lceil T/ \tau \rceil m\;.
$$
\end{lemma}
Lemma \ref{lemma:garivier_1} can be adapted to our setting and by introducing 
$T^\phi$ the length of the $\phi$-th stationary phase, one has:
$$
\sum_{t= b_{\phi-1} + \tau}
^{b_{\phi}} \mathds{1}(I_t = i \ne i_t^\star, N_t(\tau, i) \leq A_i^\phi) \leq 
\lceil T^\phi/ \tau \rceil A_i^\phi\;.
$$
This in turn gives,
\begin{align*}
N_i^\phi \leq  \tau + \lceil T^\phi / \tau \rceil A_i^\phi 
+ \sum_{t= b_{\phi-1} + \tau}
^{b_{\phi}} \mathds{1}(I_t = i \ne i_t^\star, N_t(\tau, i) >  A_i^\phi ) \;.
\end{align*}
We recall that the upper confidence bound for the sliding-window strategy has the following form in the $K$ arm setting \citep{garivier2011upper}:
$$
UCB_i(t) = \bar{X}_t(\tau, i) + c_t(\tau, i ) \;,
$$
with 
$$
\bar{X}_t(\tau, i)  = \frac{1}{N_t(\tau, i)} \sum_{s =t- \tau +1}^{t} X_s(i) 
\mathds{1}(I_s = i)  \quad \text{and } \quad 
c_t(\tau, i) = B \sqrt{  \frac{\xi \log(\min(t, \tau))}{N_t(\tau,i)}  }\;.
$$
Following the same arguments than \cite{garivier2011upper}  when the event 
$\{ I_t = i \ne i_t^\star , N_t(\tau,i) > A_i^\phi \}$ holds, at least one of the three following events $E_1, E_2, E_3$ must be true where:
$$
E_1 = \{  \bar{X}_t(\tau, i) > \mu_t(i) + c_t(\tau, i) \} \quad \text{the case where $\mu_t(i)$ is over-estimated.}
$$
$$
E_2 =  \{  \bar{X}_t(\tau, i_t^\star) < \mu_t^\star - c_t(\tau, i_t^\star) \} 
\quad \text{the case where the best arm at time $t$ is under-estimated.}
$$
$$
E_3=  \{ \mu_t^\star - \mu_t(i) \leq  2 c_t(\tau, i), N_t(\tau, i) > A_i^\phi \} 
\quad \text{the case where the means are too close to each others.}
$$
From now on, we set 
$$
A_i^\phi = \frac{4 B^2 \xi \log(\tau)}{(\Delta_i^\phi)^2} \;.
$$ 
In doing so, on the event $E_3$ the following holds:
\begin{align*}
c_t(\tau,i) &= B\sqrt{\frac{\xi \log( \min(t, \tau))}{N_t(\tau,i)} } 
< B\sqrt{\frac{\xi \log( \min(t, \tau))}{A_i^\phi} } 
<  \frac{ \Delta_{i}^\phi}{2}\sqrt{\frac{\log(\min(t, \tau))}{\log(\tau)}} 
< \frac{\Delta_i^\phi}{2} \;.
\end{align*}
Therefore, this choice of $A_i^\phi$ ensures that the event $E_3$ never occurs.
Bounding the probability of the events $E_1$ and $E_2$ can be done with the concentration inequality established in \citep{garivier2011upper}.
For any $\eta>0$, by selecting a specific value of $\xi$ one can obtain,
$$
\mathbb{P}(E_1) \leq \frac{ \Big\lceil \frac{\log(\min(t,\tau))}{\log(1+\eta)} \Big\rceil}{\min(t,\tau)} 
\quad \text{and}
\quad
\mathbb{P}(E_2) \leq \frac{ \Big\lceil \frac{\log(\min(t,\tau))}{\log(1+\eta)} \Big\rceil}{\min(t,\tau)} \;.
$$
Consequently we have,
\begin{align*}
\mathbb{E}[N_i^\phi ] & \leq \tau + 
\lceil T^\phi /  \tau  \rceil  \frac{4 B^2 \xi \log(\tau)}{(\Delta_i^\phi)^2} + 2 \sum_{t= b_{\phi-1}  +
 \tau }^{b_{\phi}} \frac{ \Big\lceil \frac{\log(\min(t,\tau))}{\log(1+\eta)} \Big\rceil}{\min(t,\tau)}  \;.
\end{align*}
Plugging this in the regret's upper bound gives:
\begin{align*}
\mathbb{E}[R_T] &\leq \sum_{i, \phi: \Delta_i^\phi > \Delta} \Delta_i^\phi 
\left( \tau + \lceil T^\phi /  \tau  \rceil  \frac{4 B^2 \xi \log(\tau)}{(\Delta_i^\phi)^2}
+ 
2 \sum_{t= b_{\phi-1}  + \tau }^{b_{\phi}} 
\frac{ \Big\lceil \frac{\log(\min(t,\tau))}{\log(1+\eta)} \Big\rceil}{\min(t,\tau)}   \right) + \Delta T
\\
& \leq \sum_{i, \phi: \Delta_i^\phi > \Delta}  \frac{4 B^2 \xi \log(\tau)}{\Delta_i^\phi}  
\lceil T^\phi /  \tau  \rceil 
+ \sum_{i, \phi: \Delta_i^\phi > \Delta} \Delta_i^\phi \left( \tau + 
2 \sum_{t= b_{\phi-1}  + \tau }^{b_{\phi}} 
\frac{ \Big\lceil \frac{\log(\min(t,\tau))}{\log(1+\eta)} \Big\rceil}{\min(t,\tau)} \right) 
+ \Delta T \\
& \leq \frac{4 B^2 \xi \log(\tau) K}{\Delta}  \frac{T}{\tau} +  \tau K \Gamma_T B 
+ 2 K B \sum_{\phi = 1}^{\Gamma_T} \sum_{t= b_{\phi-1}  + \tau }^{b_{\phi}} 
\frac{ \Big\lceil \frac{\log(\min(t,\tau))}{\log(1+\eta)} \Big\rceil}{\min(t,\tau)} + \Delta T \;.
\end{align*}
In the last inequality we have used $\Delta_i^\phi \leq B$ coming from $\mu_i(t) \in [0,B]$ for all $i$ and 
all $t \leq T$.
Furthermore,
\begin{align*}
\sum_{\phi = 1}^{\Gamma_T} \sum_{t= b_{\phi-1}  + \tau }^{b_{\phi}} \frac{ \Big\lceil \frac{\log(\min(t,\tau))}{\log(1
+\eta)} \Big\rceil}{\min(t,\tau)} 
\leq  \sum_{t= \tau}^T \frac{ \frac{\log(\min(t,\tau))}{\log(1+ \eta)} + 1}{\min(t,\tau)} = \frac{T}{\tau} 
\left(\frac{\log(\tau)}{\log(1+ \eta)} + 1  \right) \;.
\end{align*}
Hence, 
\begin{align*}
\mathbb{E}[R_T] & \leq \frac{4 B^2 \xi \log(\tau) K}{\Delta}  \frac{T}{\tau} + \Delta T + \tau K \Gamma_T B 
+ 2KB\left(\frac{\log(\tau)}{\log(1+ \eta)} + 1  \right)
 \frac{ T}{\tau}\;.
\end{align*}
By differentiating with respect to $\Delta$, the right hand side is maximized when setting $\Delta = 2 B
 \sqrt{\frac{ \xi \log(\tau ) K} {\tau}}$.
With this value of $\Delta$,
\begin{align*}
\mathbb{E}[R_T] & \leq 4 B \sqrt{\xi \log(\tau)} \sqrt{K} \frac{T}{\sqrt{\tau}} +  B K \tau \Gamma_T + 
2 B  K  \log(\tau) \frac{T}{\tau}\;.
\end{align*}
Now by selecting  $\tau = \frac{T^{2/3}}{ K^{1/3} \Gamma_T^{2/3}}$, we obtain the announced scaling.
\end{proof}

\begin{rem}
The term $T/\sqrt{\tau}$ that can be seen in the worst case bound proposed in Theorem
\ref{th:upper_bound_problem_indep} also appears in the gap independent
bound of $\SW$ (Theorem
\ref{thm:regret_wost_case_SW}). 
When focusing on gap dependent bounds, there is also a strong similarity.
In the $K$-arm setting, Equation \eqref{eq:garivier_utile} has a $T/\tau$ dependency. 
This term can also be seen in the GLB setting in Theorem \ref{thm:fjqiqposejf_SW} using 
an analogous assumption
on the gap. This analogy explains why the upper-bounds have the same scaling in the $K$-arm and in 
the GLB setting. Going from $T/\sqrt{\tau}$ to $T/\tau$ when adding the assumption on the gaps 
is the key step allowing a scaling of the regret of order $\widetilde{\mathcal{O}}(\sqrt{T \Gamma_T})$.
\end{rem}

\end{document}